%% file: main_archive.tex
\documentclass[11pt]{article}
\usepackage[left=1in, right=1in, top=1in]{geometry}
\usepackage{xargs}
\usepackage[numbers]{natbib}
\usepackage{fancyhdr}
\usepackage{setspace}
\usepackage{lastpage}
\usepackage{upgreek}
\usepackage{amsmath,mathtools,amsthm,amsfonts,amssymb}	
\usepackage{amssymb,dsfont,bbm}	
\usepackage{xcolor}  
\usepackage{bm}
\usepackage{enumitem}
\usepackage{mathrsfs}

\usepackage{algorithm}
\usepackage{algorithmic}

\setcitestyle{number}

\usepackage[colorlinks=true,breaklinks=true,bookmarks=true,urlcolor=blue,citecolor=blue,linkcolor=blue,bookmarksopen=false,draft=false]{hyperref}

\usepackage{aliascnt}
\usepackage{cleveref}

\usepackage{microtype}
\usepackage{graphicx}
\usepackage{subcaption}
\usepackage{booktabs} 
\usepackage{times}

\usepackage{mathtools}
\usepackage{xargs}
\usepackage{bbm}
\usepackage{enumitem}

\usepackage{hyperref}
\mathtoolsset{showonlyrefs}

\usepackage[textsize=tiny,textwidth=2.0cm]{todonotes}
\newtheorem{assumA}{\textbf{A}\hspace{-1pt}}
\Crefname{assumA}{\textbf{A}\hspace{-1pt}}{\textbf{A}\hspace{-1pt}}
\crefname{assumA}{\textbf{A}}{\textbf{A}}

\newtheorem{assumM}{\textbf{M}\hspace{-1pt}}
\Crefname{assumM}{\textbf{M}\hspace{-1pt}}{\textbf{M}\hspace{-1pt}}
\crefname{assumM}{\textbf{M}}{\textbf{M}}

\newtheorem{theorem}{Theorem}
\crefname{theorem}{theorem}{Theorems}
\Crefname{Theorem}{Theorem}{Theorems}

\newtheorem{lemma}{Lemma}
\crefname{lemma}{lemma}{lemmas}
\Crefname{Lemma}{Lemma}{Lemmas}

\crefname{corollary}{corollary}{corollaries}
\Crefname{Corollary}{Corollary}{Corollaries}

\newaliascnt{proposition}{theorem}
\newtheorem{proposition}[proposition]{Proposition}
\aliascntresetthe{proposition}
\crefname{proposition}{proposition}{propositions}
\Crefname{Proposition}{Proposition}{Propositions}

\newaliascnt{definition}{theorem}

\aliascntresetthe{definition}
\crefname{definition}{definition}{definitions}
\Crefname{Definition}{Definition}{Definitions}

\newaliascnt{definitionProposition}{theorem}

\aliascntresetthe{definitionProposition}
\crefname{Proposition and Definition}{Proposition and Definition}{Proposition and Definition}
\Crefname{Proposition and Definition}{Proposition and Definition}{Proposition and Definition}

\newtheorem{remark}{Remark}
\crefname{remark}{remark}{remarks}
\Crefname{Remark}{Remark}{Remarks}

\crefname{example}{example}{examples}
\Crefname{Example}{Example}{Examples}

\crefname{figure}{figure}{figures}
\Crefname{Figure}{Figure}{Figures}

\input{def.tex}

\usepackage{times}

\begin{document}
\title{Gaussian Approximation for Asynchronous Q-learning}

\author{A. Rubtsov~\footnote{HSE University, Moscow, Russia,  \texttt{asrubtsov@hse.ru}.},  S. Samsonov~\footnote{HSE University, Moscow, Russia,  \texttt{svsamsonov@hse.ru}.},  V. Ulyanov~\footnote{HSE University \& Lomonosov MSU,  Moscow, Russia, \texttt{vulyanov@hse.ru}}, A. Naumov~\footnote{HSE University \& Steklov Mathematical Institute of Russian Academy of Sciences, Moscow, Russia,  \texttt{anaumov@hse.ru}.}}

\maketitle

\begin{abstract}%
In this paper, we derive rates of convergence in the high-dimensional central limit theorem for Polyak–Ruppert averaged iterates generated by the asynchronous Q-learning algorithm with a polynomial stepsize $k^{-\omega},\, \omega \in (1/2, 1]$. Assuming that the sequence of state–action–next-state triples $(s_k, a_k, s_{k+1})_{k \geq 0}$ forms a uniformly geometrically ergodic Markov chain, we establish a rate of order up to $n^{-1/6} \log^{4} (nS A)$ over the class of hyper-rectangles, where $n$ is the number of samples used by the algorithm and $S$ and $A$ denote the numbers of states and actions, respectively. To obtain this result, we prove a high-dimensional central limit theorem for sums of martingale differences, which may be of independent interest. Finally, we present bounds for high-order moments for the algorithm’s last iterate.
\end{abstract}

\section{Introduction}
\input{introduction}

\paragraph{Notations}
We collect here the notation used throughout the paper. For a Markov kernel $\MKQ$ on $(\Xset,\Xsigma)$, and a measurable function $f: \Xset \to \rset$, we set $\MKQ f(x) = \int_{\Xset} f(y) \MKQ(x,\rmd y)$. 
Define also total variation distance $\tvdist(\mu, \nu)$ for probability measures $\mu, \nu$: $
    \tvdist(\mu, \nu) = \tfrac{1}{2}\sup |\mu(f) - \nu(f)|$, where supremum is taken over all functions with $\|f\|_{\infty} \le 1$. For a matrix $\Sigma = \Sigma^{\top} > 0, \Sigma \in \rset^{d \times d}$, we denote by $\lambda_{\max}(\Sigma)$ and
$\lambda_{\min}(\Sigma)$ its largest and smallest eigenvalues, respectively.
We write
$\overline \sigma^2(\Sigma) \coloneqq \max_{j} \Sigma_{jj}$ and
$\underline{\sigma}^2(\Sigma) \coloneqq \min_{j} \Sigma_{jj}$
for the maximal and minimal diagonal entries.
The notation $\|\cdot\|$ and $\|\cdot\|_{\infty}$ refers to the spectral norm and the $\ell_\infty$ norm,
respectively.
We also define the Chebyshev matrix norm by $\|\Sigma\|_{\operatorname{ch}} \coloneqq \max_{i,j} |\Sigma_{ij}|$ and the entrywise \(\ell_1\)
 matrix norm by \(\|\Sigma\|_e = \sum_{i,j}|\Sigma_{ij}|\). We use the notation $\lesssim$
to denote inequalities that hold up to an absolute, problem-independent constant, while \(\lesssim_{\log}\)
additionally suppresses polylogarithmic factors. All vector inequalities are understood componentwise.

\section{Main results}
\label{sec:main_res}

In this section, we give the main results on the non-asymptotic central limit theorem for $Q$-learning iterates. The results are collected in \Cref{section:gar}. At first, we provide some auxiliary statements on non-asymptotic CLT for vector-valued martingales in \Cref{sec:vector-martingales-bounds}.  These results are of independent interest.

\subsection{Gaussian approximation for vector-valued martingales}
\label{sec:vector-martingales-bounds}

This subsection extends the results of \cite{KOJEVNIKOV2022109448} for vector-valued martingales. Let $\{X_k\}_{k\ge 1}$ be a $d$-dimensional martingale difference
sequence with respect to a filtration $\{\mathcal{F}_k\}_{k\ge 0}$, i.e.,
$X_k$ is $\mathcal{F}_k$-adapted, $
\PE[\|X_k\|]<\infty$, where $\|\cdot\|$
is the standard Euclidean norm in $\rset^d$, and
$\PE[X_k \mid \mathcal{F}_{k-1}] = 0$, for any $k \geq 1$. We begin by imposing moment conditions on the martingale increments.
\begin{assumM}
\label{assum:assumptions}
For all $k \ge 1$,  
\(
V_k := \PE\!\left[X_k X_k^\top \mid \mathcal{F}_{k-1}\right]
\)
are well defined and \(
\PE\left[\|X_k\|_\infty^{3}\right] < \infty\).
\end{assumM}
Define
\[S_n= X_1+\ldots+X_n\eqsp, \quad \Sigma_n = \PE[V_1]+ \ldots + \PE[V_n]\eqsp.\]
\begin{assumM}
As a step toward the general CLT, we impose the following assumption on the predictable quadratic variation.
\label{assum:const_variation}
Assume that martingale has almost surely deterministic predictable quadratic variation, that is
\[
\sum_{k=1}^n V_k = \Sigma_n \eqsp \text{ a.s. }
\]
\end{assumM}
We shall measure the quality of approximation in terms of
\begin{equation}
\label{eq:berry-esseen}
d_{K}(X,Y) = \sup_{ h = 1_A, A \in \mathcal{R}} | \PE[h(X)] - \PE[h(Y)]|,
\end{equation}
where $\mathcal R = \{  \prod_{j=1}^d(a_j, b_j], -\infty \le a_j \le b_j \le +\infty\}$ is the class of hyper-rectangulars in $\rset^d$. 
\begin{theorem}
\label{general CLT for martingales with fixed qc}
Under \Cref{assum:assumptions} and \Cref{assum:const_variation}, for any \(d\)-dimensional symmetric positive-definite matrix \(\Sigma\succ 0 \), 
\begin{equation}
d_{K}(S,T)\lesssim
\frac{\log^{5/4}(d)}{\underline{\sigma}^{1/2}(\Sigma_n)}
\Bigg(\overline{\sigma}(\Sigma)+\sum_{k=1}^n\PE\frac{\|X_k\|_\infty^3 }{\lambda_{\min}({P}_k +\Sigma)}\Bigg)^{1/2}\eqsp,
\end{equation}
where \( P_k = \sum_{i=k}^n V_i\eqsp.\)
\end{theorem}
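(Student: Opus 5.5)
We read $S=S_n$ and $T\sim\mathcal N(0,\Sigma_n)$; the auxiliary matrix $\Sigma\succ 0$ is a free regularisation parameter. The plan is a Lindeberg-type telescoping argument in the style of \cite{KOJEVNIKOV2022109448}, adapted to a smooth approximation of rectangle indicators. Under \Cref{assum:const_variation}, $P_k=\sum_{i=k}^n V_i=\Sigma_n-\sum_{i<k}V_i$ is $\mathcal F_{k-1}$-measurable. Let $Z\sim\mathcal N(0,I_d)$ be independent of everything and set
\[
W_k=\sum_{i<k}X_i+(P_k+\Sigma)^{1/2}Z .
\]
Then $W_1\stackrel{d}{=}(\Sigma_n+\Sigma)^{1/2}Z$ and $W_{n+1}=S_n+\Sigma^{1/2}Z$. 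Conditionally on $\mathcal F_{k-1}$, write $G_k=(P_{k+1}+\Sigma)^{1/2}Z$. Then $(P_k+\Sigma)^{1/2}Z\stackrel{d}{=}G_k+V_k^{1/2}Z'$ with $Z'$ an independent standard Gaussian, because $P_k=P_{k+1}+V_k$.

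\textbf{Step 1 (smoothing).} For a rectangle $A$, replace $1_A$ by a smooth max approximation $h_{\beta,\varepsilon}$ (log-sum-exp with parameter $\beta\sim\log d/\varepsilon$, composed with a smooth cutoff). Its third derivatives satisfy $\sum_{ijl}|\partial_{ijl}h|\lesssim \beta\varepsilon^{-2}+\varepsilon^{-3}$, up to $\log d$ factors. Nazarov's anti-concentration inequality for Gaussian vectors controls the smoothing error by $\varepsilon\sqrt{\log d}/\underline\sigma(\Sigma_n)$. Comparing $S_n$ with $S_n+\Sigma^{1/2}Z$ costs at most a term of order $\overline\sigma(\Sigma)\sqrt{\log d}/\varepsilon$, obtained by conditioning and a Lipschitz/anti-concentration argument. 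This is where $\overline\sigma(\Sigma)$ enters the bound.

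\textbf{Step 2 (telescoping).} Write
\[
\PE h(W_{n+1})-\PE h(W_1)=\sum_k\bigl[\PE h(W_{k+1})-\PE h(W_k)\bigr].
\]
Define $f_k(x)=\PE_Z h(x+(P_{k+1}+\Sigma)^{1/2}Z)$, which is an $\mathcal F_{k-1}$-measurable random function. Each increment then becomes
\[
\PE\bigl[f_k(S_{k-1}+X_k)-f_k(S_{k-1}+V_k^{1/2}Z')\bigr].
\]
Taylor-expand both terms to second order around $S_{k-1}$. The first-order terms vanish by the martingale property. The second-order terms match because $\PE[X_kX_k^\top\mid\mathcal F_{k-1}]=V_k$. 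The key improvement is in bounding the third-order remainder. Instead of using derivatives of $h$ itself, use the fact that $f_k$ is a Gaussian convolution with covariance $P_{k+1}+\Sigma$. Moving one derivative onto the Gaussian density (integration by parts) gives
\[
\sum_{ijl}|\partial_{ijl}f_k|\lesssim \frac{\sqrt{\log d}}{\lambda_{\min}^{1/2}(P_{k+1}+\Sigma)}\cdot\frac{\beta}{\varepsilon},
\]
or a similar bound with the Gaussian absorbing more derivatives. The remainder is then of order $\PE\|X_k\|_\infty^3/\lambda_{\min}(P_k+\Sigma)$ times powers of $\log d$ and $\varepsilon^{-1}$. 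The Gaussian counterpart $\PE\|V_k^{1/2}Z'\|_\infty^3$ is bounded by the same quantity up to $\log^{3/2}d$, via $\|V_k\|_{\mathrm{ch}}\le \PE[\|X_k\|_\infty^2\mid\mathcal F_{k-1}]$ and Jensen. We also need to replace $P_{k+1}$ by $P_k$; since $P_k\succeq P_{k+1}$, absorb the difference with a stopping/truncation argument on $\|V_k\|$, or simply define the smoothing with $P_k$ and recentre.

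\textbf{Step 3 (optimisation).} Collect the three contributions:
\[
\frac{\varepsilon\sqrt{\log d}}{\underline\sigma(\Sigma_n)}
+\frac{\log^{c}d}{\varepsilon}\Bigl(\overline\sigma(\Sigma)+\sum_k\PE\frac{\|X_k\|_\infty^3}{\lambda_{\min}(P_k+\Sigma)}\Bigr).
\]
Choosing $\varepsilon$ to balance the two terms gives the square-root form with the $\log^{5/4}d/\underline\sigma^{1/2}(\Sigma_n)$ prefactor.

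\textbf{Main obstacle.} The delicate step is the third-derivative estimate in Step 2. It must cost only a single factor $1/\lambda_{\min}(P_k+\Sigma)$ together with one $\varepsilon^{-1}$, with only logarithmic dependence on $d$. This requires carefully splitting derivatives between $h$ (whose $\ell_1$ derivative sums are $\log d$-controlled) and the Gaussian kernel, using Hermite-type bounds $\PE\|(P+\Sigma)^{-1/2}Z\|\cdot\|\cdot\|_\infty$ with $\ell_\infty/\ell_1$ duality. The randomness of $P_k$ is harmless, since it is predictable and all conditioning is done on $\mathcal F_{k-1}$.
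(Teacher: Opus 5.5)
Your skeleton matches the paper's. Your $W_k=S_{k-1}+(P_k+\Sigma)^{1/2}Z$ is exactly the paper's $S_{k-1}+\bar T_k$, and you use predictability of $P_k$ under \Cref{assum:const_variation} the same way. The endpoint comparisons cost $\varepsilon^{-1}\overline{\sigma}(\Sigma)$ up to logarithms, and the final balancing in $\varepsilon$ is the same.

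However, the theorem rests entirely on the per-step estimate
\[
\bigl|\PE[h(W_{k+1})-h(W_k)]\bigr|\lesssim \varepsilon^{-1}\log^{3/2}(d)\,\PE\frac{\|X_k\|_\infty^3}{\lambda_{\min}(P_k+\Sigma)} ,
\]
and your Step~2 does not prove it. With log-sum-exp smoothing and $G=P_{k+1}+\Sigma$, moving one derivative onto the Gaussian gives $D^3f_k(x)[u,u,u]=\PE\bigl[D^2h(x+G^{1/2}Z)[u,u]\,\langle G^{-1/2}u,Z\rangle\bigr]$. The natural bound on the last factor is $\PE|\langle G^{-1/2}u,Z\rangle|\le\|G^{-1/2}u\|_2$. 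For $\|u\|_\infty\le 1$ this can be as large as $\sqrt{d/\lambda_{\min}(G)}$ (take $G=\lambda I$, $u=(1,\dots,1)$). For the full sum $\sum_{ijl}|\partial_{ijl}f_k|$ it is of order $d/\lambda_{\min}^{1/2}(G)$, and putting more derivatives on the kernel only raises the power of $d$. So the argument you indicate does not justify your displayed $\sqrt{\log d}$ bound.

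Even granting that bound, $\sqrt{\log d}\,\beta/(\varepsilon\lambda_{\min}^{1/2})\asymp\varepsilon^{-2}\lambda_{\min}^{-1/2}\log^{3/2}d$ has the wrong scaling. Balancing it against $\varepsilon/\underline{\sigma}(\Sigma_n)$ gives a cube-root bound with $\lambda_{\min}^{-1/2}$, not the square-root bound with $\lambda_{\min}^{-1}$. Step~3 silently assumes a single factor $\varepsilon^{-1}$.

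The missing idea is the choice of smoothing. Use $\varphi_r(x,\varepsilon)=\Pb(x+\varepsilon\eta\in A_r)$, which is closed under Gaussian convolution and has dimension-free bounds $\sum|\partial^s\varphi_r(\cdot,\varepsilon)|\lesssim\varepsilon^{-s}\log^{s/2}d$; these come from the product structure of rectangles under isotropic noise. Then $f_k=\varphi_r(\cdot,(G+\varepsilon^2I)^{1/2})$. Splitting off the isotropic part $(\lambda_{\min}(G)+\varepsilon^2)I$ and averaging over the rest, as in \Cref{lem:comparison_third_derivatives}, gives $\sup_x\sum_{ijl}|\partial_{ijl}f_k(x)|\lesssim(\varepsilon^2+\lambda_{\min}(G))^{-3/2}\log^{3/2}d\le\varepsilon^{-1}\lambda_{\min}^{-1}(G)\log^{3/2}d$.

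Even with that repair, a direct second-order Taylor swap only uses the smoothing $G=P_{k+1}+\Sigma$. It therefore proves the bound with $\lambda_{\min}(P_{k+1}+\Sigma)$ in place of $\lambda_{\min}(P_k+\Sigma)$. This is genuinely weaker because $P_k=P_{k+1}+V_k$: for $k=n$ you get $\lambda_{\min}(\Sigma)$ instead of $\lambda_{\min}(V_n+\Sigma)$. Neither stopping on $\|V_k\|$ nor ``recentring'' produces smoothing from $V_k$, because $X_k$ itself is not smooth.

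The paper obtains $P_k$ by running Stein's method inside each Lindeberg step. There $f_k$ solves the Stein equation with target covariance $\bar P_k=P_k+\Sigma$, and the Gaussian part $\bar P_{k+1}^{1/2}Z$ cancels exactly by Stein's identity. The Mehler representation
\[
\nabla^2 f_k(w)=-\tfrac12\int_0^1\nabla^2\varphi_r\bigl(S_{k-1}+\sqrt t\,w,\;((1-t)\bar P_k+\varepsilon^2 I)^{1/2}\bigr)\,dt
\]
then shows that $w\mapsto\nabla^2 f_k(w)$ is Lipschitz from $\|\cdot\|_\infty$ to $\|\cdot\|_e$. Its constant is $\log^{3/2}(d)\int_0^1((1-t)\lambda_{\min}(\bar P_k)+\varepsilon^2)^{-3/2}dt\lesssim\varepsilon^{-1}\lambda_{\min}^{-1}(\bar P_k)\log^{3/2}(d)$ (\Cref{lem:elemetwise_regularity}). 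A smart path $\sqrt s\,X_k+\sqrt{1-s}\,V_k^{1/2}Z'$ inside each step would give the same integral, since $P_{k+1}+(1-s)V_k+\Sigma\succeq(1-s)\bar P_k$. Some device of this kind is needed to reach the stated bound.
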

The proof of \Cref{general CLT for martingales with fixed qc} is given in
Appendix~\Cref{section:gar}.
In \cite{KOJEVNIKOV2022109448}, a related result is obtained; however, their analysis is based on the quantity $\lambda = \min_{i \le n}\lambda_{\min}\!\bigl(V_i\bigr)$, which makes  the bound more restrictive. Moreover, the convergence rate $O\big( n^{-1/4}\log^{5/4}(d)\big)$ is derived
under the additional assumption that the conditional covariance matrices $\PE[X_i X_i^\top \mid \mathcal{F}_{i-1}]$ are almost surely deterministic for all $i$. This assumption substantially limits the scope of the result and makes its extension to the general case of random predictable covariances nontrivial. \Cref{lem:clt_general} in the appendix extends \Cref{general CLT for martingales with fixed qc}
to the case of martingales with random predictable variation.

Finally, related results for the Wasserstein distance are established in
\cite{wu2025uncertainty}[Theorem~3.3] and \cite{srikant2025ratesconvergencecentrallimit}[Theorem~1]. In both cases, the resulting convergence rates depend on some 
polynomials in $d$, which might lead to worse bounds in high-dimensional settings.

\subsection{Specification to Q-learning}
We begin this section by specifying the set of assumptions that will be used to derive moment bounds and a non-asymptotic CLT for Q-learning iterates. Consider an infinite-horizon discounted Markov decision process (DMDP) defined by the tuple 
\(\mathcal{M} = (\cS, \cA, \MKQ, r, \gamma)\), where \(\cS\) denotes the state space, \(\cA\) denotes the action space, 
\(\MKQ : \cS \times \cA \to \Delta(\cS)\) represents the transition probability kernel, and \(\gamma \in (0,1)\) indicates the discount factor. 
Throughout, we impose the following regularity condition.

\begin{assumA}[DMDP regularity]
\label{assum:regularity}
Assume that both \(\cS\) and \(\cA\) are finite sets with cardinalities \(S\) and \(A\), respectively. 
Moreover, the immediate reward function \(r(s,a)\) is deterministic and bounded within the interval \([0,1]\).
\end{assumA}
A stationary Markov \emph{policy} is a Markov kernel $\pi:\cS\to\Delta(\cA)$,
\[
s \to \pi(\,\cdot\mid s)=\big(\pi(a\mid s)\big)_{a\in\mathcal{A}},
\quad 
\pi(a\mid s)\ge 0, \quad\sum_{a\in \mathcal{A}}\pi(a\mid s)=1.
\]
Given an initial state $s_0\sim \nu_0$, a policy $\pi$, and the transition kernel $\MKQ$,
the process evolves as $a_t\sim \pi(\cdot\mid s_t)$ and $s_{t+1}\sim \MKQ(\cdot\mid s_t,a_t)$.
The (unregularized) discounted return is
\[
G^\pi = \sum_{t=0}^{\infty}\gamma^t\, r(s_t,a_t)\eqsp,
\qquad 
V^\pi(s) = \E[G^\pi\mid s_0=s]\eqsp.
\]
The performance measure of the agent in classic RL is $V^\pi(s)$. We also define action-value function
$$
Q^\pi(s,a) =  \E[G^\pi\mid s_0=s, a_0 = a]\eqsp .
$$
The goal of the agent is to learn the optimal action-value function 
\(
Q^\star = \max_{\pi} Q^\pi,
\)
in the setting where the transition kernel \(\MKQ\) is unknown. The Q-learning algorithm iteratively updates an approximation of the optimal action–value function 
based on observed data. Assume that the agent only has access to a single  trajectory \(\{s_t,a_t,r_t\}_{t=0}^{T}\) generated under a fixed \emph{behavior policy} \(\pi_b\). 
Formally, the data are generated according to
\[
a_t \sim \pi_b(\cdot \mid s_t), 
\quad r_t = r(s_t,a_t), 
\quad s_{t+1} \sim \MKQ(\cdot \mid s_t,a_t).
\]
We further identify $\MKQ$ with a stochastic matrix of shape $\rset^{SA \times S}$. For any stationary policy \(\pi\), we define the corresponding transition matrix by
\begin{equation}
    \MKQ^{\pi}\bigl((s,a),(s',a')\bigr)
    = \MKQ(s' \mid s,a)\,\pi(a' \mid s') \eqsp.
\end{equation}
The behavior policy induces two important sequences of observations forming Markov chains. The first one is \( z_k = (s_k, a_k) \).  
For this chain, we denote by $\mu$ its stationary distribution, assuming that it exists and is unique. In addition, we will need the Markov chain corresponding to the triplets \( \bar z_k = (s_k, a_k, s_{k+1}) \) on the space  
\(\Xset = \mathcal{S} \times \mathcal{A} \times \mathcal{S} \), where the dynamics of $(s_k,a_k)$ follows $\MKQ^{\pi}$. The associated Markov kernel writes as 
\[
\bar{\MKQ}((s_2,a_2,s_2') | (s_1,a_1,s_1')) = \mathbf{1}\{s_2 = s_1'\} \pi(a_2 | s_2) \MKQ(s_2' \mid s_2, a_2) \eqsp. 
\]
It is easy to verify that the stationary distribution \( \bar{\mu} \) of $\bar{\MKQ}$ chain can be expressed as
\(\bar{\mu}(s,a,s') = \mu(s,a)\MKQ(s'|s, a)\, .
\)
To quantify how actively the behavior policy $\pi_b$ explores the state--action
space, define the minimal visitation probability $\mu_{\min} = \min_{(s,a)} \mu(s,a)$. Sufficient exploration of the transition dynamics, and in particular the
condition $\mumin>0$, is ensured by the following assumption.
\begin{assumA}
\label{assum:UGE}
The Markov kernel \(\bar{\mathrm{\MKQ}}\) admits an absolute spectral gap \(\lambda>0\) and is uniformly 
geometrically ergodic, that is, there exists \(\taumix \in \nset\) such that for all \(t \in \nset\),
\begin{equation}
 d_{\operatorname{tv}}\big( \bar{\MKQ}^t(\cdot | s, a, s'), \bar{\mu} \bigr)
   \;\leq\; \left(1/4\right)^{\lfloor t / \taumix \rfloor}\, .
\end{equation}
\end{assumA}
Asynchronous Q-learning is formalized as follows. The agent initializes \(Q_0\) arbitrarily under the condition 
\(\|Q_0\|\leq  (1-\gamma)^{-1}\). At time \(t\), the agent observes the transition tuple 
\((s_t, a_t, r_t, s_{t+1})\) and updates \(Q_t\) according to
\begin{equation}\label{eq:q_learning_updates}
    Q_{t+1}(s_t,a_t) 
    = Q_t(s_t,a_t) 
      + \alpha_t \Bigl(r_t + \gamma \max_{a\in\cA} Q_t(s_{t+1},a) - Q_t(s_t,a_t)\Bigr),
\end{equation}
while all other entries remain unchanged. The complete algorithm is described in Algorithm \ref{Q learn alg}.

\begin{algorithm}
    \centering
    \caption{Asynchronous Q-learning}\label{Q learn alg}
    \begin{algorithmic}
        \STATE { \textbf{Input parameters:} learning rates $\{\alpha_t\}$, number of iterations $T$.}
        \STATE \textbf{Initialization:} $Q_0$ satisfies $\|Q_0\|_\infty \le (1 - \gamma)^{-1}$.
        \FOR{$t = 0,1,\ldots,T-1$}
          \STATE Draw action $a_{t} \sim \pi_b(s_{t})$
    \STATE Draw next state $s_{t+1} \sim \MKQ(\cdot \mid s_{t}, a_{t})$
    \STATE Update $Q_{t+1}$ according to \eqref{eq:q_learning_updates}.
        \ENDFOR
    \end{algorithmic}
\end{algorithm}

In this paper, we consider the polynomial decaying step sizes $\{\alpha_k\}$.
\begin{assumA}
\label{assum:steps}
Step sizes $\{\alpha_{k}\}_{k \in \nset}$ have a form $\alpha_k = \frac{c_0}{(k+k_0)^{\omega}}$, where $\omega\in (\frac{1}{2}, 1)$ and the initialization parameter $k_0$ satisfies
\begin{equation}
\label{eq:k_0_lower_bound}
k_0^{1-\omega} \asymp \frac{1}{(1-\gamma)\mumin c_0}\eqsp.
\end{equation}
\end{assumA}
We provide technical results on the properties of step-size sequences of the form given in  \Cref{assum:steps} in Appendix, see \Cref{appendix:step_size}.

\subsection{Gaussian approximation for $Q$-learning}
\label{section:gar}
In this section, we analyze the rate of Gaussian approximation for the Polyak--Ruppert averaged iterates of the Q-learning algorithm. Namely, we consider
\begin{equation}
\label{eq:PR_Q_larning}
\bar\Delta_n \coloneqq \frac{1}{n} \sum_{t = 1}^n \{Q_t - Q^\star\}\eqsp.
\end{equation}
We are interested to quantify the rate of convergence w.r.t. the available sample size $n$ and other problem parameters, such as planning horizon \((1-\gamma)^{-1}\), state-action space dimension \(d = SA\) and mixing time \(\taumix\). 
\par 
In order to obtain CLT for $\sqrt{n}\bar\Delta_n$ in \eqref{eq:PR_Q_larning}, we provide a matrix representation of the updates \eqref{eq:q_learning_updates}. We define the operators 
\(\Lam : \cS \times \cA \to \mathbb{R}^{SA \times SA}\) and 
\(\bMKQ : \cS \times \cA\times \cS \to \mathbb{R}^{SA \times S}\) by
\begin{equation}
    \Lam(s,a) = \e_{s,a} (\e_{s,a})^\top, 
    \qquad 
    \bMKQ(s,a,s') = \e_{s,a} (\e_{s'})^\top,
\end{equation}
where \( \e_{s,a} \) and \( \e_s \) denote the canonical basis vectors associated with the state–action pair \((s,a)\) and the state \(s\), respectively.
For brevity, we introduce the random matrices 
\(\Lam_t = \Lam(s_t,a_t)\) and \(\bMKQ_{t} = \bMKQ(s_t,a_t,s_{t+1})\), which are functions of the underlying Markov chain $\bar z_t = (s_t,a_t,s_{t+1})$. With this notation, the update rule takes form
\begin{equation}\label{eq:update_rule}
    Q_{t+1} 
    = Q_t + \alpha_t \bigl( \Lam_t r + \gamma \bMKQ_{t} V_t - \Lam_t Q_t \bigr),
\end{equation}
where \(V_t(s) = \max_{a \in \cA} Q_t(s,a)\). It is straightforward to verify that expectations with respect to the stationary distribution \(\mu\) take the form $\E_{\mu}[\Lam] = D_\mu$, where $D_\mu \in \rset^{SA \times SA}$ is a diagonal matrix of the form $D_\mu = \operatorname{diag}\{\mu(s,a)\}$. Moreover, $\E_{\bar{\mu}}[\bMKQ] = D_{\mu} \MKQ$. To avoid dealing directly with products of random matrices, we employ the following decomposition
\begin{align}
    Q_{t+1} &= Q_t + \alpha_t D_\mu( r+ \gamma \MKQ V_t -  Q_t) + \alpha_t\xi_{t}\eqsp,
\end{align}
where the noise term \(\xi_{t}\) is given by
\begin{align}\label{eq:xi_def}
    \xi_{t} = \gamma (\bMKQ_{t} - D_\mu \MKQ)(V_t - V^\star) - (\Lam_t - D_\mu)(Q_t-Q^\star) +(\Lam_t r+\gamma \bMKQ_{t}V^\star - \Lam_t Q^\star) \eqsp.
\end{align} 
Representation \eqref{eq:xi_def} follows from straightforward algebraic manipulations together with the Bellman equation  
\(Q^\star = r + \gamma \MKQ V^\star\). We also define the Bellman error at optimality 
\begin{equation}
\label{eq:eps_t_def}
\beps(X_t) = \beps_{t} = \Lam_t r+\gamma \bMKQ_{t}V^\star - \Lam_t Q^\star\eqsp.
\end{equation}
Define also \(\Delta_t = Q_t-Q^\star\). In the proof of CLT we will make use of the following assumption, which guarantees that the limit covariance is well defined.
\begin{assumA}
\label{assum:gap}
The discounted Markov decision process \( \mathcal{M} \) admits a unique optimal policy \( \pi^\star \) with a strictly positive optimality gap
\begin{equation}
\kappa_{\mathcal{M}} := 
        \min_{s \in \cS}\,
        \min_{a \ne \pi^\star(s)}
        \bigl|V^\star(s) - Q^\star(s,a)\bigr|
        > 0 \eqsp.
\end{equation}
\end{assumA}
Under assumption \Cref{assum:UGE}, we define the Bellman noise covariance matrix \(\Sigmabf_{\beps}\) as
\begin{align}
\Sigmabf_{\beps} = \E_{\bar{\mu}}[\beps_0\beps_0^\top]
 + 2\sum_{\ell=1}^{\infty}\E_{\bar{\mu}}[\beps_0\beps_\ell^\top]\eqsp.
 \end{align}

In our proof, we proceed with the framework based on the reduction of Markov chains to martingales via the Poisson equation, see \cite[Chapter 21]{douc:moulines:priouret:soulier:2018}. Recall that the Poisson equation associated with the function $\beps(x): \cS \times \cA\times \cS  \to \rset^{SA}$ writes as
\begin{equation}
\label{eq:pois_solution_equation_eps}
\pois{\beps}{}(x) -  \bar{\MKQ} \pois{\beps}{}(x) = \beps(x) \eqsp.
\end{equation}
Under Assumption \Cref{assum:UGE}, equation \eqref{eq:pois_solution_equation_eps} has a unique solution $\pois{\beps}{}: \cS \times \cA\times \cS  \to \rset^{SA}$, which is bounded. We also write $\pois{\beps}{t}$ as an alias for $\pois{\beps}{}(X_t)$. 

\textbf{Gaussian approximation.} Below we present the main result of this section, that is, that \(\sqrt{n}\bar{\Delta}_n\) convergences in distribution to the Gaussian law with the covariance matrix $\Sigmabf_{\infty}$ given by
 \begin{align}
\Sigmabf_{\infty} = G^{-1}\Sigmabf_{\beps}G^{-\top}\eqsp, \quad G = (I-D_\mu(I-\gamma\MKQ^{\pi^\star}))\eqsp.
 \end{align}

\begin{theorem}
\label{thm:clt_q_learning}
Assume \Cref{assum:regularity} - \Cref{assum:gap} and let $Y \sim \mathcal{N}(0,I)$. Then, for
any initialization $Q_0$ satisfying $0 \le Q_0 \le (1-\gamma)^{-1}$  it holds that

\begin{align}
    d_{K}(\sqrt{n}\bar\Delta_n, \Sigmabf_{\infty}^{1/2} Y)&\lesssim_{pr}\frac{\log^4(dn)}{n^{1/2-\omega/2}} + \frac{\log^4(dn)}{n^{\omega - 1/2}} + \frac{\log^{5/4}(d) \log (dn)}{n^{1/4}}
\end{align}
where \(\lesssim_{pr}\) stands for inequality up to absolute and problem-specific constants, but not on $n$ and $d$.
\end{theorem}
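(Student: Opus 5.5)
We reduce $\sqrt{n}\bar\Delta_n$ to a leading linear martingale term plus remainders, apply \Cref{general CLT for martingales with fixed qc} (more precisely its extension \Cref{lem:clt_general} to random predictable variation) to the martingale part, and absorb the remainders by a smoothing/anti-concentration argument over hyper-rectangles.

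\textbf{Step 1 (linearization).} Write $V_t - V^\star = \MKQ^{\pi^\star}$-type linear term plus a nonlinear remainder. Under \Cref{assum:gap}, whenever $\|\Delta_t\|_\infty < \kappa_{\mathcal M}/2$ the greedy action at $Q_t$ equals $\pi^\star$. In that case $\MKQ V_t - \MKQ V^\star = \MKQ^{\pi^\star}\Delta_t$ exactly. Hence
\begin{align}
\Delta_{t+1} = (I - \alpha_t D_\mu(I-\gamma \MKQ^{\pi^\star}))\Delta_t + \alpha_t \beps_t + \alpha_t \zeta_t + \alpha_t \rho_t \eqsp,
\end{align}
where $\zeta_t$ collects the multiplicative noise terms $\gamma(\bMKQ_t - D_\mu\MKQ)(V_t-V^\star) - (\Lam_t - D_\mu)\Delta_t$, which are $O(\|\Delta_t\|_\infty)$. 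The term $\rho_t$ vanishes on $\{\|\Delta_t\|_\infty<\kappa_{\mathcal M}/2\}$. The high-order moment bounds for the last iterate (stated in the paper as the final result, $\PE^{1/p}\|\Delta_t\|_\infty^p \lesssim_{pr} \sqrt{\alpha_t \log(dt)}$) give, via Markov's inequality with $p\asymp \log n$, that $\rho_t$ contributes only a polynomially small term.

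\textbf{Step 2 (Polyak--Ruppert summation).} Summing the recursion in the standard way, $\Delta_t = \alpha_t^{-1}(\Delta_t-\Delta_{t+1})+\ldots$, rewritten with $\bar G := D_\mu(I-\gamma\MKQ^{\pi^\star}) = I-G$ (note the paper's $G$ relates to this matrix), yields
\[
\sqrt{n}\bar\Delta_n = \frac{1}{\sqrt n}\bar G^{-1}\sum_{t} \beps_t + R_n \eqsp.
\]
Here $R_n$ contains three pieces. The first is the telescoping and step-size variation terms $\frac{1}{\sqrt n}\sum_t (\alpha_t^{-1}-\alpha_{t-1}^{-1})\Delta_t$, of order $n^{\omega/2-1/2}$ times logs, using the moment bound. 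The second is the $\zeta_t$ sum. After a Poisson-equation decomposition it splits into a martingale part of size $\sqrt{\frac1n\sum_t \alpha_t}\asymp n^{-\omega/2}$ and a drift part bounded via $\sum_t\alpha_t\|\Delta_t\|$; together these give at most $n^{1/2-\omega}$ and $n^{\omega/2-1/2}$ up to logs. The third is the nonlinear term, which is handled similarly with a rate of $n^{1/2-\omega}$. Then $\sum_t \beps_t$ is split via \eqref{eq:pois_solution_equation_eps} into the martingale $\sum_t(\pois{\beps}{t+1}-\bar{\MKQ}\pois{\beps}{}(\bar z_t))$ plus a bounded boundary term $O(n^{-1/2})$.

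\textbf{Step 3 (martingale CLT and covariance).} We apply \Cref{lem:clt_general} to $X_t = n^{-1/2}G'(\pois{\beps}{t+1}-\bar\MKQ\pois{\beps}{}(\bar z_t))$, which has bounded increments, so $\sum_k\PE\|X_k\|_\infty^3\lesssim n^{-1/2}$. The predictable variation $\sum V_k$ is random but concentrates around $n$ times its stationary value at rate $n^{-1/2}$ by ergodic averages under \Cref{assum:UGE}. Choosing $\Sigma$ of order one gives $\log^{5/4}(d)\, n^{-1/4}$, with an extra $\log(dn)$ factor coming from the random-variation correction. The covariance $\Sigma_n/n$ differs from $\Sigmabf_{\infty}$ by $O(\taumix/n)$. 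A Gaussian comparison over rectangles (a Chernozhukov-type bound depending on $\|\cdot\|_{\operatorname{ch}}$) makes this error negligible.

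\textbf{Step 4 (assembling).} Combine the pieces using $d_K(U+R,Z)\le d_K(U,Z)+\PP(\|R\|_\infty>\varepsilon)+C\varepsilon\sqrt{\log d}/\underline\sigma$ by Nazarov's anti-concentration. Take $\varepsilon$ equal to the $L^p$ bound of $R_n$ inflated by $e$, with $p=\log(dn)$, so that the probability term is $\lesssim n^{-1}$. This produces the $\log^4(dn)$ factors.

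The main obstacle is Step 2's control of $R_n$ in $\ell_\infty$ with only polylogarithmic dependence on $d$. In particular, the drift term $\frac1{\sqrt n}\sum_t(\Lam_t-D_\mu)\Delta_t$ involves Markov-dependent multiplicative noise. I would treat it by a Poisson equation for $x\mapsto(\Lam(x)-D_\mu)$ and summation by parts. This produces increments $\Delta_t-\Delta_{t-1}=O(\alpha_t)$, which are then controlled by the high-order moment bounds together with a Burkholder–Rosenthal inequality in $\ell_\infty$ (paying $\log d$ via $p\asymp\log d$).
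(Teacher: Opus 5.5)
The overall architecture matches the paper's. You isolate $\frac{1}{\sqrt n}\sum_t(\pois{\beps}{t}-\totMKQ\pois{\beps}{t-1})$ via the Poisson equation. You control the telescoping, multiplicative-noise and nonlinear remainders in $L^p$ using last-iterate moments, and you transfer to $d_K$ by anti-concentration with $p=\log(dn)$. Then you apply \Cref{lem:clt_general} and a Gaussian comparison between $\Sigmabf_n$ and $\Sigmabf_\infty$.

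Your treatment of the nonlinear term is a legitimate variant. The paper uses the quadratic bound $4\kappa_{\mathcal M}^{-1}\|\Delta_t\|_\infty^2$; you instead use that the term vanishes on $\{\|\Delta_t\|_\infty<\kappa_{\mathcal M}/2\}$, together with Markov's inequality and a union bound. The paper's moment bound is $\sqrt{\alpha_t}\,p\log(dt^2)$, which is linear in $p$. So this only works once $\alpha_t\lesssim\kappa_{\mathcal M}^2/\log^4(dn)$, and you need a polylogarithmic burn-in $t_0$, bounded crudely by $t_0/((1-\gamma)\sqrt n)$. The final rate is unchanged, since the Markov remainder still costs $n^{1/2-\omega}$.

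\textbf{The gap is in Step 3.} In \Cref{general CLT for martingales with fixed qc}, and in \Cref{lem:clt_general} where $\Sigma=\Sigma_n/n$ is already fixed, the quantity that must be small is $\overline\sigma(\Sigma)+\sum_k\PE[\|X_k\|_\infty^3/\lambda_{\min}(P_k+\Sigma)]$. Here $P_k=\sum_{i=k}^nV_i$ is the \emph{random} remaining predictable variation.
\begin{itemize}
\item Taking $\Sigma$ of order one gives $\overline\sigma(\Sigma)\asymp 1$, because $\Sigma_n$ is itself of order one. The bound is then of order $\log^{5/4}d$, which is vacuous.
\item Taking $\Sigma\asymp\Sigma_n/n$ with only the trivial bound $\lambda_{\min}(P_k+\Sigma)\ge\lambda_{\min}(\Sigma_n)/n$ makes the sum of order $n\cdot n^{-3/2}\cdot n=n^{1/2}$.
\item Balancing any deterministic $\Sigma\asymp n^{-a}$ against that crude denominator gives at best $n^{-1/12}$.
\end{itemize}
So $\sum_k\PE\|X_k\|_\infty^3\lesssim n^{-1/2}$ is not the relevant estimate, and $n^{-1/4}$ does not follow from what you wrote.

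\textbf{What is missing.} You need a high-probability lower bound $\lambda_{\min}(P_k+\Sigma_n/n)\gtrsim\frac{n-k+1}{n}\lambda_{\min}(\Sigmabf_\infty)$ for all $k$ outside a short terminal window. This requires concentration of \emph{every} tail sum $P_k$, not only of the full sum $\sum_{k=1}^nV_k$ that your random-variation correction uses. The paper does this in Step 4 of the proof of \Cref{lem:clt_application}:
\begin{itemize}
\item it applies a matrix Hoeffding inequality for the chain $\bar z_k$ to each $P_k$;
\item it changes measure to the stationary start, at cost $\|\nu/\mu\|_\infty$, so that $\PE P_k=\frac{n-k+1}{n}\Sigmabf_\infty$;
\item it uses Lidskii's inequality with $\delta=(n-k+1)^{-1}$;
\item it bounds the last $n_0\asymp\log(d^2n)/(\varsigma\lambda_{\min}^2(\Sigmabf_\infty))$ indices crudely.
\end{itemize}
Summing $n/(n-k+1)$ then gives $\sum_k\PE[\lambda_{\min}^{-1}(P_k+\Sigma_n/n)]\lesssim n\log n+n\,n_0$. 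The bracket is therefore $O(n^{-1/2}\log(d^2n))$, which becomes $n^{-1/4}$ after the square root. This terminal window, not the stopping-time correction, is the source of the $\log(dn)$ factor in the theorem's last term.
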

\begin{proof}
The proof begins by isolating the leading linear term, as established in \Cref{lem:clt_nonlinear_estimation}
\begin{align}
    \sqrt{n}G\bar\Delta_n = \frac{1}{\sqrt{n}}\sum_{t=1}^n (\pois{\beps}{t} - \totMKQ\pois{\beps}{t-1}) + \mathrm{R}_n^{\operatorname{pr}}\eqsp.
\end{align}
The same lemma yields the bound
\begin{align}\label{eq:clt_pr_rem}
        \PE^{1/p}[\supnorm{\mathrm{R}_n^{\operatorname{pr}}}] \lesssim_{pr}  p^2 \log^2(dn^2)\Big(\frac{1}{n^{\omega/2}} + \frac{1}{n^{1/2-\omega/2}} + \frac{1}{n^{\omega - 1/2}} \Big)\eqsp.
    \end{align}
 Since $\omega \in (1/2,1)$, we have $n^{-\omega/2} \le n^{-(1-\omega)/2}$ for all $n \ge 1$, and hence the first term is dominated by the second and can be omitted.
Define the linear statistic by
\begin{align}
\label{eq: W_n definition}
    W_n  =  \frac{1}{\sqrt{n}}\sum_{t=1}^{n}G^{-1}(\pois{\beps}{t} - \totMKQ\pois{\beps}{t-1})\eqsp.
\end{align}
The associated covariance matrix at time $n$ is given by
\begin{align}
    \Sigmabf_{n}= \E[W_nW_n^\top] = \frac{1}{n}\sum_{t=1}^{n}G^{-1}\E[(\pois{\beps}{t} - \totMKQ\pois{\beps}{t-1})(\pois{\beps}{t} - \totMKQ\pois{\beps}{t-1})^\top]G^{-\top}\eqsp.
\end{align}
 Applying \Cref{lem:shao} with $X = W_n$  and $X^\prime =G^{-1}\mathrm{R}_n^{\operatorname{pr}}$ yields
\begin{align}\label{eq:shao_bound}
d_{K}(\sqrt{n}\bar\Delta_n, \Sigmabf_{\infty}^{1/2} Y) &\leq  2\PE^{1/(p+1)}[\supnorm{G^{-1}\mathrm{R}_n^{\operatorname{pr}}}^p]\left( \frac{2(\sqrt{2 \log d} + 2)]}{\underline{\sigma}(\Sigmabf_\infty)} \right)^{p/(p+1)}\\
&+ d_{K}(W_n, \Sigmabf_\infty^{1/2} Y )\eqsp.
\end{align}
Setting \(p=\log(dn)\) and using estimation \eqref{eq:clt_pr_rem} we bound the first term in \eqref{eq:shao_bound} as follows 
\begin{align}\label{eq:shao_remainder}
&2\PE^{1/(p+1)}[\supnorm{\mathrm{R}_n^{\operatorname{pr}}}^p]\left( \frac{2(\sqrt{2 \log d} + 2)]}{\underline{\sigma}(\Sigmabf_\infty)} \right)^{p/(p+1)} 
\lesssim_{pr}   \frac{\log^4(dn)}{n^{1/2-\omega/2}} + \frac{\log^4(dn)}{n^{\omega - 1/2}} \eqsp.
\end{align}
For the second term in \eqref{eq:shao_bound} we apply triangle inequality
\begin{align}
d_{K}(W_n, \Sigmabf_{\infty}^{1/2} Y ) \leq d_{K}(W_n, \Sigmabf_n^{1/2} Y ) + d_{K}(\Sigmabf_n^{1/2} Y, \Sigmabf_{\infty}^{1/2} Y^\prime)\eqsp,
\end{align}
where $Y^\prime$ is i.i.d. copy of $Y$. Next, \Cref{lem:clt_application} implies that
\begin{align}\label{eq:clt_distance}
    d_{K}(W_n,\,\Sigmabf_n^{1/2}Y) 
    \lesssim_{pr} \frac{\log^{5/4}(d)\log(d^2 n)}{n^{1/4}} \eqsp.
\end{align}
Finally, \Cref{lem:cov_nonstationary_taumix}   guarantees that \[\|\Sigmabf_{n}-\Sigmabf_{\varepsilon}\|_\infty
\lesssim
\frac{\|G^{-1}\|_{\infty}^2\taumix^3}{n(1-\gamma)^2} \eqsp.
\]
Combining this inequality with \Cref{lem:gaussian_comparison} we get
 \begin{align}\label{eq:gaussian_comparison}
     d_K(\mathcal{N}(0, \Sigmabf_{n} ), \mathcal{N}(0,  \Sigmabf_{\beps})) &\lesssim \frac{\lVert \Sigmabf_n - \Sigmabf_{\beps} \rVert_{\infty}}{\lambda_{\min}(\Sigmabf_n)} \log d
\Big( 1 + \Big| \log \frac{\supnorm{\Sigmabf_n - \Sigmabf_{\beps}}}{\lambda_{\min}(\Sigmabf_n)} \Big| \Big)\\
&\lesssim\frac{\log(d)}{n}\frac{\|G^{-1}\|_{\infty}^2\taumix^3}{(1-\gamma)^2\lambda_{\min}(\Sigmabf_n)} \eqsp.
 \end{align}
The statement of the theorem now follows by combining the bounds in
\eqref{eq:shao_remainder}, \eqref{eq:clt_distance} and \eqref{eq:gaussian_comparison}.
\end{proof}

\paragraph{Discussion} Optimising the bound of \Cref{thm:clt_q_learning} over $\omega \in (1/2, 1]$ we get setting $\omega = 2/3$ that
\[d_{K}(\sqrt{n}\bar\Delta_n, \Sigmabf_{\infty}^{1/2} Y)\lesssim_{pr}\frac{\log^4(dn)}{n^{1/6}}\eqsp.\]
\par 
For comparison, \cite{liu2025central} establishes a convergence rate of order
$\widetilde{\mathcal{O}}(\sqrt{d}/n^{1/6})$ in Wasserstein distance.
Therefore the present result extends statistical inference for
\Cref{Q learn alg} to the Kolmogorov distance, without  polynomial
dependence on the dimension of the state-action space.

\subsection{Moment bounds for Q-learning}
\label{sec:moment_bounds}
From Bellman optimality equation
\(Q^\star= r+\gamma \MKQ V^\star\), we get 
\begin{align}\label{eq:Delta_rec_1}
    \Delta_{t+1} = (I - \alpha_t D_\mu)\Delta_t +\alpha_t\gamma D_\mu \MKQ(V_t-V^\star) + \alpha_t\xi_t\eqsp.
\end{align}
To recursively expand  \eqref{eq:Delta_rec_1}, it is convenient to eliminate the term \(\MKQ(V_t - V^\star)\). 
We adopt a standard approach and bound this quantity from above and below in a coordinate-wise manner. 
To this end, we first define the policy \(\pi_t(s) = \arg\max_{a\in\mathcal{A}} Q_t(s,a)\) as the greedy policy induced by the vector \(Q_t\), and denote by \(\pi^\star\) the greedy policy with respect to \(Q^\star\). By construction of the greedy choice, the following inequalities hold
\begin{align}\label{eq:greedy_kernel}
\MKQ^{\pi^\star}Q^\star(s,a) &= Q^\star(s, \pi^\star(s))\geq Q^\star(s, \pi_t(s)) = \MKQ^{\pi_t}Q^\star(s,a)\eqsp, \\
\MKQ^{\pi_t}Q_t(s,a) &= Q_t(s, \pi_t(s)) \geq Q_t(s, \pi^\star(s)) = \MKQ^{\pi^\star}Q_t(s, a)\eqsp.
\end{align}
In vector form, inequalities \eqref{eq:greedy_kernel} can be written as
\[\MKQ^{\pi^\star}Q^\star \geq \MKQ^{\pi_t}Q^\star, \quad \MKQ^{\pi_t}Q_t \geq \MKQ^{\pi^\star}Q_t\eqsp.\]
One can verify that the action–value function  and the corresponding greedy value function
are related through the identities  \(\MKQ V_{t} = \MKQ^{\pi_{t}} Q_{t}\) and \(\MKQ V^\star  = \MKQ^{\pi^\star} Q^\star\).
These relations allow us to connect the term \(\MKQ(V_t - V^\star)\) with \(\Delta_t = Q_t - Q^\star\). 
Specifically,
\begin{align}
    \MKQ(V_{t} - V^\star) &= \MKQ^{\pi_{t}} Q_{t} - \MKQ^{\pi^\star} Q^\star  \leq \MKQ^{\pi_{t}} Q_{t} - \MKQ^{\pi_{t}} Q^\star  = \MKQ^{\pi_{t}} \Delta_{t}\eqsp, \\
    \MKQ(V_{t} - V^\star) &= \MKQ^{\pi_{t}} Q_{t} - \MKQ^{\pi^\star} Q^\star  \geq \MKQ^{\pi^{\star}} Q_{t} - \MKQ^{\pi^{\star}} Q^\star  = \MKQ^{\pi^{\star}} \Delta_{t}\eqsp.
\end{align}
Combining the preceding observations yields the following two-sided bound
\begin{equation}\label{eq:sandwich_1}
    \MKQ^{\pi^\star}\Delta_t 
    \leq \MKQ(V_t - V^\star) 
    \leq \MKQ^{\pi_t}\Delta_t\eqsp.
\end{equation}
Substituting \eqref{eq:sandwich_1} into \eqref{eq:Delta_rec_1}, we get
\begin{align}\label{eq:sandwich_2}
    \Delta_{t+1}
    &\geq (I - \alpha_t D_\mu (I - \gamma \MKQ^{\pi^\star}))\Delta_t
            + \alpha_t \xi_{t}\eqsp, \\
    \Delta_{t+1}
    &\leq (I - \alpha_t D_\mu (I - \gamma \MKQ^{\pi_t}))\Delta_t
            + \alpha_t \xi_{t}\eqsp. 
\end{align}
It is important to highlight the structure of matrices of the form 
\((I - \alpha_t D_\mu (I - \gamma \MKQ^{\pi}))\). 
First, all entries of such matrices are nonnegative, which makes it possible to recursively extend 
inequalities \eqref{eq:sandwich_2} down to the initial deviation \(\Delta_0\). Second, the sum of each row is bounded above by  \(1 - \alpha_t\mu_{\operatorname{min}}(1-\gamma)\), where \(\mu_{\operatorname{min}} = \min_{(s,a)} \mu(s,a)\). 
This, in turn, yields the supremum-norm estimate
\[
\supnorm{I - \alpha_t D_\mu (I - \gamma \MKQ^{\pi})} \leq 1 - \alpha_t\mu_{\operatorname{min}}(1-\gamma)\eqsp.
\]
We now introduce auxiliary sequences \(\{\Delta_t^{(1)}\}_{t\ge 0}\) and \(\{\Delta_t^{(2)}\}_{t\ge 0}\) defined recursively by
\begin{align}\label{eq:Delta_aux_rex}
    &\Delta_{t+1}^{(1)} = (I - \alpha_tD_\mu(I-\gamma\MKQ^{\pi^\star}))\Delta_{t}^{(1)} + \alpha_t\xi_{t}\eqsp, \quad \Delta_0^{(1)} = \Delta_0\eqsp, \\
    &\Delta_{t+1}^{(2)} = (I - \alpha_tD_\mu(I-\gamma\MKQ^{\pi_t}))\Delta_{t}^{(2)} + \alpha_t\xi_{t}\eqsp, \quad \Delta_0^{(2)} = \Delta_0\eqsp.
\end{align}
It is straightforward to verify, using \eqref{eq:sandwich_2} and a simple induction argument, that
\begin{equation}\label{eq:sandwich_3}
\Delta_t^{(1)}\leq\Delta_t\leq\Delta_t^{(2)}\eqsp, \quad \supnorm{\Delta_t}\leq \max\{\supnorm{\Delta_t^{(1)}}, \supnorm{\Delta_t^{(2)}}\}\eqsp.
\end{equation} Accordingly, it suffices to control the norms of the auxiliary sequences. The idea of introducing auxiliary upper and lower bounding sequences for Q-learning iterates was first proposed in \cite{wainwright2019stochastic} and has since become a standard tool for establishing finite-time guarantees.
Despite their similar recursion, \(\Delta_t^{(1)}\) enjoys a key advantage over \(\Delta_t^{(2)}\): 
the previous-step error is propagated through a deterministic matrix. 
In contrast, the randomness of the matrices \(\MKQ^{\pi_t}\) in \(\Delta_t^{(2)}\) 
precludes a direct application of standard martingale concentration tools.

To simplify notation, define the matrix and scalar products
\begin{equation}
\Gamma_{m:n} := \prod_{j=m}^{n}\bigl(I - \alpha_j D_\mu (I - \gamma \MKQ^{\pi^\star})\bigr)\eqsp, \qquad P_{m:n} := \prod_{j=m}^{n}\bigl(1 - \alpha_j \mu_{\min}(1-\gamma)\bigr)\eqsp,
\end{equation}
with the convention that empty products equal \(I\) and \(1\), respectively.
As noted above, by row-sum bounds, we have \(\supnorm{\Gamma_{m:n}} \le P_{m:n}\).
With this notation, the unrolled recursion \eqref{eq:Delta_aux_rex} becomes
\begin{equation}\label{eq:main_xi_dec}
    \Delta_{t+1}^{(1)}
    = \Gamma_{0:t}\,\Delta_0 + \sum_{j=0}^{t} \alpha_j\,\Gamma_{j+1:t}\,\xi_{j}\eqsp.
\end{equation}
The previous observations allow us to establish moment bounds for the convergence of the last iterate of \Cref{Q learn alg}. We refer the reader to \Cref{sec:proof_moments} for the complete derivations.

\begin{theorem}
\label{th:Q_moments_main} 
Assume \Cref{assum:regularity} - \Cref{assum:steps} are met. Then, for any $t>0$ and $p \ge 1$ it holds that
\[\E^{1/p}[\supnorm{Q_t-Q^\star}^p] \lesssim\frac{\log(dt^2)\,p\taumix}{(1-\gamma)^{5/2}t^{\omega/2}\mumin^{3/2}}\eqsp.\]
\end{theorem}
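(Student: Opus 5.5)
We work from the sandwich \eqref{eq:sandwich_3}, $\supnorm{\Delta_t}\le\max\{\supnorm{\Delta^{(1)}_t},\supnorm{\Delta^{(2)}_t}\}$, and bound the two auxiliary sequences separately, first $\Delta^{(1)}$ and then $\Delta^{(2)}$ by reduction to it.

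\emph{The lower sequence $\Delta^{(1)}$.} We use the unrolled form \eqref{eq:main_xi_dec}. The deterministic term satisfies $\supnorm{\Gamma_{0:t}\Delta_0}\le P_{0:t}(1-\gamma)^{-1}$. By the choice of $k_0$ in \Cref{assum:steps}, the step-size lemmas of \Cref{appendix:step_size} make this decay faster than any polynomial in $t$, so it is negligible.

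The stochastic term splits according to \eqref{eq:xi_def} into three parts:
\begin{itemize}
\item[(i)] the Bellman-noise part $\sum_j\alpha_j\Gamma_{j+1:t}\beps_j$;
\item[(ii)] the multiplicative parts $\sum_j\alpha_j\Gamma_{j+1:t}(\Lam_j-D_\mu)\Delta_j$;
\item[(iii)] the parts $\sum_j\alpha_j\Gamma_{j+1:t}\gamma(\bMKQ_j-D_\mu\MKQ)(V_j-V^\star)$.
\end{itemize}

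For (i), write $\beps_j=(\pois{\beps}{j}-\bar\MKQ\pois{\beps}{j-1})+(\bar\MKQ\pois{\beps}{j-1}-\bar\MKQ\pois{\beps}{j})$ using the Poisson equation \eqref{eq:pois_solution_equation_eps}, with $\|\pois{\beps}{}\|_\infty\lesssim\taumix/(1-\gamma)$.
\begin{itemize}
\item The martingale part is handled coordinatewise by a Burkholder/Pinelis (or Azuma with bounded increments) inequality plus a union bound over the $d$ coordinates. This gives a factor $\sqrt{p\log d}$ times $\bigl(\sum_j\alpha_j^2P_{j+1:t}^2\bigr)^{1/2}\lesssim\sqrt{\alpha_t/(\mumin(1-\gamma))}\asymp t^{-\omega/2}$.
\item The telescoping part is summed by parts. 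Its size is $\alpha_t\taumix/(1-\gamma)$ plus $\sum_j|\alpha_j\Gamma_{j+1:t}-\alpha_{j+1}\Gamma_{j+2:t}|$, which is again of order $\alpha_t\taumix/(\mumin(1-\gamma)^2)$ and so of lower order.
\end{itemize}

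\emph{Terms (ii)--(iii).} These have the same Markov-noise structure, but the noise is multiplied by $\Delta_j$ or $V_j-V^\star$, which are bounded by $2/(1-\gamma)$. We can use the a priori bound $\supnorm{Q_t}\le(1-\gamma)^{-1}$, which holds by induction since the update is a convex combination. Then the Poisson decomposition applied to the function $x\mapsto(\Lam(x)-D_\mu)$ acting on the slowly varying $\Delta_j$ gives a martingale part of the same $t^{-\omega/2}$ order. It also gives an extra drift from $\Delta_{j+1}-\Delta_j=O(\alpha_j/(1-\gamma))$, which is of order $\alpha_t$ again. Collecting the factors $\taumix$, $(1-\gamma)^{-1}$ from the Poisson solution, $(1-\gamma)^{-1}$ from the boundedness of $\Delta_j$, and $(\mumin(1-\gamma))^{-1/2}$ from the step-size sum yields the claimed bound for $\Delta^{(1)}$. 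The $\log(dt^2)$ factor comes from the union bound over coordinates and a maximal inequality over the time indices.

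\emph{The upper sequence $\Delta^{(2)}$, which is the main obstacle.} Here the propagation matrix depends on the random $\pi_t$, so the martingale argument does not apply directly. Following \cite{wainwright2019stochastic}, we split $\Delta^{(2)}_t=\Delta^{(1)}_t+E_t$, where $E_{t+1}=(I-\alpha_tD_\mu(I-\gamma\MKQ^{\pi_t}))E_t+\alpha_t\gamma D_\mu(\MKQ^{\pi_t}-\MKQ^{\pi^\star})\Delta^{(1)}_t$ and $E_0=0$. Using only the row-sum contraction, $\supnorm{E_{t+1}}\le(1-\alpha_t\mumin(1-\gamma))\supnorm{E_t}+2\alpha_t\gamma\supnorm{\Delta^{(1)}_t}$. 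Unrolling gives $\supnorm{E_t}\le\sum_j 2\alpha_jP_{j+1:t}\supnorm{\Delta^{(1)}_j}$.

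Taking $L^p$ norms with Minkowski's inequality and plugging in the bound for $\Delta^{(1)}_j\propto j^{-\omega/2}$, the step-size lemma gives $\sum_j\alpha_jP_{j+1:t}j^{-\omega/2}\lesssim t^{-\omega/2}/(\mumin(1-\gamma))$. This costs the extra factor $(\mumin(1-\gamma))^{-1}$, which together with the above produces $(1-\gamma)^{-5/2}\mumin^{-3/2}$.

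The delicate points are twofold: checking that the step-size sum lemma holds with the polynomial weight $j^{-\omega/2}$, and carrying out the summation by parts for the Markov (non-martingale) remainder without losing more than one factor of $\taumix$.
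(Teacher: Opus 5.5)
Your proposal follows the paper's proof essentially step for step. It uses the sandwich \eqref{eq:sandwich_3}, then the Poisson split of $\xi_t$ into a martingale part and a telescoping Markov remainder handled by summation by parts (\Cref{lem:xi_0_moments}, \Cref{lem:xi_1_moments}). It then unrolls the difference sequence $\delta_t=\Delta^{(2)}_t-\Delta^{(1)}_t$ (your $E_t$) with the row-sum contraction and Minkowski, with \Cref{lem:rate_of_convergence} at $q=3/2$ supplying the extra factor $(\mumin(1-\gamma))^{-1}$. The differences are cosmetic: you use coordinatewise Azuma plus a union bound where the paper uses the $\ell_\infty$ Freedman-type inequality of \Cref{lem:bdg_2} (yours even gives the slightly better factor $\sqrt{p}+\sqrt{\log d}$), and you dismiss $\Gamma_{0:t}\Delta_0$ by superpolynomial decay of $P_{0:t}$ rather than via $P_{0:t}\le\alpha_t/\alpha_0$ from \Cref{lem:P_alpha_ineq}.
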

\begin{proof}
We begin by establishing a moment bound for $\Delta_{t+1}^{(1)}$ using the
decomposition \eqref{eq:main_xi_dec}. The noise sequence $\xi_t$ admits the decomposition
$\xi_t=\xi_t^{(0)}+\xi_t^{(1)}$ induced by the Poisson equation; see
\eqref{eq:xi_decompose}. The term $\xi_t^{(0)}$ forms a martingale difference,
whereas $\xi_t^{(1)}$ is a purely Markovian remainder. The martingale term is bounded using the Burkholder–Davis–Gundy inequality (\Cref{lem:bdg_2}), while the Markovian remainder is negligible due to the mixing
properties implied by \Cref{assum:UGE} and is controlled via Minkowski’s
inequality. Having in mind these arguments, we get
\begin{align}\label{eq:xi_bounds_main}\E^{1/p}[\|\sum_{j=0}^{t} \alpha_j\,\Gamma_{j+1:t}\,\xi_{j}^{(0)}\|^p_\infty\lesssim \sqrt{\alpha_t}\,p\log(dt^2)\eqsp, \quad \E^{1/p}[\|\sum_{j=0}^{t} \alpha_j\,\Gamma_{j+1:t}\,\xi_{j}^{(1)}\|^p_\infty\lesssim\alpha_t\eqsp.
\end{align}
For conciseness, all model-dependent constants are suppressed; their explicit
forms can be found in \Cref{lem:xi_0_moments} and \Cref{lem:xi_1_moments}. The transient term $\Gamma_{0:t}\Delta_0$ in \eqref{eq:main_xi_dec} decays
exponentially and, by \Cref{lem:P_alpha_ineq}, satisfies
$\|\Gamma_{0:t}\Delta_0\|_\infty \lesssim \alpha_t$.
This estimate is used in \Cref{lem:Delta_1_union}, which implies that the lower bound in \eqref{eq:sandwich_3} satisfies \begin{align}\label{eq:Delta_0_bound_main}
\PE^{1/p}[\|\Delta_{t+1}^{(1)}\|^p_\infty]\lesssim\sqrt{\alpha_t}\,p\log(dt^2)\eqsp.
\end{align}
Rather than explicitly bounding the upper term in the inequality
\eqref{eq:sandwich_3}, introduce the difference
$\delta_t \coloneqq \Delta_t^{(2)} - \Delta_t^{(1)}$.
The corresponding recursion takes the form
\begin{align}\label{eq:delta_rec_main}
    \delta_{t+1} = \bigl(I - \alpha_t D_\mu (I - \gamma \MKQ^{\pi_t})\bigr)\delta_t
   + \alpha_t \gamma D_\mu \bigl(\MKQ^{\pi_t} - \MKQ^{\pi^\star}\bigr)\Delta_t^{(1)} \eqsp.
\end{align}
The estimate \(\PE^{1/p}[\|\delta_{t+1}\|_\infty^p]\lesssim\sqrt{\alpha_t}\,p\log(dt^2)\) follows from \eqref{eq:Delta_0_bound_main} and Minkowski’s inequality.  Detailed derivation is provided in
\Cref{th:moments}. Consequently, invoking the inequality
\eqref{eq:sandwich_3} yields
\begin{align}
    \E^{1/p}[\supnorm{\Delta_{t+1}}^p]
    &\leq 2\E^{1/p}[\supnorm{\Delta_{t+1}^{(1)}}^p] + \E^{1/p}[\supnorm{\delta_{t+1}}^p]\lesssim\sqrt{\alpha_t}\,p\log(dt^2)\eqsp.
\end{align}
This completes the proof.
\end{proof}
\paragraph{Discussion}
A related result is obtained in \cite{li2024minimax}, where the following bound
is established for linear-rescaled step sizes
\[
\PE\!\left[\|Q_t - Q^\star\|_\infty\right]
\;\lesssim\;
\frac{\log(dt)\,\taumix^{1/2}}{(1-\gamma)^{2}\,t^{1/2}\,\mumin^{1/2}} \eqsp .
\]
A sharper dependence on $\mumin$ and $(1-\gamma)$ is achieved therein through a
more refined analysis of the error recursion $\delta_t$. By contrast, our result completes the statistical inference theory for
Q-learning by covering the remaining class of polynomially decaying step sizes,
which are most relevant for Gaussian approximation. Improving the dependence on the model parameters in \Cref{th:Q_moments_main} is an interesting direction for future research.

\section{Conclusion}
In this work, we establish a non-asymptotic high-dimensional Gaussian approximation for Polyak--Ruppert averaged iterates of asynchronous Q-learning with polynomial step sizes. Under uniform geometric ergodicity of the transition-tuple Markov chain and optimality gap condition, our bounds yield the rate $n^{-1/6}$, polylogarithmic dependence on the state--action dimension and explicit dependence on key problem parameters such as mixing time and the discount factor. A possible direction for future research is improvement of convergence rate to $n^{-1/4} \log^c d$ for some $c > 0$, which seems to be optimal in the Q-learning setting. This will require to use another expansion for \eqref{eq:PR_Q_larning} with $W_n$ being weighted sum of $\pois{\beps}{t} - \totMKQ\pois{\beps}{t-1}$  and bound smallest singular values of corresponding matrices $P_k$ from Theorem \ref{general CLT for martingales with fixed qc}. Another promising direction is to establish the non-asymptotic validity of multiplier bootstrap procedures for approximating the distribution of the rescaled error of the averaged estimator.

\bibliography{references}

\appendix

\newpage
\section{Notations}
\input{notations}

\section{Proof of \Cref{th:Q_moments_main}}\label{sec:proof_moments}
\subsection{Noise Decomposition via Poisson Equation}
\input{noise_decomposition}

\input{moments_estimation}

\section{Gaussian approximation,  comparison and anti-concentration for sums of high-dimensional martingales}\label{appendix:gar}
\input{appendix_martingale_limits}

\section{Auxillary Results}
\input{appendix_auxillary}

\input{appendix_concentration}

\end{document}

%% file: def.tex
\newcommand{\cS}{\mathcal{S}}

\newcommand{\cA}{\mathcal{A}}
\newcommand{\E}{\mathsf{E}}
\newcommand{\Pb}{\mathsf{P}}

\newcommand{\eqsp}{\;}

\newcommand{\e}{\mathbf{e}}
\newcommand{\Lam}{\mathbf{\Lambda}}
\newcommand{\bMKQ}{\mathbf{P}}
\newcommand{\MKQ}{\mathrm{P}}
\newcommand{\mumin}{\mu_{\operatorname{min}}}

\newcommand{\totMKQ}{\bar{\mathrm{P}}}
\newcommand{\beps}{\boldsymbol{\varepsilon}}
\def\nset{\ensuremath{\mathbb{N}}}
\newcommand{\pois}[2]{\boldsymbol{g}^{#1}_{#2}}
\def\taumix{t_{\operatorname{mix}}}
\def\tvdist{\mathsf{d}_{\operatorname{tv}}}

\newcommand{\Bellman}{\mathcal{T}}

\def\nset{\ensuremath{\mathbb{N}}}
\def\rset{\mathbb{R}}
\def\Xset{\mathsf{X}}
\def\Xsigma{\mathcal{X}}
\newcommand{\rmd}{\mathrm{d}}
\newcommand{\Sigmabf}{\boldsymbol{\Sigma}}

\newcommand{\supnorm}[1]{\lVert #1 \rVert_{\infty}}
\def\PE{\mathsf{E}}


%% file: introduction.tex
In this paper, we investigate 
the asynchronous Q-learning algorithm suggested by \cite{watkins1989learning, watkins1992qlearning},
a simple but fundamental 
method in the field of Reinforcement learning developed by \cite{sutton2018reinforcement, bertsekas1996neuro, puterman1994mdp}. 
Q-learning focuses on approximating the optimal action-value function \(Q^\star\) and the associated optimal value function \(V^\star\).  
These functions satisfy a nonlinear system of fixed-point Bellman optimality equation
\begin{align}
\label{eq:bellman_optimality}
r + \gamma \MKQ V^\star = Q^\star\eqsp,
\end{align}
where \(\MKQ\) denotes the transition matrix and \(r\) denotes the reward function. This system characterizes the unique solution corresponding to the optimal control policy. The Bellman optimality equation \eqref{eq:bellman_optimality} allows us to interpret the solution \(Q^\star\) as the fixed point of the Bellman operator  
\(\Bellman\), which is contractive in the discounted setting.  
All variants of the Q-learning algorithm begin with an initial estimate \(Q_0\) and, at each iteration \(t\), update the vector \(Q_t\) according to
\begin{align}
    Q_{t+1} = Q_t + \alpha_t (\widehat{\Bellman} Q_t - Q_t)\eqsp,
\end{align}
where \(\widehat{\Bellman}\) denotes a stochastic approximation of the true Bellman operator.  
The decreasing step-size sequence \((\alpha_t)_{t\in\nset}\) is typically chosen from one of the following three standard families :  
constant, polynomial, or rescaled linear schedules.

Q-learning is the central model-free method in reinforcement learning \cite{wainwright2019variance, li2024q, wang2020randomized, jin2020efficiently}, enabling the computation of optimal value functions without an explicit model of environment dynamics.  
By relying solely on sampled transitions,
model-free algorithms avoid evaluating transition kernels, resulting in a significant reduction in memory usage and computational costs compared to model-based approaches that must solve planning or
control subproblems 
\cite{azar2013minimaxPAC, agarwal20b, NEURIPS2020_96ea64f3}. 

The literature typically distinguishes between two main approaches to data collection.
The first, and conceptually simplest, approach corresponds to the
synchronous setting, in which access to a generative model or simulator allows
one to generate independent samples
\(s' \sim \MKQ(\cdot | s,a)\) for all state-action pairs in each iteration \cite{li2023statistical, wainwright2019stochastic, wainwright2019variance, chen2020finiteSA}. This setup gives rise to what is often called synchronous Q-learning, which is significantly easier to analyze due to the statistical independence of the samples. However, the generative-model assumption is rarely satisfied in applications. The second approach is more realistic: it assumes that the agent has access only to a single trajectory  
\(\{s_t, a_t, r_t\}_{t=0}^T\) generated under a fixed behavior policy \(\pi_b\).  
The corresponding algorithm is known as asynchronous Q-learning \cite{qu2020finite, li2024minimax}.  
In contrast to the synchronous setting, at each iteration only a single coordinate of the vector \(Q_t\) is updated,  
reflecting the fact that the algorithm receives information from only one state–action pair at a time. In what follows, we focus on the asynchronous setting.

Given a sequence of estimates \(\{Q_n\}_{n \in \nset}\), define their Polyak–Ruppert averaged counterparts  \(\{\bar{Q}_n\}_{n \in \nset}\) by
\begin{align}
    \bar{Q}_n = \frac{1}{n}\sum_{t=1}^{n}Q_t\eqsp.
\end{align}
The idea of using averaged estimates $\bar{Q}_n$ was proposed in the works of \cite{ruppert1988efficient} and \cite{polyak1990new, polyak1992acceleration}. The use of averaged iterations $\bar{Q}_n$ 
instead of the last iteration $Q_n$, as has been shown, stabilizes stochastic approximation procedures and
accelerates their convergence.
Moreover, it is known (see \cite{li2023statistical} for the synchronous setting and \cite{liu2025central} for the asynchronous case) that the estimator \(\bar{Q}_n\) is asymptotically normal under suitable regularity conditions on the step-size sequence \((\alpha_n)_{n \in \mathbb{N}}\).  
In particular,
\begin{equation}
\label{eq:CLT_fort_prelim}
\sqrt{n}\,(\bar{Q}_n - Q^\star) \xrightarrow{d} \mathcal{N}(0, \Sigma_{\infty})\eqsp,
\end{equation}
where the covariance matrix \(\Sigma_{\infty}\) is provided in \Cref{section:gar} and captures the asymptotic effect of Bellman noise at the fixed point.

A central line of research focuses on obtaining non-asymptotic properties of Q-learning \cite{li2023statistical, wainwright2019stochastic, wainwright2019variance, li2024q}.  
Much of the existing literature is devoted to establishing moment bounds and concentration inequalities for the  
estimation errors \(Q_n- Q^\star\) and \(\bar{Q}_n - Q^\star\).  
The goal of such results is to derive guarantees with explicit dependence on the number of samples \(n\),  
the state–action dimension \(SA\), and the discount factor \(\gamma\).

Another important direction is the study of the convergence rate in the central limit theorem  
\eqref{eq:CLT_fort_prelim}, quantified via an appropriate metric on probability distributions.  
For instance, the recent work \cite{liu2025central} investigates convergence rates in the Wasserstein distance. In this paper we study the rate of convergence for a class of hyper-rectangulars in $\rset^{S A}$.

The primary motivation for studying the approximation rate in \eqref{eq:CLT_fort_prelim} is the construction of confidence intervals for \(Q^\star\).
A key difficulty is that the asymptotic covariance matrix \(\Sigma_{\infty}\) is unknown in practice, and therefore \eqref{eq:CLT_fort_prelim} cannot be applied directly. 
Classical approaches address this issue by approximating \(\Sigma_{\infty}\) using either plug-in estimators \cite{chen2020aos,wu2024statistical}, or variants of the batch-means method \cite{chen2020aos,zhu2023online_cov_matr,li2024asymptotics}. 
These methods typically construct an estimator \(\hat{\Sigma}_n\) of \(\Sigma_{\infty}\).
An alternative line of recent works \cite{samsonov2024gaussian, sheshukova2025gaussian} considers multiplier bootstrap procedures, adapted from \cite{JMLR:v19:17-370}, which provide non-asymptotic error bounds for coverage probabilities. 
Such approaches avoid relying on the asymptotic distribution of \(\sqrt{n}(\bar{Q}_n - Q^\star)\) and do not require explicit estimation of \(\Sigma_{\infty}\), which is often computationally expensive.
\paragraph{Contributions} Our contributions can be summarized as follows:
\begin{itemize}[noitemsep, nolistsep]
    \item We establish a \emph{non-asymptotic} high-dimensional Gaussian approximation for the
Polyak--Ruppert averaged iterates of asynchronous Q-learning with polynomial step sizes
$\alpha_k \asymp k^{-\omega}$, $\omega\in(1/2,1]$.
Under a uniform geometric ergodicity assumption on the transition-tuple chain
$(s_k,a_k,s_{k+1})_{k\ge 0}$, we bound the Kolmogorov distance over the set of 
hyper-rectangles with leading rate
$n^{-1/6}\,\log^{4}(nSA)$ (up to problem-dependent constants).

\item 
To obtain the Gaussian approximation for Q-learning, we reduce the problem to a general high-dimensional
central limit theorem for sums of vector-valued martingale differences. 
We provide an explicit Gaussian approximation rate for this setting; see Theorem \ref{general CLT for martingales with fixed qc} below. This result is of independent interest and improves the bounds available in the literature (see e.g. \cite{KOJEVNIKOV2022109448}) since it does not depend on local characteristic of individual summands (for example, the lower bounds for  covariances of individual terms). 
\item As an auxiliary result, we derive bounds for high-order moments for the algorithm’s last iterate $Q_t$ under polynomial step sizes. We trace scaling of our bounds with the mixing time of behavioral policy, time horizon $1/(1-\gamma)$ and exploration properties of the behavioral policy. This result is of independent interest as a crucial part in analysis of the remainder term in the CLT and in the bounds for high-order moments.
\end{itemize}
\paragraph{Related works}

In the synchronous setting, the study of moments for error bounds begins with the work   
\cite{evendar2003}, where the authors showed that for polynomial step sizes \(\alpha_k = c_0 / k^{\omega}\), it suffices to take at most  
\(\frac{1}{((1-\gamma)^4\varepsilon)^{1/\omega}}\) iterations to drive the error below \(\varepsilon\).  
A matching upper bound was later obtained in \cite{wainwright2019stochastic} in a worst-case framework using a general  
stochastic approximation approach with cone-contractive operators.  
However, these bounds do not coincide with the minimax lower bound: \cite{azar2013minimaxPAC} demonstrated that 
model-based Q-iteration achieves a sample complexity of order  
\(\frac{1}{(1-\gamma)^3 \varepsilon^2}\), and moreover that this rate is minimax-optimal for any algorithm.

To bridge this gap, \cite{wainwright2019variance} proposed a variance-reduced variant of Q-learning that attains the minimax rate.  
Furthermore, \cite{li2023statistical} showed that Polyak--Ruppert averaged
Q-learning attains the optimal rate under an additional optimality gap
assumption. However, it remained open whether \emph{vanilla} Q-learning itself
can achieve the scaling $\frac{1}{(1-\gamma)^3}$.

This question was resolved recently by \cite{li2024q}, who proved that
Q-learning with a constant step size, or with the linearly rescaled schedule
$\alpha_k = \frac{1}{(1-\gamma)k}$, achieves a sample complexity of order
$\frac{1}{(1-\gamma)^4\varepsilon^2}$. Moreover, they constructed an MDP instance
demonstrating that this bound is unimprovable without optimality gap assumption.
The results of \cite{li2024q} establish convergence rates in both synchronous and asynchronous regimes.

Let us briefly discuss the central limit theorem in $\rset^d$ . Rates of convergence crucially depend on the class of test sets
used to measure distributional distance. For general convex sets and in the i.i.d. setting with $\E[X_1 X_1^\top] = I$, \cite{Bentkus2003} established a dimension-dependent rate of order $ d^{1/4} n^{-1/2} \E \|X_1\|^3$,
which can be loose in high-dimensional regimes.
Motivated by modern high-dimensional inference (e.g., simultaneous confidence regions and
maxima-type functionals), 
many scientists focus 
on hyper-rectangles. In a series of papers, 
\cite{ChernozhukovChetverikovKato2013Gaussian,ChernozhukovChetverikovKato2017aCLTBootstrap, chernozhukov2022improvedcentrallimittheorem} developed high-dimensional  Gaussian approximation theory over hyper-rectangles with
rates that depend only polylogarithmically on $d$. In particular, for $W_n = n^{-1/2} \sum_{i=1}^n X_i$, where $\{X_1,...,X_n\}$
are centered independent random vectors in $\rset^d$ satisfying certain regularity conditions,
\cite{chernozhukov2022improvedcentrallimittheorem} proved the bound of order $n^{-1/4} \log^{5/4}(dn)$. The proofs 
involve smoothing
the maximum function $\max_{1\le j \le d} x_j$ by $F_\beta(x) = \beta^{-1} \log \sum_{j=1}^d e^{\beta x_j}$ for large values of $\beta$.  \cite{10.1214/20-AAP1629} proposed another  method to prove high-dimensional normal
approximations on hyper-rectangles. 
They suggested to combine the approach of \cite{Gotze1991RateConvergenceMultivariateCLT}
in Stein’s method with modifications of an estimate of \cite{AndersonHallTitterington1998Edgeworth} and a smoothing inequality of \cite{BhattacharyaRao1976NormalApproximation}. In particular, this paper suggests to take $\varphi(x, \varepsilon) = \Pb(x + \varepsilon \eta \in A), A \in \mathcal R$, instead of $F_\beta(x)$.  Quantitative CLTs for martingale difference sequences are more delicate than in the i.i.d.\ case,
largely because (i) predictable quadratic variation may be random, (ii) one can't use all the machinery of Stein's method, similar to \cite{10.1214/20-AAP1629}. Instead, one needs to combine Stein's equation with Lindeberg trick. In one dimension, Berry--Esseen
type bounds for martingales in the Kolmogorov distance go back to  \cite{bolthausen1982martingale} (see also \cite{fan2019exact}) , while \cite{rollin2018quantitative}
establishes non-asymptotic bounds in Wasserstein distance. In the multivariate setting, for a class of convex sets the rate of convergence were obtained in \cite{wu2025uncertainty}. \cite{srikant2025ratesconvergencecentrallimit} established bounds in Wasserstein distance. Note that both \cite{wu2025uncertainty} and \cite{srikant2025ratesconvergencecentrallimit} adopt the ideas of \cite{rollin2018quantitative}. \cite{KOJEVNIKOV2022109448} derive Berry--Esseen bounds for vector-valued martingales for
hyper-rectangles, but their result depends on local characteristic of individual summand and can't be directly applied to our setting. Note that result of \cite{KOJEVNIKOV2022109448} is based on smoothing techniques from \cite{10.1214/20-AAP1629} and Lindeberg's trick. To overcome the problem of controlling individual summands we combine approach of \cite{KOJEVNIKOV2022109448} with \cite{rollin2018quantitative}. 

A growing body of work studies quantitative central limit theorems for stochastic
approximation (SA) algorithms. 
For linear SA with i.i.d.\ noise, \cite{samsonov2024gaussian} obtained convergence
rates of order \(n^{-1/4}\) in the Kolmogorov distance, with applications to
temporal-difference learning. These results were subsequently improved in
\cite{butyrin2025improved}, which achieved rates of order \(n^{-1/3}\) for i.i.d.\ linear SA,
and in \cite{wu2024statistical}, which established analogous rates for TD learning.
\cite{samsonov2025statistical} derived Gaussian approximation results for linear SA
with Markovian noise, restricted to one-dimensional projections. In a related direction,
\cite{wu2025uncertainty} established Gaussian approximation results for TD learning
with convergence rate \(n^{-1/4}\) in convex distance.
\cite{kong2025nonasymptotic} established non-asymptotic central limit theorems for
two-time-scale stochastic approximation with martingale noise. More recently,
\cite{butyrin2026gaussian} extended Gaussian approximation results to the case of
Markovian noise and obtained convergence rates of order \(n^{-1/6}\) in convex distance,
with applications to GTD(0) and TDC.
Finally, \cite{liu2025central} established a central limit theorem for Q-learning
with Markovian noise, obtaining convergence rates of order \(n^{-1/6}\) in the
Wasserstein distance.

%% file: notations.tex
\begin{table}[htbp]
\centering
\caption{Notation used throughout the paper}
\label{tab:notation}
\begin{tabular}{ll}
\toprule
\textbf{Notation} & \textbf{Meaning} \\
\midrule
\multicolumn{2}{l}{\textbf{Markov Decision Process}} \\
\midrule
\( \mathcal{M} \) & Markov Decision Process (MDP) \\
\( \cS \) & State space, \( |\cS| = S \) \\
\( \cA \) & Action space, \( |\cA| = A \) \\
\( r(s,a) \) & Reward function \\
\( \MKQ(s'|s,a) \) & Transition kernel of the MDP \( \mathcal{M} \) \\
\(\gamma\) & Discount factor\\
\(  (1-\gamma)^{-1}\) & Horizon\\

\midrule
\multicolumn{2}{l}{\textbf{Policies and Value Functions}} \\
\midrule
\( \pi_b \) & Behaviour policy \\
\( \pi^\star \) & Optimal policy \\
\( Q^\star(s, a) \) & Optimal Q–function \\
\( V^\star(s) \) & Optimal value function \\
\( \mathcal{T} \) & Bellman optimality operator \\

\midrule
\multicolumn{2}{l}{\textbf{Markov Chains and Kernels}} \\
\midrule
\( \mu \) & Stationary distribution of \( \MKQ^{\pi_b} \) \\
\( \mumin \) & \( \min_{(s,a)} \mu(s,a) \) \\
\( \taumix \) & Mixing time of \( \totMKQ \) \\

\midrule
\multicolumn{2}{l}{\textbf{Learning Process}} \\
\midrule
\( \alpha_k \) & Step–size sequence \\
\( Q_t \) & Q–learning iterates \\
\( \bar{Q}_t \) & Polyak–Ruppert averaged iterates \\

\midrule
\multicolumn{2}{l}{\textbf{General Notation}} \\
\midrule
\( \mathbf{1}\{A\} \) & Indicator function of event \(A\) \\
\( \supnorm{\cdot} \) & Supremum norm (vector/matrix) \\
\( \Pb, \PE \) & Probability and expectation \\
\( \operatorname{diag}(X) \) & Diagonal vector of matrix \(X\) \\
\( \e_s \) & One–hot vector corresponding to state \(s\) \\
\(\pois{f}{}\) & Poisson transformation of function \(f\)\\
\bottomrule
\end{tabular}
\end{table}

%% file: noise_decomposition.tex
Recall that the Poisson equation 
\begin{equation}
\label{eq:pois_solution_equation}
\pois{f}{} -  \bar{\MKQ} \pois{f}{} = f - \E_{\bar\mu}[f]\eqsp, 
\end{equation}
under Assumption \Cref{assum:UGE} has a unique solution for any bounded measurable $f$, which is given by the formula 
\[
\pois{f}{} = \sum_{k=0}^{\infty}\left(\bar\MKQ^{k}f - \E_{\bar\mu}[f]\right)\eqsp.
\]
Moreover, using  \Cref{assum:UGE}, we obtain that $\pois{f}{}$ is also bounded with
\begin{align}
\label{eq:markov:norm_bound_tmix}
\supnorm{\pois{f}{}} &= \supnorm{\sum_{k=0}^{\infty}\totMKQ^k f - \E_{\bar\mu}[f]} \leq \sum_{k=0}^{\infty}  \supnorm{\totMKQ^{k}f - \E_{\bar\mu}[f]}\\
&\leq \supnorm{f}\sum_{k=0}^\infty \supnorm{\totMKQ^k - \vec{1}\cdot \bar\mu} \leq \supnorm{f} \sum_{k=0}^{+\infty} (1/4)^{\lfloor k/\taumix \rfloor} \leq (4/3) \taumix \supnorm{f}\eqsp.
\end{align}
We note that the Poisson equation can be considered both for vector-valued and matrix-valued functions, and estimate~\eqref{eq:markov:norm_bound_tmix} holds in both cases.
Throughout this paper, we use a shorthand notation
\begin{equation}
\label{eq:pois_solution_notation}
\pois{f}{k} := \pois{f}{}(\bar z_k) \eqsp.
\end{equation}
The next step is to decompose the noise via the Poisson equation as
\(
    \xi_t = \xi_t^{(0)} + \xi_t^{(1)}\eqsp,
\)
where the first component forms a martingale difference sequence, and the second one is a small remainder
\begin{align}
    \begin{cases}\label{eq:xi_decompose}
        \xi_{t}^{(0)} = (\pois{\varepsilon}{t} - \totMKQ \pois{\varepsilon}{t-1}) + \gamma(\pois{\bMKQ}{t} - \totMKQ \pois{\bMKQ}{t-1})(V_t-V^\star) - (\pois{\Lam}{t} - \totMKQ \pois{\Lam}{t-1})(Q_t-Q^\star)\eqsp,\\[0.75em]
        \xi_{t}^{(1)} = (\totMKQ \pois{\varepsilon}{t-1} - \totMKQ \pois{\varepsilon}{t}) - \gamma(\totMKQ \pois{\bMKQ}{t} - \totMKQ \pois{\bMKQ}{t-1})(V_t- V^\star) + (\totMKQ \pois{\Lam}{t} - \totMKQ \pois{\Lam}{t-1})(Q_t - Q^\star)\eqsp.
    \end{cases}
\end{align}
\begin{remark}\label{rem:preliminary}
    To ensure that the terms \(\xi_0^{(0)}\) and \(\xi_0^{(1)}\) in decomposition \eqref{eq:xi_decompose} are well-defined, we introduce a preliminary term \(\bar z_{-1} = (s_{-1}, a_{-1}, s_0)\), where \(s_{-1} \sim \nu(s_{-1})\) is drawn from the initial state distribution. Note that \(X_{-1}\) is not used in the subsequent algorithmic updates. We also define \(Q_{-1} := Q_0\) and \(V_{-1}:= V_0\).
\end{remark}
Once we split noise, the following decomposition holds
\begin{align}\label{eq:Delta_1_decomposition}
    \Delta_{t+1}^{(1)} = \Gamma_{0:t}\Delta_{0} + \sum_{j=0}^{t}\alpha_j\Gamma_{j+1:t}\xi_j^{(0)}  + \sum_{j=0}^{t}\alpha_j\Gamma_{j+1:t}\xi_j^{(1)}\eqsp.
\end{align}
Next, we derive uniform supremum-norm bounds for the terms appearing in the decomposition \eqref{eq:xi_decompose}.
A classical property of Q-learning is the stability of its iterates. If $0 \le Q_0 \le \frac{1}{1-\gamma}$, then for all $t \ge 0$,
\begin{equation}\label{eq_q_learning_stability}
    \supnorm{V_t-V^\star}\leq \supnorm{Q_t- Q^\star} \leq \frac{1}{1-\gamma}\eqsp.
\end{equation}
Indeed, assume that \eqref{eq_q_learning_stability} holds for iteration \(t\) and verify it for \(t+1\).  
By the update rule \eqref{eq:q_learning_updates},
\begin{align}
    |Q_{t+1}(s_t,a_t) |
    &\leq (1-\alpha_t)|Q_t(s_t,a_t)|
      + \alpha_t |r_t + \gamma \max_{a\in\cA} Q_t(s_{t+1},a)| \\
      &\leq (1-\alpha_t)(1-\gamma)^{-1} + \alpha_t\big(1 + \gamma (1-\gamma)^{-1}\big) = (1-\gamma)^{-1}\eqsp.
\end{align}
A similar argument shows that \(Q_t \ge 0\) for all \(t\).  
Together with the bound \(0 \le Q^\star \le\frac{1}{1-\gamma}\), this yields the desired conclusion.
Moreover, the greedy property of \(V_t\) implies
\begin{align}
    \supnorm{V_t - V^\star} &= \max_{s\in\cS}|V_t(s) - V^\star(s)| =\max_{s\in\cS}|\max_{a\in\cA}Q_t(s,a) - \max_{a\in\cA}Q^\star(s,a)| \\
    &\leq \max_{s, a}|Q_t(s, a) - Q^\star(s,a)| = \supnorm{Q_t - Q^\star}\eqsp.
\end{align}
Applying \eqref{eq:markov:norm_bound_tmix} together with \eqref{eq_q_learning_stability} to each term in the martingale component $\xi_{t+1}^{(0)}$, we obtain the uniform bounds
\begin{align}\label{eq:broot_bouns}
&\|\bMKQ_{t+1}\|_\infty \le 1 \eqsp, 
\qquad 
\|\pois{\bMKQ}{t+1}\|_\infty \le 2\taumix \eqsp, 
\qquad 
\bigl\|(\pois{\bMKQ}{t+1}-\totMKQ\pois{\bMKQ}{t})(V_t-V^\star)\bigr\|_\infty 
\le \frac{4\taumix}{1-\gamma} \eqsp,\\
&\|\Lam_{t+1}\|_\infty \le 1 \eqsp, 
\qquad 
\|\pois{\Lam}{t+1}\|_\infty \le 2\taumix \eqsp, 
\qquad 
\bigl\|(\pois{\Lam}{t+1}-\totMKQ\pois{\Lam}{t})(Q_t-Q^\star)\bigr\|_\infty 
\le \frac{4\taumix}{1-\gamma} \eqsp,\\
&\|\beps_{t+1}\|_\infty \le \frac{2}{1-\gamma} \eqsp, 
\qquad 
\|\pois{\beps}{t+1}\|_\infty \le \frac{4\taumix}{1-\gamma} \eqsp, 
\qquad 
\|\pois{\beps}{t+1}-\totMKQ\pois{\beps}{t}\|_\infty 
\le \frac{8\taumix}{1-\gamma} \eqsp .
\end{align}

The same type of bounds holds for the Markovian noise component \( \xi_{t+1}^{(1)} \)
\begin{align}\label{eq:broot_bouns_2}
   &\bigl\|\totMKQ \pois{\varepsilon}{t} - \totMKQ \pois{\varepsilon}{t+1}\bigr\|_\infty
\le \frac{8\,\taumix}{1-\gamma} \eqsp,\\[0.3em]
&\bigl\|(\totMKQ \pois{\bMKQ}{t+1} - \totMKQ \pois{\bMKQ}{t})(V_t- V^\star)\bigr\|_\infty
\le \frac{4\,\taumix}{1-\gamma} \eqsp,\\[0.3em]
&\bigl\|(\totMKQ \pois{\Lam}{t+1} - \totMKQ \pois{\Lam}{t})(Q_t - Q^\star)\bigr\|_\infty
\le \frac{4\,\taumix}{1-\gamma} \eqsp .
\end{align}
The following lemma establishes moment bounds for the martingale component.
\begin{lemma}
\label{lem:xi_0_moments}
Assume \Cref{assum:regularity}-\Cref{assum:steps}. Then it holds that
\begin{align}
\E^{1/p}[\supnorm{\sum_{j=0}^t\alpha_j\Gamma_{j+1:t}\xi_j^{(0)}}^p] &\lesssim \frac{\alpha_t^{1/2}p\taumix\log(2dt^2)}{\sqrt{(1-\gamma)^{3}{\mumin}}}  \eqsp.
\end{align}
\end{lemma}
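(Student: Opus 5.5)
The plan is a Burkholder-type bound for the martingale $M_t=\sum_{j=0}^t\alpha_j\Gamma_{j+1:t}\xi_j^{(0)}$, applied coordinatewise and followed by a union bound over the $d=SA$ coordinates.

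Fix $t$. Since $\Gamma_{j+1:t}$ is deterministic and $\xi_j^{(0)}$ is a martingale difference with respect to $\mathcal F_j=\sigma(\bar z_{-1},\dots,\bar z_j)$, the partial sums $\{\sum_{j\le m}\alpha_j\Gamma_{j+1:t}\xi_j^{(0)}\}_{m\le t}$ form a vector martingale. Two facts make this work.
\begin{itemize}
\item $Q_j$ and $V_j$ are $\mathcal F_{j-1}$-measurable.
\item $\E[\pois{f}{j}-\totMKQ\pois{f}{j-1}\mid\mathcal F_{j-1}]=0$ for $f\in\{\beps,\bMKQ,\Lam\}$, by the Markov property.
\end{itemize}

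\textbf{Step 1: per-coordinate bound.} For each coordinate $i\in\{1,\dots,d\}$, the scalar martingale $\e_i^\top M_t$ has increments bounded by
\[
|\alpha_j\e_i^\top\Gamma_{j+1:t}\xi_j^{(0)}|\le\alpha_j\supnorm{\Gamma_{j+1:t}}\supnorm{\xi_j^{(0)}}\le \alpha_jP_{j+1:t}\,\frac{16\taumix}{1-\gamma},
\]
using the row-sum bound $\supnorm{\Gamma_{j+1:t}}\le P_{j+1:t}$ and the uniform bounds \eqref{eq:broot_bouns}. The constant $16\taumix/(1-\gamma)$ collects $8\taumix/(1-\gamma)$ from the $\beps$ term and $4\taumix/(1-\gamma)$ from each of the other two terms. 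Applying \Cref{lem:bdg_2} (Burkholder with constant $\lesssim p$), or Azuma–Hoeffding for bounded increments, gives
\[
\E^{1/q}|\e_i^\top M_t|^q\lesssim q\,\frac{\taumix}{1-\gamma}\Big(\sum_{j=0}^t\alpha_j^2P_{j+1:t}^2\Big)^{1/2}
\]
for every $q\ge1$.

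\textbf{Step 2: the step-size sum.} I would invoke the step-size facts of \Cref{appendix:step_size}, in the spirit of \Cref{lem:P_alpha_ineq}, to obtain
\[
\sum_{j=0}^t\alpha_j^2P_{j+1:t}^2\lesssim\frac{\alpha_t}{\mumin(1-\gamma)}.
\]
The heuristic is that $\alpha_j\approx\alpha_t$ over the effective window of length $(\alpha_t\mumin(1-\gamma))^{-1}$. The condition \eqref{eq:k_0_lower_bound} on $k_0$ ensures $\alpha_j/\alpha_{j+1}\le 1+\tfrac12\alpha_j\mumin(1-\gamma)$, which is what the standard comparison $\sum_j\alpha_j^2P_{j+1:t}\lesssim\alpha_t/(\mumin(1-\gamma))$ needs. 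Plugging this into Step 1 yields the factor $\alpha_t^{1/2}\taumix/\sqrt{(1-\gamma)^3\mumin}$ appearing in the statement.

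\textbf{Step 3: union over coordinates.} This is where the main obstacle lies, namely obtaining $p\log(2dt^2)$ rather than $d^{1/p}$ from the maximum. I would bound
\[
\E^{1/p}\supnorm{M_t}^p\le\E^{1/q}\Big[\sum_{i=1}^d|\e_i^\top M_t|^q\Big]\le d^{1/q}\max_i\E^{1/q}|\e_i^\top M_t|^q
\]
with $q=\max(p,\log(2dt^2))$, so that $d^{1/q}\le e$. The resulting factor is $q\le p\log(2dt^2)$ for $p\ge1$, since $\log(2dt^2)\ge\log 2$ and both quantities are at least of constant order. This matches the claimed $p\log(2dt^2)$. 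The $t^2$ inside the logarithm is harmless and is presumably kept for later uniform-in-$t$ union bounds.

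Finally, care is needed at $j=0$ with the preliminary state from \Cref{rem:preliminary}, but the same bounds apply there, so it is routine.
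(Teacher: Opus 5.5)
Your proof is correct. Its skeleton matches the paper's:
\begin{itemize}
\item the martingale-difference property of $\xi_j^{(0)}$;
\item the almost-sure bound $\supnorm{\xi_j^{(0)}}\le 16\taumix/(1-\gamma)$;
\item the row-sum bound $\supnorm{\Gamma_{j+1:t}}\le P_{j+1:t}$;
\item the same step-size estimate.
\end{itemize}
For that estimate, the lemma you want is \Cref{lem:rate_of_convergence} with $q=2$, after using $P_{j+1:t}^2\le P_{j+1:t}$, rather than \Cref{lem:P_alpha_ineq}.

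The real difference is how you handle the maximum over coordinates. The paper applies \Cref{lem:bdg_2} directly to the $d$-dimensional sum, with $\varkappa=16\alpha_t\taumix/(1-\gamma)$ obtained from \Cref{lem:P_alpha_ineq}. The factor $p\log(2dt^2)$ is built into that lemma through a Freedman-type union bound, so no separate union step is needed. You instead bound each coordinate with a scalar inequality and then raise the moment order to $q=\max(p,\log(2dt^2))$, so that $d^{1/q}\le e$. This is elementary and self-contained. With Azuma--Hoeffding and the deterministic bounds $c_j=16\alpha_jP_{j+1:t}\taumix/(1-\gamma)$, it even gives the sharper factor $\sqrt{q}\lesssim\sqrt{p}+\sqrt{\log(2dt^2)}$ and no additive $\varkappa$ term.

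One caution: in Step 1, commit to Azuma (or a classical scalar Burkholder inequality), not \Cref{lem:bdg_2} in dimension one. That lemma carries its own $\log(2t^2)$ factor and an extra $\varkappa$ term, so combined with Step 3 it would cost an additional logarithm. If you want to use \Cref{lem:bdg_2}, apply it to the vector sum and drop Step 3.

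What the paper's tool buys is generality. \Cref{lem:bdg_2} controls the variance through $\E^{2/p}[\supnorm{X_j}^p]$, not deterministic increment bounds. This matters later for $\mathcal{T}_4$, where the increments are small only in $L^p$ through $\supnorm{Q_t-Q^\star}$, so Azuma alone would not suffice. For the present lemma, deterministic bounds are enough and your route loses nothing.
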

\begin{proof}
    Denote by \( \mathcal{H}_j = \sigma(\bar z_0, \ldots, \bar z_j) \)  
the sigma-algebra generated by all observations up to time \( j \). The approximations \(Q_j\) and \(V_j\) are \(\mathcal{H}_{j-1}\)-measurable. Moreover, by construction, the sequence \(\xi_j^{(0)}\) forms a martingale difference sequence with respect to the filtration \(\{\mathcal{H}_j\}_{j=0}^{t-1}\): \(\E[\xi_j^{(0)}\mid \mathcal{H}_{j-1}] = 0\) . By definition \(\xi_t^{(0)}\) and using \eqref{eq:broot_bouns} we conclude that
\begin{align}
     \supnorm{\xi_{t}^{(0)}} &\leq \supnorm{\pois{\varepsilon}{t} - \bar\MKQ \pois{\varepsilon}{t-1}} + \gamma\supnorm{(\pois{\bMKQ}{t} - \bar\MKQ \pois{\bMKQ}{t-1})(V_t-V^\star)} \\&
     + \supnorm{(\pois{\Lam}{t} - \bar\MKQ \pois{\Lam}{t-1})(Q_t-Q^\star) } 
     \leq \frac{16\taumix}{1-\gamma}\eqsp.
\end{align}
Combining this bound with \Cref{lem:P_alpha_ineq}, we obtain the following uniform bound \[\supnorm{\alpha_j\Gamma_{j+1:t}\xi_j^{(0)}} \leq \frac{16\alpha_t\taumix}{1-\gamma}\eqsp.\]
Then, by Lemma \ref{lem:bdg_2}
\begin{align}\label{eq:mds_lem_2}
\E^{1/p} [\supnorm{\sum_{j=0}^t\alpha_j\Gamma_{j+1:t}\xi_j^{(0)}}^p] &\lesssim p\log(2dt^2)\,\Big(\frac{\alpha_t\taumix}{1-\gamma} + \Big(\sum_{j=0}^{t} \E^{2/p}[\supnorm{\alpha_j\Gamma_{j+1:t}\xi_j^{(0)}}] \Big)^{1/2}\Big)\\
 &\lesssim p\taumix\log(2dt^2)\,\Big(\sum_{j=0}^{t}\alpha_j^2P_{j+1:t}^2 (1-\gamma)^{-2} \Big)^{1/2}\\
&\lesssim \frac{\alpha_t^{1/2}p\taumix\log(2dt^2)}{\sqrt{(1-\gamma)^{3}{\mumin}}} \eqsp,
    \end{align}
    where the last inequality follows from \Cref{lem:rate_of_convergence}.
\end{proof}
The lemma below establishes uniform moment bounds for the Markovian component.
\begin{lemma}
\label{lem:xi_1_moments}
Assume \Cref{assum:regularity}-\Cref{assum:steps}. Then it holds that
\begin{equation}
\E^{1/p} [\supnorm{\sum_{j=0}^t\alpha_j\Gamma_{j+1:t}\xi_j^{(1)}}^p] \lesssim \frac{\alpha_t \taumix}{(1-\gamma)^{2}\mumin}\eqsp.
\end{equation}
\end{lemma}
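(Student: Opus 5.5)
The plan is to treat $\xi_j^{(1)}$ as a telescoping remainder and apply summation by parts (Abel transformation). The only randomness left after that is the slowly varying factor $Q_j - Q^\star$. Its increments are deterministically of order $\alpha_j$, so no martingale argument is needed: every bound will be almost sure, and the $L^p$ bound follows trivially for all $p$.

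\textbf{Step 1: write the remainder as a telescoping difference.} Write $\xi_j^{(1)} = u_{j-1} - u_j + \rho_j$, where
\[
u_j = \totMKQ\pois{\varepsilon}{j} + \gamma\totMKQ\pois{\bMKQ}{j}(V_{j}-V^\star) - \totMKQ\pois{\Lam}{j}(Q_{j}-Q^\star)\eqsp.
\]
The correction $\rho_j$ collects the terms produced by shifting the index of $V_j - V^\star$ and $Q_j - Q^\star$ from $j$ to $j-1$:
\[
\rho_j = \gamma\totMKQ\pois{\bMKQ}{j-1}\bigl((V_j-V^\star)-(V_{j-1}-V^\star)\bigr) - \totMKQ\pois{\Lam}{j-1}(Q_j-Q_{j-1})\eqsp.
\]
By the update rule \eqref{eq:update_rule} and the stability bound \eqref{eq_q_learning_stability}, $\supnorm{Q_j-Q_{j-1}}\le 2\alpha_{j-1}/(1-\gamma)$, and the same holds for $V$. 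Combined with $\supnorm{\pois{\bMKQ}{}},\supnorm{\pois{\Lam}{}}\le 2\taumix$, this gives
\[
\supnorm{\rho_j}\lesssim \frac{\taumix\alpha_{j-1}}{1-\gamma}\eqsp,
\]
and \eqref{eq:broot_bouns} gives $\supnorm{u_j}\lesssim \taumix/(1-\gamma)$. Remark~\ref{rem:preliminary} makes $j=0$ well defined.

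\textbf{Step 2: Abel summation.} Write $b_j=\alpha_j\Gamma_{j+1:t}$. Then
\[
\sum_{j=0}^t b_j(u_{j-1}-u_j) = b_0u_{-1} - b_tu_t + \sum_{j=0}^{t-1}(b_{j+1}-b_j)u_j\eqsp.
\]
\begin{itemize}
\item The boundary terms are bounded by $\alpha_t\taumix/(1-\gamma)$ plus $\alpha_0P_{1:t}\taumix/(1-\gamma)$. The second of these is $\lesssim \alpha_t\cdot\taumix/(1-\gamma)$ by \Cref{lem:P_alpha_ineq}.
\item For the increments, $b_{j+1}-b_j = \Gamma_{j+2:t}\bigl[(\alpha_{j+1}-\alpha_j)I + \alpha_j\alpha_{j+1}D_\mu(I-\gamma\MKQ^{\pi^\star})\bigr]$. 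Its sup norm is at most $P_{j+2:t}\bigl(|\alpha_j-\alpha_{j+1}| + 2\alpha_j^2\bigr)$. Under \Cref{assum:steps}, $|\alpha_j-\alpha_{j+1}|\lesssim \alpha_j/(j+k_0)\lesssim \alpha_j^2$, since $\omega<1$ and $k_0$ is large.
\end{itemize}
Hence $\supnorm{b_{j+1}-b_j}\lesssim \alpha_j^2P_{j+2:t}$, and the sum is controlled by
\[
\frac{\taumix}{1-\gamma}\sum_{j}\alpha_j^2P_{j+2:t}\eqsp.
\]

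\textbf{Step 3: sum the weights.} The correction terms contribute $\sum_j\alpha_jP_{j+1:t}\supnorm{\rho_j}\lesssim \frac{\taumix}{1-\gamma}\sum_j\alpha_j\alpha_{j-1}P_{j+1:t}$. The key estimate is
\[
\sum_{j\le t}\alpha_j^2P_{j+1:t}\lesssim \frac{\alpha_t}{(1-\gamma)\mumin}\eqsp.
\]
This is the step-size fact from \Cref{lem:rate_of_convergence} (or \Cref{appendix:step_size}) that was already used in \Cref{lem:xi_0_moments}, and it absorbs the shift $P_{j+2:t}$ versus $P_{j+1:t}$ and $\alpha_{j-1}\asymp\alpha_j$. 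Collecting the terms gives almost surely
\[
\supnorm{\sum_{j=0}^t\alpha_j\Gamma_{j+1:t}\xi_j^{(1)}}\lesssim \frac{\alpha_t\taumix}{(1-\gamma)^2\mumin}\eqsp,
\]
which yields the claim for every $p$.

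The main obstacle is the careful bookkeeping in Step 3, specifically verifying $\sum_j\alpha_j^2P_{j+1:t}\lesssim\alpha_t/((1-\gamma)\mumin)$ uniformly in $t$. This is where the lower bound \eqref{eq:k_0_lower_bound} on $k_0$ is needed, because it guarantees that $\alpha_j/\alpha_{j+1}\le 1+\tfrac12\alpha_{j+1}\mumin(1-\gamma)$. If that lemma is not available in exactly this form, I would prove it by induction on $t$, using this ratio bound.
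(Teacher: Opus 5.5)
Your proposal is correct and follows the paper's proof essentially step for step. Both use the same splitting of $\xi_j^{(1)}$ into a telescoping part plus the index-shift corrections: your $u_j$ is the paper's $-v_j$, and your $\rho_j$ is its $\gamma\totMKQ\pois{\bMKQ}{j-1}(V_j-V_{j-1})-\totMKQ\pois{\Lam}{j-1}(Q_j-Q_{j-1})$. Both then apply the same Abel summation (\Cref{lem:telescope}) and the bound $\alpha_j-\alpha_{j+1}\lesssim\alpha_j^2$, and finish with \Cref{lem:rate_of_convergence} to sum $\alpha_j^2P_{j+1:t}$. Your observation that all bounds hold almost surely is only a cleaner phrasing of the paper's Minkowski step.
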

\begin{proof}
To facilitate the application of \Cref{lem:telescope}, we introduce the following sequence for \(t \geq -1\)
\begin{equation}\label{eq:v_definition}
    v_t = -\totMKQ\pois{\beps}{t} - \gamma\totMKQ\pois{\bMKQ}{t}(V_t-V^\star) + \totMKQ\pois{\Lam}{t}(Q_t-Q^\star)\eqsp.
\end{equation}
In accordance with \Cref{rem:preliminary}, the initial conditions are defined as \(V_{-1} = V_0\) and \(Q_{-1} = Q_0\), ensuring that \(v_{-1}\) is well-defined. Recall definition of the remainder term
\begin{equation}
    \xi_{t}^{(1)} = (\bar\MKQ \pois{\varepsilon}{t-1} - \totMKQ \pois{\varepsilon}{t}) - \gamma(\bar\MKQ \pois{\bMKQ}{t} - \totMKQ \pois{\bMKQ}{t-1})(V_t- V^\star) + (\bar\MKQ \pois{\Lam}{t} - \totMKQ \pois{\Lam}{t-1})(Q_t - Q^\star)\eqsp.
\end{equation}
Decompose \(\xi_{j}^{(1)}\) for \(j\geq 0\) as follows
\begin{align}
    \xi_j^{(1)} &= (v_j - v_{j-1}) - \gamma\totMKQ\pois{\bMKQ}{j-1}(V_{j-1} - V^\star) + \gamma\totMKQ\pois{\bMKQ}{j-1}(V_{j} - V^\star) \\
    &-\totMKQ\pois{\Lam}{j-1}(Q_j - Q^\star) + \totMKQ\pois{\Lam}{j-1}(Q_{j-1} - Q^\star) \\
    &= (v_j - v_{j-1}) + \gamma \totMKQ\pois{\bMKQ}{j-1}(V_{j} - V_{j-1}) - \totMKQ\pois{\Lam}{j-1}(Q_j - Q_{j-1})\eqsp.
\end{align}
From \eqref{eq_q_learning_stability} and  \eqref{eq:broot_bouns_2} follows that \(\E^{1/p}[\supnorm{v_j}^p] \lesssim \frac{\taumix}{1-\gamma}  \) and \( \E^{1/p}[\supnorm{\xi_j}^p] \lesssim\frac{1}{1-\gamma}  \). From the update rule specified in \eqref{eq:update_rule} and \eqref{eq_q_learning_stability}, we derive the following bound
\begin{align}
    \E^{1/p}[\supnorm{V_{j+1} - V_{j}}^p] &\lesssim\E^{1/p}[\supnorm{Q_{j+1} - Q_{j}}^p]\\
    &\lesssim \alpha_{j} \big(\E^{1/p}[\supnorm{\xi_{j}}^p] + \E^{1/p}[\supnorm{r + \gamma \MKQ V_t}^p \big) \\
    &\lesssim \alpha_{j} \big(\E^{1/p}[\supnorm{\xi_{j}}^p] + \E^{1/p}[\supnorm{(Q_j - Q^\star) + \gamma\MKQ(V_t-V^\star)}^p \big) \\
    &\lesssim \frac{\alpha_j}{(1-\gamma)} \eqsp,
\end{align}
Next we substitute \eqref{eq:v_definition} in summation and use \Cref{lem:telescope}:
\begin{align}\label{eq:decomposition_of_remain}
\sum_{j=0}^{t}\alpha_j\Gamma_{j+1:t}\xi_{j}^{(1)} &= \sum_{j=0}^{t}\alpha_j\Gamma_{j+1:t}\left((v_j - v_{j-1}) + \gamma \totMKQ\pois{\bMKQ}{j-1}(V_{j} - V_{j-1}) - \totMKQ\pois{\Lam}{j-1}(Q_j - Q_{j-1})\right) \\
&= \alpha_tv_t - \alpha_0P_{1:t}v_{-1} + \sum_{j=0}^{t-1}\big((\alpha_j - \alpha_{j+1}) - \alpha_j\alpha_{j+1}D_{\mu}(I - \gamma\MKQ^{\pi^\star})\big)\Gamma_{j+2:t}v_j\\
&+ \sum_{j=0}^{t}\alpha_j\Gamma_{j+1:t}\left( \gamma \totMKQ\pois{\bMKQ}{j-1}(V_{j} - V_{j-1}) - \totMKQ\pois{\Lam}{j-1}(Q_j - Q_{j-1})\right) \eqsp.
\end{align}
Applying Minkowski’s inequality to \eqref{eq:decomposition_of_remain} and using the fact that  
\(\alpha_j - \alpha_{j+1} \leq \alpha_j^2\) (cf. \Cref{lem:alpha_delta}), we obtain
    \begin{align}
        \E^{1/p}[\|\sum_{j=0}^{t}\alpha_j\Gamma_{j+1:t}\xi_j^{(1)}\|^p] &\lesssim \frac{\alpha_t\taumix}{1-\gamma}  + \frac{\taumix}{1-\gamma} \sum_{j=1}^t \alpha_j^2P_{j+1:t} \lesssim \frac{\alpha_t\taumix}{(1-\gamma)^2\mumin}\eqsp,
    \end{align}
where in the last step we use \Cref{lem:rate_of_convergence}
\end{proof}

Now we combine results of \Cref{lem:xi_0_moments} and \Cref{lem:xi_1_moments}:
\begin{lemma}
\label{lem:Delta_1_union}
Assume \Cref{assum:regularity}-\Cref{assum:steps}. Then it holds that
\begin{align}
\E^{1/p}[\|\Delta_{t+1}^{(1)}\|^p] \lesssim \frac{\alpha_t^{1/2}p\taumix\log(2dt^2)}{\sqrt{(1-\gamma)^{3}\mumin} }\eqsp,
\end{align}
\end{lemma}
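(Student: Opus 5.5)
We start from the decomposition \eqref{eq:Delta_1_decomposition} and apply Minkowski's inequality in $L^p$ to its three terms:
\[
\E^{1/p}[\|\Delta_{t+1}^{(1)}\|_\infty^p]\le \|\Gamma_{0:t}\Delta_0\|_\infty+\E^{1/p}\Big[\Big\|\sum_{j=0}^t\alpha_j\Gamma_{j+1:t}\xi_j^{(0)}\Big\|_\infty^p\Big]+\E^{1/p}\Big[\Big\|\sum_{j=0}^t\alpha_j\Gamma_{j+1:t}\xi_j^{(1)}\Big\|_\infty^p\Big].
\]
The second term is handled directly by \Cref{lem:xi_0_moments}. It is already of the target order $\alpha_t^{1/2}p\taumix\log(2dt^2)/\sqrt{(1-\gamma)^3\mumin}$.

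The third term is bounded by \Cref{lem:xi_1_moments} as $\alpha_t\taumix/((1-\gamma)^2\mumin)$. To absorb it into the main term, it suffices to show
\[
\frac{\alpha_t}{(1-\gamma)^2\mumin}\lesssim \frac{\alpha_t^{1/2}}{\sqrt{(1-\gamma)^3\mumin}},\quad\text{i.e.}\quad \alpha_t^{1/2}\lesssim \sqrt{(1-\gamma)\mumin}.
\]
This is where \Cref{assum:steps} enters. Since $\alpha_t\le\alpha_0=c_0k_0^{-\omega}$ and $k_0^{1-\omega}\asymp 1/((1-\gamma)\mumin c_0)$, one checks that $\alpha_0\lesssim(1-\gamma)\mumin$; I would verify this carefully, normalizing $c_0$ if needed. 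The remaining factors cost nothing: $p\ge1$ and $\log(2dt^2)\ge\log 2$, so they can be inserted freely.

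For the first term, the row-sum bound gives $\|\Gamma_{0:t}\Delta_0\|_\infty\le P_{0:t}\|\Delta_0\|_\infty\le P_{0:t}/(1-\gamma)$ by \eqref{eq_q_learning_stability}. I would then invoke \Cref{lem:P_alpha_ineq} (the exponential decay of $P_{0:t}$ versus the polynomial $\alpha_t$) to get $P_{0:t}\lesssim\alpha_t$, possibly with a factor $1/((1-\gamma)\mumin)$. This yields a bound $\lesssim\alpha_t/((1-\gamma)^2\mumin)$, which is absorbed exactly as the Markovian term.

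The main obstacle is bookkeeping of problem constants in these absorption steps. First, the $k_0$ condition must guarantee $\alpha_t\lesssim(1-\gamma)\mumin$ uniformly in $t$. Second, the transient term must not carry extra powers of $(1-\gamma)^{-1}$ or $\mumin^{-1}$ beyond what this comparison allows. If \Cref{lem:P_alpha_ineq} gives a worse constant, I would fall back on $P_{0:t}\le\exp(-(1-\gamma)\mumin\sum_{j\le t}\alpha_j)$. Since $\sum_{j\le t}\alpha_j\gtrsim t^{1-\omega}$ dominates $\log(1/\alpha_t)$ scaled by $1/((1-\gamma)\mumin)$, this again gives $P_{0:t}\lesssim\alpha_t^{1/2}\sqrt{(1-\gamma)\mumin}$.
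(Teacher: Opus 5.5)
Your plan follows the paper's proof step for step: Minkowski on \eqref{eq:Delta_1_decomposition}, then \Cref{lem:xi_0_moments} for the martingale part. Next, \Cref{lem:xi_1_moments} together with $\alpha_t\le\alpha_0\lesssim(1-\gamma)\mumin$ (after normalizing $c_0$) handles the Markovian part, and \Cref{lem:P_alpha_ineq} the transient part. Given the two moment lemmas, the martingale and Markovian parts are handled correctly. The transient step, however, fails, and it fails in the same way in the paper's proof.

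With $j=0$, \Cref{lem:P_alpha_ineq} gives only $P_{0:t}\le\alpha_t/\alpha_0$, hence $\supnorm{\Gamma_{0:t}\Delta_0}\le\alpha_t/(\alpha_0(1-\gamma))$. Replacing $1/\alpha_0$ by $1/((1-\gamma)\mumin)$ requires $\alpha_0\gtrsim(1-\gamma)\mumin$; this is exactly the paper's step $\frac{\alpha_t}{\alpha_0(1-\gamma)}\lesssim\frac{\alpha_t}{(1-\gamma)^2\mumin}$. It is the reverse of the inequality you use for the absorption. Under \Cref{assum:steps} with fixed $c_0$ and $b=(1-\gamma)\mumin$, one actually has $\alpha_0=c_0k_0^{-\omega}\asymp c_0^{1/(1-\omega)}b^{\omega/(1-\omega)}\ll b$, since $\omega/(1-\omega)>1$.

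Your fallback does not rescue this. Because $bc_0k_0^{1-\omega}\asymp 1$, you get $b\sum_{j\le t}\alpha_j\le b(t+1)\alpha_0\lesssim (t+1)/k_0$. So $P_{0:t}\ge 1/2$ for all $t$ up to a constant multiple of $k_0$. The claim that $\sum_{j\le t}\alpha_j$ dominates $\log(1/\alpha_t)/b$ holds only after a burn-in of order $k_0$ (up to logarithms).

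This is not a bookkeeping issue: the displayed inequality is false for small $t$. Take $S=A=1$, $r\equiv 1$, $Q_0=0$. Then $\xi_t\equiv 0$, $d=\taumix=\mumin=1$, and $\Delta^{(1)}_{t+1}=-(1-\gamma)^{-1}\prod_{j=0}^{t}(1-\alpha_j(1-\gamma))$. Choose $\omega=2/3$, $c_0=1$, $k_0\asymp(1-\gamma)^{-3}$, $t=1$, $p=1$. The left-hand side is at least $(4(1-\gamma))^{-1}$, while the right-hand side is of order $\alpha_1^{1/2}(1-\gamma)^{-3/2}\asymp(1-\gamma)^{-1/2}$.

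What your argument (and the paper's) actually yields, under $\alpha_0\lesssim(1-\gamma)\mumin$, is
\[
\E^{1/p}[\supnorm{\Delta^{(1)}_{t+1}}^p]\lesssim\frac{P_{0:t}}{1-\gamma}+\frac{\alpha_t^{1/2}p\taumix\log(2dt^2)}{\sqrt{(1-\gamma)^3\mumin}}\eqsp.
\]
To get a correct statement you have three options. You can keep this transient term explicitly and carry it into \Cref{th:moments}. You can restrict to $t$ beyond a burn-in. Or you can tie $c_0$ to $(1-\gamma)\mumin$ so that $\alpha_0\asymp(1-\gamma)\mumin$; then both inequalities on $\alpha_0$ hold and your absorption closes.
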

\begin{proof}
We apply Mikowsi inequality to decomposition \eqref{eq:Delta_1_decomposition} and use 
 \Cref{lem:xi_0_moments} and \Cref{lem:xi_1_moments}:
 \begin{align}
 \E^{1/p}[\supnorm{\Delta_{t+1}^{(1)}}^p] &\leq
     \E^{1/p}[\supnorm{\Gamma_{0:t}\,\Delta_0}^p] + \E^{1/p}[\supnorm{\sum_{j=0}^{t} \alpha_j\,\Gamma_{j+1:t}\,\xi_{j}^{(0)}}^p] + \E^{1/p}[\supnorm{\sum_{j=0}^{t} \alpha_j\,\Gamma_{j+1:t}\,\xi_{j}^{(1)}}^p]\\
     &\lesssim\E^{1/p}[\supnorm{\Gamma_{0:t}\,\Delta_0}^p]  + \frac{\alpha_t^{1/2}p\taumix\log(2dt^2)}{\sqrt{(1-\gamma)^{3}{\mumin}}}  + \frac{\alpha_t \taumix}{(1-\gamma)^{2}\mumin}\eqsp.
\end{align} Finally note that by \Cref{lem:P_alpha_ineq}
\[\E^{1/p}[\supnorm{\Gamma_{0:t}\,\Delta_0}^p]  \lesssim \frac{P_{0:t}}{1-\gamma} \lesssim \frac{\alpha_t}{\alpha_0(1-\gamma)}\lesssim\frac{\alpha_t}{(1-\gamma)^2\mumin}\eqsp.\]
The claim follows from the monotonicity of the stepsizes $\alpha_t \le \alpha_0$
together with \Cref{assum:steps} which implies \(k_0^{-\omega}\leq k_{0}^{\omega-1}\lesssim(1-\gamma)\mumin\).
\end{proof}

%% file: moments_estimation.tex
Above we bound moments of the sandwich lower bound \(\E^{1/p}[\|\Delta_j^{(1)}\|^p]\). The following lemma derived a bound on the true error moments using  \(\E^{1/p}[\|\Delta_j^{(1)}\|^p]\).
\begin{lemma}
\label{th:moments}
Assume \Cref{assum:regularity}-\Cref{assum:steps}. Then it holds that
\begin{equation}
\E^{1/p}[ \|\Delta_{t+1}\|^p] \lesssim \frac{\alpha_t^{1/2}p\taumix\log(2dt^2)}{\sqrt{(1-\gamma)^{5}\mumin^3} }\eqsp.
\end{equation}
\end{lemma}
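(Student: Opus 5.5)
We plan to use the sandwich inequality \eqref{eq:sandwich_3} with $\delta_t = \Delta_t^{(2)}-\Delta_t^{(1)}\ge 0$. Since $\Delta_t^{(1)}\le \Delta_t\le \Delta_t^{(1)}+\delta_t$ componentwise, we have $\supnorm{\Delta_t}\le \supnorm{\Delta_t^{(1)}}+\supnorm{\delta_t}$. \Cref{lem:Delta_1_union} already controls the first term, so it remains to bound the moments of $\delta_{t+1}$. Subtracting the two recursions in \eqref{eq:Delta_aux_rex} gives exactly \eqref{eq:delta_rec_main}:
\[
\delta_{t+1} = \bigl(I - \alpha_t D_\mu (I - \gamma \MKQ^{\pi_t})\bigr)\delta_t + \alpha_t \gamma D_\mu \bigl(\MKQ^{\pi_t} - \MKQ^{\pi^\star}\bigr)\Delta_t^{(1)},\qquad \delta_0=0 .
\]
Here the propagation matrix is random, but it has nonnegative entries and row sums at most $1-\alpha_t\mumin(1-\gamma)$. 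Hence its $\supnorm{\cdot}$-norm is bounded by that quantity deterministically, regardless of $\pi_t$.

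Unrolling the recursion and taking sup-norms pathwise gives
\[
\supnorm{\delta_{t+1}} \le \sum_{j=0}^{t} \alpha_j\gamma\, P_{j+1:t}\,\supnorm{D_\mu(\MKQ^{\pi_j}-\MKQ^{\pi^\star})}\,\supnorm{\Delta_j^{(1)}} \le 2\sum_{j=0}^{t}\alpha_j P_{j+1:t}\supnorm{\Delta_j^{(1)}} ,
\]
using $\supnorm{D_\mu}\le 1$ and $\supnorm{\MKQ^{\pi}}= 1$. We do not use the optimality gap: the crude bound $\supnorm{\MKQ^{\pi_j}-\MKQ^{\pi^\star}}\le 2$ suffices for the claimed (lossy) powers of $\mumin$ and $1-\gamma$.

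Next we apply Minkowski's inequality in $L^p$ and insert \Cref{lem:Delta_1_union} for each $j$, i.e. $\E^{1/p}[\supnorm{\Delta_j^{(1)}}^p]\lesssim \alpha_{j-1}^{1/2}p\taumix\log(2dt^2)/\sqrt{(1-\gamma)^3\mumin}$. We replace $\log(2dj^2)$ by $\log(2dt^2)$ by monotonicity. For $j=0$ we use $\supnorm{\Delta_0}\le (1-\gamma)^{-1}$, which is absorbed since $\alpha_0\lesssim(1-\gamma)\mumin$ by \Cref{assum:steps}. We also use $\alpha_{j-1}\lesssim\alpha_j$. This reduces the claim to the deterministic step-size estimate
\[
\sum_{j=0}^t \alpha_j^{3/2} P_{j+1:t} \lesssim \frac{\alpha_t^{1/2}}{(1-\gamma)\mumin},
\]
which is of the same type as the sums handled by \Cref{lem:rate_of_convergence}. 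The heuristic is that $\sum_j \alpha_j P_{j+1:t}\, \alpha_j^{1/2}$ behaves like $\alpha_t^{1/2}\cdot\sum_j\alpha_j P_{j+1:t}\le \alpha_t^{1/2}/(\mumin(1-\gamma))$. Combining the pieces gives
\[
\E^{1/p}[\supnorm{\delta_{t+1}}^p]\lesssim \frac{\alpha_t^{1/2}p\taumix\log(2dt^2)}{\sqrt{(1-\gamma)^5\mumin^3}} .
\]
This dominates the bound for $\Delta^{(1)}_{t+1}$ because $(1-\gamma)\mumin\le 1$, and the lemma follows.

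The main obstacle is the step-size estimate $\sum_j\alpha_j^{3/2}P_{j+1:t}\lesssim \alpha_t^{1/2}/((1-\gamma)\mumin)$. The slowly varying ratio $\alpha_j/\alpha_t$ has to be beaten by the geometric decay of $P_{j+1:t}$. If \Cref{lem:rate_of_convergence} is only stated for exponent $2$ or $1$, we would reprove it along the same lines: split the sum at $j\approx t/2$. For $j\ge t/2$ we have $\alpha_j\asymp\alpha_t$, and $\sum_j\alpha_jP_{j+1:t}\le 1/(\mumin(1-\gamma))$ by telescoping $P_{j+1:t}-P_{j:t}=\alpha_j\mumin(1-\gamma)P_{j+1:t}$. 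For $j<t/2$, the factor $P_{j+1:t}\le\exp(-c\mumin(1-\gamma)t\alpha_t)$ is smaller than any power of $\alpha_t$, using the $k_0$ condition in \Cref{assum:steps}.
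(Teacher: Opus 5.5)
Your proposal is correct and follows the paper's proof essentially step for step: you apply Minkowski to the sandwich bound, unroll the $\delta_t$ recursion pathwise using the row-sum bound on the random propagation matrices, and insert \Cref{lem:Delta_1_union} termwise. The remaining sum $\sum_j \alpha_j^{3/2} P_{j+1:t} \lesssim \alpha_t^{1/2}/((1-\gamma)\mumin)$ is exactly \Cref{lem:rate_of_convergence} with $q=3/2\in(1,3]$ and $b=\mumin(1-\gamma)$, so your fallback re-proof is unnecessary, and your explicit handling of the $j=0$ term and the index shift $\alpha_{j-1}\lesssim\alpha_j$ is slightly more careful than the paper, which glosses over both.
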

\begin{proof}
    We begin by applying the Minkowski inequality to the sandwich bound \eqref{eq:sandwich_3}:
    \begin{align}
    \E^{1/p}[\supnorm{\Delta_{t+1}}^p]&\leq \E^{1/p}[\max\{\supnorm{\Delta_{t+1}^{(1)}}, \supnorm{\Delta_{t+1}^{(2)}}\}^p]\\ 
    &\leq \E^{1/p}[(\supnorm{\Delta_{t+1}^{(1)}} + \supnorm{\Delta_{t+1}^{(2)}})^p] \\
    &\leq  \E^{1/p}[\supnorm{\Delta_{t+1}^{(1)}}^p] + \E^{1/p}[\supnorm{\Delta_{t+1}^{(2)}}^p]\\
    &\leq 2\E^{1/p}[\supnorm{\Delta_{t+1}^{(1)}}^p] + \E^{1/p}[\supnorm{\delta_{t+1}}^p]\eqsp.
\end{align}
Next, we provide moment estimation for 
\(\delta_t := \Delta_t^{(2)} - \Delta_t^{(1)}\),
whose recursion admits a simple structure:
\begin{align}\label{eq:delta_rec}
    \delta_{t+1} = \bigl(I - \alpha_t D_\mu (I - \gamma \MKQ^{\pi_t})\bigr)\delta_t
   + \alpha_t \gamma D_\mu \bigl(\MKQ^{\pi_t} - \MKQ^{\pi^\star}\bigr)\Delta_t^{(1)} \eqsp.
\end{align}
Unrolling recursion \eqref{eq:delta_rec} up to \(t = 0\) and using \(\delta_0 = 0\), we obtain
\begin{equation}
    \E^{1/p}\big[\|\delta_{t+1}\|^p\big]
    \le 2\sum_{j=0}^{t} \alpha_j P_{j+1:t}\, \E^{1/p}\big[\|\Delta_j^{(1)}\|^p\big]\eqsp.
\end{equation}
Substituting bound on \(\E^{1/p}[\supnorm{\Delta_j^{(1)}}^p]\) from \Cref{lem:Delta_1_union} we get:
\begin{align}
    \E^{1/p}\big[\|\delta_{t+1}\|^p\big] \lesssim  \frac{p\taumix\log(2dt^2)}{\sqrt{(1-\gamma)^3\mumin}}\sum_{j=0}^{t} \alpha_j^{3/2} P_{j+1:t} \lesssim \frac{p\taumix\log(2dt^2)}{\sqrt{(1-\gamma)^5\mumin^3}}\eqsp,
\end{align} 
where last inequality uses \Cref{lem:rate_of_convergence}. The desired bound now follows from combination of bounds on \( \E^{1/p}\big[\|\delta_{t+1}\|^p\big]\) and \( \E^{1/p}\big[\|\Delta_{t+1}^{(1)}\|^p\big]\)
\end{proof}

The following lemma provides moment bounds for the nonlinear terms arising in the
Polyak–Ruppert decomposition.
\begin{lemma}
\label{lem:clt_nonlinear_estimation}
Assume \Cref{assum:regularity}-\Cref{assum:gap}. Then the Polyak--Ruppert error admits the decomposition
\begin{align}
\sqrt{n}G\bar\Delta_n = \frac{1}{\sqrt{n}}\sum_{t=1}^n (\pois{\beps}{t} - \totMKQ\pois{\beps}{t-1}) + \mathrm{R}_n^{\operatorname{pr}}\eqsp,
\end{align}
where the remainder term satisfies the bound
    \begin{align}
        \PE^{1/p}[\supnorm{\mathrm{R}_n^{\operatorname{pr}}}] \lesssim  p^2 \log^2(2dn^2)\Big(\frac{\operatorname{B}_1}{n^{\omega/2}} + \frac{\operatorname{B}_2}{n^{1/2-\omega/2}} + \frac{\operatorname{B}_3}{n^{\omega - 1/2}} \Big)
    \end{align}
and the constants \(\operatorname{B}_1,\operatorname{B}_2\) and \(\operatorname{B}_3\) are given by
\begin{equation}
    \operatorname{B}_1= \frac{\taumix^2}{(1-\omega)^{1/2}(1-\gamma)^{3}\mumin^{2} } \eqsp,
    \quad
    \operatorname{B}_2 = \frac{ \taumix}{(1-\gamma)^{7/2}\mumin^{5/2}}  \eqsp,\quad
    \operatorname{B}_3 =\frac{\taumix^2}{(1-\omega)(1-\gamma)^6\mumin^4 \kappa_{\mathcal{M}}} \eqsp.
\end{equation}
\end{lemma}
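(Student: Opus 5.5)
We prove \Cref{lem:clt_nonlinear_estimation} by a Polyak--Juditsky type linearization around $Q^\star$. The nonlinearity is handled with the optimality gap \Cref{assum:gap}, and the moment bounds of \Cref{th:moments} and \Cref{lem:Delta_1_union} control every remainder.

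\textbf{Step 1: exact linearization.} From \eqref{eq:Delta_rec_1},
\begin{align}
\Delta_{t+1} = \Delta_t - \alpha_t D_\mu(I-\gamma\MKQ^{\pi^\star})\Delta_t + \alpha_t\gamma D_\mu\bigl(\MKQ(V_t-V^\star)-\MKQ^{\pi^\star}\Delta_t\bigr) + \alpha_t\xi_t.
\end{align}
Write $A := D_\mu(I-\gamma\MKQ^{\pi^\star})$ and $N_t := \MKQ(V_t-V^\star)-\MKQ^{\pi^\star}\Delta_t$. Solving for $A\Delta_t$ gives
\begin{align}
A\Delta_t = \frac{\Delta_t-\Delta_{t+1}}{\alpha_t} + \gamma D_\mu N_t + \xi_t.
\end{align}
Summing over $t=1,\dots,n$ and dividing by $\sqrt n$ yields $\sqrt n A\bar\Delta_n$ plus the following terms:
\begin{itemize}
\item the telescoping term $n^{-1/2}\sum_t(\Delta_t-\Delta_{t+1})/\alpha_t$;
\item the nonlinear term $n^{-1/2}\sum_t\gamma D_\mu N_t$;
\item the noise $n^{-1/2}\sum_t\xi_t$.
\end{itemize}
The matrix in front of $\bar\Delta_n$ is the paper's $G$, up to the convention used in its definition; I will simply match it.

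\textbf{Step 2: split the noise.} By \eqref{eq:xi_def} and \eqref{eq:xi_decompose},
\begin{align}
\xi_t = (\pois{\beps}{t}-\totMKQ\pois{\beps}{t-1}) + \bigl[\xi_t^{(0)}-(\pois{\beps}{t}-\totMKQ\pois{\beps}{t-1})\bigr] + \xi_t^{(1)}.
\end{align}
The first piece is the leading term. The bracket is a martingale difference whose increments are bounded by $\tfrac{\taumix}{1-\gamma}\cdot\supnorm{\Delta_t}$ times the bounded Poisson solutions. By \Cref{lem:bdg_2} and \Cref{th:moments}, its normalized sum has $p$-th moment of order
\begin{align}
p\log(dn)\,n^{-1/2}\Bigl(\sum_t\alpha_t\Bigr)^{1/2}\cdot\mathrm{const} \asymp n^{-\omega/2}\cdot\mathrm{const}.
\end{align}
This produces the $\operatorname{B}_1$ term; the factor $(1-\omega)^{-1/2}$ comes from $\sum_t t^{-\omega}$. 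The term $\xi_t^{(1)}$ telescopes exactly as in the proof of \Cref{lem:xi_1_moments}, now with unit weights. What survives is the boundary terms $v_n-v_0$ together with $\sum_t\supnorm{Q_t-Q_{t-1}}$-type terms of size $\alpha_t\taumix/(1-\gamma)$. These contribute $O(n^{-1/2}\sum_t\alpha_t)=O(n^{1/2-\omega})$, which is of $\operatorname{B}_3$ type and compatible with its rate.

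\textbf{Step 3: the telescoping term.} Summation by parts gives
\begin{align}
\sum_t\frac{\Delta_t-\Delta_{t+1}}{\alpha_t} = \frac{\Delta_1}{\alpha_1} - \frac{\Delta_{n+1}}{\alpha_n} + \sum_t\Delta_t\Bigl(\frac1{\alpha_t}-\frac1{\alpha_{t-1}}\Bigr).
\end{align}
Using $1/\alpha_t-1/\alpha_{t-1}\lesssim t^{\omega-1}$ and \Cref{th:moments} ($\supnorm{\Delta_t}\lesssim\sqrt{\alpha_t}\,p\log$):
\begin{itemize}
\item the boundary term $n^{-1/2}\alpha_n^{-1}\sqrt{\alpha_n} = n^{\omega/2-1/2}$, which is the $\operatorname{B}_2$ rate;
\item the sum $n^{-1/2}\sum_t t^{\omega-1}t^{-\omega/2}$, which is of the same order.
\end{itemize}
The $(1-\gamma)^{-5/2}\mumin^{-3/2}$ constants from \Cref{th:moments}, combined with $\mumin^{-1}(1-\gamma)^{-1}$ from $k_0$, give $\operatorname{B}_2$.

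\textbf{Step 4: the nonlinear term (main obstacle).} By \eqref{eq:sandwich_1}, coordinatewise $0\le N_t\le(\MKQ^{\pi_t}-\MKQ^{\pi^\star})\Delta_t$. More precisely, $N_t(s,a)=\sum_{s'}\MKQ(s'|s,a)\bigl[\max_a Q_t(s',a)-Q_t(s',\pi^\star(s'))\bigr]$, and this vanishes unless $\supnorm{\Delta_t}\ge\kappa_{\mathcal M}/2$. Since each summand is at most $2\supnorm{\Delta_t}$, we get the quadratic bound
\begin{align}
\supnorm{N_t}\le 2\supnorm{\Delta_t}\mathbf{1}\{\supnorm{\Delta_t}\ge\kappa_{\mathcal M}/2\}\le 4\supnorm{\Delta_t}^2/\kappa_{\mathcal M}.
\end{align}
By Minkowski and \Cref{th:moments} with $2p$,
\begin{align}
\E^{1/p}\supnorm{N_t}\lesssim \kappa_{\mathcal M}^{-1}p^2\log^2(dt^2)\,\alpha_t\,\frac{\taumix^2}{(1-\gamma)^5\mumin^3}.
\end{align}
Hence $n^{-1/2}\sum_t\alpha_t\lesssim n^{1/2-\omega}/(1-\omega)$, which is exactly the $\operatorname{B}_3$ term, with the remaining $(1-\gamma)^{-1}\mumin^{-1}$ absorbed via $\alpha_0$. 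This is where the $p^2\log^2$ prefactor originates.

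The delicate points are getting this quadratic bound pointwise via the gap, and keeping track of the constants. Finally, multiply through by the appropriate inverse if $G$ differs from $A$, and collect all terms into $\mathrm{R}_n^{\operatorname{pr}}$.
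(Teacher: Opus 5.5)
Your proposal is correct and follows the paper's proof essentially step by step. You use the same linearization with $G=D_\mu(I-\gamma\MKQ^{\pi^\star})$, which is indeed the convention the proof adopts despite the inconsistent definition in the main text. You also obtain the same four remainder terms:
\begin{itemize}
\item the summation-by-parts term, which gives $\operatorname{B}_2$;
\item the $\Delta_t$-dependent martingale part of $\xi_t^{(0)}$, handled via \Cref{lem:bdg_2} and \Cref{th:moments}, which gives $\operatorname{B}_1$;
\item the telescoped $\xi_t^{(1)}$ contribution of order $n^{1/2-\omega}$;
\item the gap-controlled nonlinear term, which gives $\operatorname{B}_3$.
\end{itemize}
The only difference is that you derive the quadratic bound $\supnorm{N_t}\le 4\kappa_{\mathcal M}^{-1}\supnorm{\Delta_t}^2$ directly, from the observation that $N_t$ vanishes whenever $\supnorm{\Delta_t}<\kappa_{\mathcal M}/2$. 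The paper instead cites Lemma B.1 of \cite{li2023statistical} for this bound, so your version is self-contained.
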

\begin{proof}
Recall the decomposition~\eqref{eq:Delta_rec_1}:
\begin{align}
     \Delta_{t+1} = (I - \alpha_t D_\mu)\Delta_t +\alpha_t\gamma D_\mu \MKQ(V_t-V^\star) + \alpha_t\xi_t\eqsp.
\end{align}
Next, the Bellman operator is linearized around the optimal point, yielding
\begin{align}\label{eq:bellman_nonlinear}
    \MKQ(V_t - V^\star) = \MKQ^{\pi_t}Q_t - \MKQ^{\pi^\star} Q^\star = \MKQ^{\pi^\star}\Delta_t + (\MKQ^{\pi_t} - \MKQ^{\pi^\star})Q_t\eqsp.
\end{align}
Substitute \eqref{eq:bellman_nonlinear} in \eqref{eq:Delta_rec_1} and use Poisson decomposition \(\xi_t = \xi_t^{(0)} + \xi_t^{(1)}\):
\begin{align}
     \Delta_{t+1} &= (I - \alpha_t D_\mu(I- \gamma\MKQ^{\pi^\star}))\Delta_t + \alpha_t\gamma D_{\mu}(\MKQ^{\pi_t} - \MKQ^{\pi^\star})Q_t + \alpha_t\xi_t^{(0)} + \alpha_t\xi_t^{(1)}\eqsp.
\end{align}
For simplicity denote \(G= D_\mu(I- \gamma\MKQ^{\pi^\star})\).
Moving \(G\Delta_t\) to the left-hand side:
\begin{equation}
    G\Delta_t = \alpha_t^{-1}(\Delta_t -\Delta_{t+1})+ \gamma D_{\mu}(\MKQ^{\pi_t} - \MKQ^{\pi^\star})Q_t + \xi_t^{(0)} + \xi_t^{(1)}\eqsp.
\end{equation}
Passing to Polyak-Ruppert averages and normalizing by \(\sqrt{t}\) yields:
\begin{align}\label{eq:clt_nonwheigted_decomposition}
    \sqrt{n}G\bar{\Delta}_n &=\frac{1}{\sqrt{n}}\sum_{t=1}^n\alpha_t^{-1}(\Delta_t -\Delta_{t+1}) + \frac{1}{\sqrt{n}}\sum_{t=1}^n \gamma D_{\mu}(\MKQ^{\pi_t} - \MKQ^{\pi^\star})Q_t + \frac{1}{\sqrt{n}}\sum_{t=1}^n \xi_t^{(1)}\\
    &+\frac{1}{\sqrt{n}}\sum_{t=1}^n\gamma (\pois{\bMKQ}{t} - \totMKQ \pois{\bMKQ}{t-1})(V_t - V^\star)- (\pois{\Lam}{t} - \totMKQ \pois{\Lam}{t-1})(Q_t-Q^\star)\\
    &+ \frac{1}{\sqrt{n}}\sum_{t=1}^n (\pois{\beps}{t} - \totMKQ\pois{\beps}{t-1})\\
    &= \frac{1}{\sqrt{n}}\sum_{t=1}^n (\pois{\beps}{t} - \totMKQ\pois{\beps}{t-1}) + \mathrm{R}_n^{\operatorname{pr}}\eqsp.
\end{align}
Here the remainder term $\mathrm{R}_n^{\operatorname{pr}}$is given by
    \begin{align}
         \mathrm{R}_n^{\operatorname{pr}} &=\underbrace{\frac{1}{\sqrt{n}}\sum_{t=1}^n\alpha_t^{-1}(\Delta_t -\Delta_{t+1})}_{\mathcal{T}_1} + \underbrace{\frac{1}{\sqrt{n}}\sum_{t=1}^n \gamma D_{\mu}(\MKQ^{\pi_t} - \MKQ^{\pi^\star})Q_t}_{\mathcal{T}_2} + \underbrace{\frac{1}{\sqrt{n}}\sum_{t=1}^n \xi_t^{(1)}}_{\mathcal{T}_3}\\
    &+\underbrace{\frac{1}{\sqrt{n}}\sum_{t=1}^n\gamma (\pois{\bMKQ}{t} - \totMKQ \pois{\bMKQ}{t-1})(V_t - V^\star)- (\pois{\Lam}{t} - \totMKQ \pois{\Lam}{t-1})(Q_t-Q^\star)}_{\mathcal{T}_4}\eqsp.
    \end{align}
\paragraph{Bound for $\mathcal{T}_1$.}
A telescoping argument gives the identity
\begin{align}\label{eq:T1_telescope}
\sum_{t=1}^{n}\alpha_t^{-1}\bigl(\Delta_t-\Delta_{t+1}\bigr)
&=
\alpha_1^{-1}\Delta_1 - \alpha_n^{-1}\Delta_{n+1}
+ \sum_{t=1}^{n-1}\bigl(\alpha_{t+1}^{-1}-\alpha_t^{-1}\bigr)\Delta_{t+1}
\eqsp .
\end{align}
Applying Minkowski's inequality to~\eqref{eq:T1_telescope} yields
\begin{align}\label{eq:T1_minkowski_1}
\PE^{1/p}\Big[
\Bigl\|
\sum_{t=1}^{n}\alpha_t^{-1}\bigl(\Delta_t-\Delta_{t+1}\bigr)
\Bigr\|_\infty^{p}
\Big]
&\le
\alpha_1^{-1}\PE^{1/p}\!\left[\|\Delta_1\|_\infty^{p}\right]
+
\alpha_n^{-1}\PE^{1/p}\!\left[\|\Delta_{n+1}\|_\infty^{p}\right] \\
&\quad
+
\sum_{t=1}^{n-1}\bigl(\alpha_{t+1}^{-1}-\alpha_t^{-1}\bigr)
\PE^{1/p}\!\left[\|\Delta_{t+1}\|_\infty^{p}\right]
\eqsp .
\end{align}
Under \Cref{assum:steps} the stepsizes satisfy
\begin{align}\label{eq:alpha_increment_bound}
\alpha_{t+1}^{-1}-\alpha_t^{-1}
&=
\frac{(t+1+k_0)^{\omega}-(t+k_0)^{\omega}}{c_0}
\le
\frac{\omega (t+k_0)^{\omega-1}}{c_0}
=
\frac{\omega}{t+k_0}\,\alpha_t^{-1}
\le
\frac{1}{t\alpha_t}
\eqsp.
\end{align}
Next, invoke the moment bound in \Cref{th:moments}:
\begin{align}
\PE^{1/p}\!\Big[
\Bigl\|
\sum_{t=1}^{n}\alpha_t^{-1}\bigl(\Delta_t-\Delta_{t+1}\bigr)
\Bigr\|_\infty^{p}
\Big]
&\lesssim
\alpha_1^{-1}\alpha_0^{1/2}\,(1-\gamma)^{-5/2}\mumin^{-3/2}\,p\,\taumix\,\log(2dn^2) \\
& +
\alpha_n^{-1}\alpha_n^{1/2}\,(1-\gamma)^{-5/2}\mumin^{-3/2}\,p\,\taumix\,\log(2dn^2) \\
& +
(1-\gamma)^{-5/2}\mumin^{-3/2}\,p\,\taumix\,\log(2dn^2)\sum_{t=1}^{n-1}(t\alpha_t)^{-1}
\alpha_t^{1/2}
\eqsp,
\end{align}
and factoring out the common term yields
\begin{align}\label{eq:T1_pre_simplify}
\PE^{1/p}\!\Big[
\Bigl\|
\sum_{t=1}^{n}\alpha_t^{-1}\bigl(\Delta_t-\Delta_{t+1}\bigr)
\Bigr\|_\infty^{p}
\Big]
&\lesssim
\frac{p\,\taumix\,\log(2dn^2)}{(1-\gamma)^{5/2}\mumin^{3/2}}
\Bigl(
\alpha_0^{-1/2} + \alpha_n^{-1/2} + \sum_{t=1}^{n-1}t^{-1}\alpha_t^{-1/2}
\Bigr)
\eqsp .
\end{align}
It remains to control the sum in \eqref{eq:T1_pre_simplify}
\begin{align}
\sum_{t=1}^{n-1}\frac{1}{t}\alpha_t^{-1/2}
&=
c_0^{-1/2}\sum_{t=1}^{n-1} t^{-1}(t+k_0)^{\omega/2}
\lesssim
c_0^{-1/2}k_0^{\omega/2}\sum_{t=1}^{n-1} t^{\omega/2-1}
\lesssim
\frac{c_0^{-1/2}\,n^{\omega/2}}{(1-\gamma)\mumin}
\eqsp.
\end{align}
Moreover, $\alpha_n^{-1/2}= c_0^{-1/2}(n+k_0)^{\omega/2}\lesssim (1-\gamma)^{-1}\mumin^{-1}n^{\omega/2}$.
Substituting these bounds into the definition of $\mathcal{T}_1$ and dividing by $\sqrt{n}$ yields
\begin{align}
\PE^{1/p}\!\left[\|\mathcal{T}_1\|_\infty^{p}\right]
\lesssim\frac{ n^{\frac{\omega}{2 } - \frac{1}{2}}p\,\taumix\,\log(2dn^2)}{(1-\gamma)^{7/2}\mumin^{5/2}}
\eqsp .
\end{align}

\paragraph{Bound for \(\mathcal{T}_2\).}
The inequalities established in~\eqref{eq:greedy_kernel} yield
    \(\MKQ^{\pi^\star}Q^\star \geq \MKQ^{\pi_t}Q^\star, \quad \MKQ^{\pi_t}Q_t \geq \MKQ^{\pi^\star}Q_t\eqsp.\)
Consequently,
\begin{align}
    0 \leq (\MKQ^{\pi_j} - \MKQ^{\pi^\star})Q_j &= (\MKQ^{\pi_j} - \MKQ^{\pi^\star})(Q_j - Q^\star) + Q^\star(\MKQ^{\pi_j} - \MKQ^{\pi^\star}) \\
    &\leq  (\MKQ^{\pi_j} - \MKQ^{\pi^\star})(Q_j - Q^\star)
\end{align}
where the last inequality uses $(\MKQ^{\pi_j} - \MKQ^{\pi^\star})Q^\star \le 0$.
As established in \cite{li2023statistical}[Lemma B.1], under \Cref{assum:gap} one has
\begin{equation}\label{eq:gap_bound}
    \supnorm{(\MKQ^{\pi_Q} - \MKQ^{\pi^\star})(Q - Q^\star)}
    \leq
    4\kappa_{\mathcal{M}}^{-1}\supnorm{Q - Q^\star}^2\eqsp.
\end{equation}
In particular, \eqref{eq:gap_bound} implies
\[
\bigl\|(\MKQ^{\pi_j} - \MKQ^{\pi^\star})Q_j\bigr\|_\infty
\le
\bigl\|(\MKQ^{\pi_j} - \MKQ^{\pi^\star})(Q_j - Q^\star)\bigr\|_\infty
\le
4\kappa_{\mathcal{M}}^{-1} \|\Delta_j\|_\infty^{2}
\eqsp .
\]
Using Minkowski's inequality yields
\begin{align}
    \E^{1/p}[\supnorm{\sum_{t=1}^n \gamma D_{\mu}(\MKQ^{\pi_t} - \MKQ^{\pi^\star})Q_t}^p] &\overset{(a)}{\lesssim }\kappa_{\mathcal{M}}^{-1} \sum_{t=1}^{n}\E^{1/p}[\supnorm{\Delta_t}^{2p}]\overset{(b)}{\lesssim}  \frac{p^2\taumix^2\log^2(2dn^2)}{\kappa_{\mathcal{M}}(1-\gamma)^{5}\mumin^{3} }\sum_{t=1}^{n}\alpha_t\\
    &\overset{(c)}{\lesssim} \frac{n^{1-\omega}p^2\taumix^2\log^2(2dn^2)}{(1-\omega)(1-\gamma)^6\mumin^4 \kappa_{\mathcal{M}}}\eqsp,
\end{align}
where (a) follows from \Cref{assum:gap} together with \eqref{eq:gap_bound},  
(b) follows from \Cref{th:moments} and (c) from \(k_0^{1-\omega}\asymp \frac{1}{(1-\gamma)\mumin}\) .
\paragraph{Bound for \(\mathcal{T}_3\).}
Using summation by parts, the full sum admits the exact decomposition
\begin{align}
\sum_{t=1}^n \xi_t^{(1)}
&=
\bigl(\totMKQ \pois{\varepsilon}{0}-\totMKQ \pois{\varepsilon}{n}\bigr)
-\gamma\bigl(\totMKQ \pois{\bMKQ}{n}(V_n-V^\star)-\totMKQ \pois{\bMKQ}{0}(V_0-V^\star)\bigr) \\
&\quad
+\bigl(\totMKQ \pois{\Lam}{n}(Q_n-Q^\star)-\totMKQ \pois{\Lam}{0}(Q_0-Q^\star)\bigr) \\
&\quad
+\gamma\sum_{t=1}^{n-1}\totMKQ \pois{\bMKQ}{t}\,(V_{t+1}-V_t)
-\sum_{t=1}^{n-1}\totMKQ \pois{\Lam}{t}\,(Q_{t+1}-Q_t)
\eqsp .
\end{align}
It was shown in the proof of \Cref{lem:xi_1_moments} that, for all $j$,
\begin{align}
    \E^{1/p}[\supnorm{V_{j+1} - V_{j}}^p] \lesssim\E^{1/p}[\supnorm{Q_{j+1} - Q_{j}}^p]
    &\lesssim \frac{\alpha_j}{1-\gamma} \eqsp.
\end{align}
By Minkowski's inequality and the uniform bounds on the Poisson terms,
\begin{align}
\PE^{1/p}\Big[\Big\|\sum_{t=1}^n \xi_t^{(1)}\Big\|_{\infty}\Big] \lesssim \frac{\taumix}{1-\gamma} + \frac{\taumix}{1-\gamma}\sum_{t=1}^n\alpha_j \lesssim\frac{n^{1-\omega}\taumix}{(1-\gamma)^2(1-\omega)\mumin}\eqsp.
\end{align}
Consequently, \[\PE^{1/p}[\supnorm{\mathcal{T}_3}^p]\lesssim \frac{n^{\frac{1}{2}-\omega}\taumix}{(1-\gamma)^2(1-\omega)\mumin}\eqsp.\]

\paragraph{Bound for \(\mathcal{T}_4\).} 
 Bound on the \(\mathcal{T}_4\) are derived in the same manner as in  
\Cref{lem:xi_0_moments}, by applying the concentration inequality in \Cref{lem:bdg_2} together with the moment estimates in \Cref{th:moments}
\begin{align}
    &\E^{1/p}[\supnorm{ \sum_{t=1}^n\gamma (\pois{\bMKQ}{t} - \totMKQ \pois{\bMKQ}{t-1})(V_t - V^\star)- (\pois{\Lam}{t} - \totMKQ \pois{\Lam}{t-1})(Q_t-Q^\star)}^p] \\
    &\qquad\lesssim p\taumix\log(2dn^2)\,\Big(\frac{1}{1-\gamma} + \Big(\sum_{t=1}^{n}\PE^{2/p}[\supnorm{Q_j-Q^\star}^p]\Big)^{1/2} \Big)\\
    &\qquad \lesssim \frac{p^2\taumix^2\log^2(2dn^2)}{(1-\gamma)^{5/2}\mumin^{3/2}}\,\Big(  \sum_{t=1}^{n}\alpha_j\Big)^{1/2}\\
    &\qquad \lesssim \frac{n^{\frac{1}{2}-\frac{\omega}{2}}p^2\taumix^2\log^2(2dn^2)}{(1-\omega)^{1/2}(1-\gamma)^{3}\mumin^{2} }  \eqsp.
\end{align}
\end{proof}

\begin{lemma}
\label{lem:pois_beps_l2}
Assume \Cref{assum:regularity} and \Cref{assum:UGE}. Then it holds that
\[
\|\pois{\beps}{}\|_2\leq\frac{\taumix}{1-\gamma}\eqsp.
\]
\end{lemma}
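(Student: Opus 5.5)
The plan is to show that the Poisson solution $\pois{\beps}{}$ coincides with $\beps$ itself. The statement then follows from a direct bound on $\beps$ and the fact that $\taumix\ge 1$. I read $\|\cdot\|_2$ as the sup over $x\in\Xset$ of the Euclidean norm of the vector $\pois{\beps}{}(x)\in\rset^{SA}$. Any weaker reading, such as an $L^2(\bar\mu)$ norm, then follows immediately.

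\textbf{Step 1: structure of $\beps$.} For $x=(s,a,s')$, definition \eqref{eq:eps_t_def} gives
\[
\beps(x)=\e_{s,a}\,\delta(s,a,s'),\qquad \delta(s,a,s') := r(s,a)+\gamma V^\star(s')-Q^\star(s,a).
\]
So $\beps(x)$ has exactly one nonzero coordinate. By the Bellman equation \eqref{eq:bellman_optimality},
\[
\sum_{s'}\MKQ(s'\mid s,a)\,\delta(s,a,s')=0\qquad\text{for every }(s,a).
\]
Since $r\in[0,1]$ and $0\le V^\star,Q^\star\le (1-\gamma)^{-1}$, both $r+\gamma V^\star(s')$ and $Q^\star(s,a)$ lie in $[0,(1-\gamma)^{-1}]$. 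Hence $|\delta|\le (1-\gamma)^{-1}$ and $\|\beps(x)\|_2\le (1-\gamma)^{-1}$.

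\textbf{Step 2: $\totMKQ^k\beps\equiv 0$ for $k\ge1$.} Fix $k\ge 1$ and a starting point $x$. Under the kernel $\totMKQ$, the triple $\bar z_k=(s_k,a_k,s_{k+1})$ is generated by first drawing $(s_k,a_k)$ from some law that depends on $x$. The next state $s_{k+1}$ is then drawn from $\MKQ(\cdot\mid s_k,a_k)$, independently of the past. Conditioning on $(s_k,a_k)$ and using the zero-mean identity from Step 1,
\[
\totMKQ^k\beps(x)=\E_x\bigl[\e_{s_k,a_k}\,\E[\delta(s_k,a_k,s_{k+1})\mid s_k,a_k]\bigr]=0 .
\]
In particular $\E_{\bar\mu}[\beps]=0$, which is consistent with the two forms \eqref{eq:pois_solution_equation_eps} and \eqref{eq:pois_solution_equation} of the Poisson equation. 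The only point needing care is $k\ge1$: then $(s_k,a_k)$ is random and $s_{k+1}$ is fresh. At $k=0$ the entire triple is fixed by $x$, so no averaging over $s'$ occurs.

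\textbf{Step 3: conclusion.} By the series representation of the Poisson solution stated before \eqref{eq:markov:norm_bound_tmix}, together with Step 2,
\[
\pois{\beps}{}=\sum_{k\ge0}\bigl(\totMKQ^k\beps-\E_{\bar\mu}[\beps]\bigr)=\beps .
\]
Alternatively, one can check directly that $\beps-\totMKQ\beps=\beps$, and uniqueness under \Cref{assum:UGE} gives the same identity. Therefore
\[
\sup_x\|\pois{\beps}{}(x)\|_2=\sup_x|\delta(x)|\le\frac{1}{1-\gamma}\le\frac{\taumix}{1-\gamma},
\]
since $\taumix\in\nset$ means $\taumix\ge1$.

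I expect the main obstacle to be Step 2, specifically checking that the conditional mean-zero property survives the Markov dependence of the triple chain. This rests on the fact that under $\totMKQ$ the third component is always drawn from $\MKQ(\cdot\mid s,a)$ given the first two, and I would verify it directly from the explicit form of the kernel $\bar{\MKQ}$. A generic mixing-time argument via \eqref{eq:markov:norm_bound_tmix} would lose an absolute factor such as $8/3$, which is why I use this exact cancellation instead.
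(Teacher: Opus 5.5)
Your proof is correct, and it takes a different route from the paper.

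\textbf{The paper's route.} The paper never computes $\pois{\beps}{}$. It fixes a unit vector $u$ and sets $\varphi_u=\langle u,\beps\rangle$. It uses the one-hot structure of $\beps$ only to get $|\varphi_u|\le 2(1-\gamma)^{-1}$. It then applies the generic Poisson bound \eqref{eq:markov:norm_bound_tmix} to each scalar $\varphi_u$ and concludes by duality. This yields $\|\pois{\beps}{}(x)\|_2\le\tfrac{8}{3}\taumix(1-\gamma)^{-1}$. So the paper's proof establishes the lemma only up to an absolute constant, not with the constant $1$ as stated.

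\textbf{Your route.} You use the Bellman equation together with the explicit kernel $\bar{\MKQ}$ to show that $\totMKQ\beps\equiv 0$. It follows that $\pois{\beps}{}=\beps$. Your sharper pointwise estimate $|r(s,a)+\gamma V^\star(s')-Q^\star(s,a)|\le(1-\gamma)^{-1}$ then gives the stated inequality exactly. In fact it gives something stronger, with no $\taumix$ factor at all.

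\textbf{What each approach buys.} The paper's argument is generic. It would apply verbatim to any bounded function with a single nonzero coordinate, such as $\bMKQ$ or $\Lam$, whose Poisson corrections are genuinely nontrivial. Your argument is specific to the Bellman noise, but it is sharper and more informative. It shows that $\pois{\beps}{t}-\totMKQ\pois{\beps}{t-1}=\beps_t$, so $\{\beps_t\}$ is already a martingale difference sequence with respect to $\sigma(\bar z_0,\ldots,\bar z_t)$. Consequently the lag terms in $\Sigmabf_{\beps}$ vanish. The $\taumix$ factors that enter only through bounds on $\pois{\beps}{}$, for example in \Cref{lem:clt_application}, can also be dropped.
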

\begin{proof}
Recall that $\|\beps\|_\infty \le 2(1-\gamma)^{-1}$ and that $\beps(x)$ has at most one nonzero coordinate
for each $x\in\mathsf X$. Fix $u\in\rset^d$ with $\|u\|_2=1$ and define the scalar function
$\varphi_u(x) \coloneqq \langle u,\beps(x)\rangle$.
Then, for all $x\in\mathsf X$,
\[
|\varphi_u(x)|
\le \|u\|_2\,\|\beps(x)\|_2
\le 2(1-\gamma)^{-1}\eqsp.
\]
Applying \Cref{assum:UGE} together with \eqref{eq:markov:norm_bound_tmix} to $\varphi_u$ yields
\[
|\pois{\varphi_u}{}|
\le \tfrac{4}{3}\taumix\,\sup_{x\in\Xset}|\varphi_u(x)|
\le \tfrac{8}{3}\taumix\,(1-\gamma)^{-1}
\eqsp .
\]
By linearity, $\langle u,\pois{\beps}{}\rangle = \pois{\varphi_u}{}$, and therefore, by duality,
\[
\|\pois{\beps}{}\|_2
=
\sup_{\|u\|_2=1} \bigl|\langle u,\pois{\beps}{}\rangle\bigr|
=
\sup_{\|u\|_2=1} |\pois{\varphi_u}{}|
\le
\tfrac{8}{3}\taumix\,(1-\gamma)^{-1}
\eqsp .
\]
\end{proof}

\begin{lemma}
\label{lem:cov_nonstationary_taumix}
Assume \Cref{assum:regularity} and \Cref{assum:UGE}. Then it holds that
\[
\|\Sigmabf_{n}-\Sigmabf_{\varepsilon}\|_{\operatorname{ch}}
\lesssim
\frac{\|G^{-1}\|_{\infty}^2\taumix^3}{n(1-\gamma)^2}\eqsp.
\]
\end{lemma}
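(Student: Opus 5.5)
Write $m_t := \pois{\beps}{t} - \totMKQ\pois{\beps}{t-1}$ and $\tilde{\Sigma}_n := \frac{1}{n}\sum_{t=1}^n \PE[m_t m_t^\top]$. With this notation $\Sigmabf_n = G^{-1}\tilde{\Sigma}_n G^{-\top}$. I will read $\Sigmabf_\varepsilon$ in the statement as the limit covariance transported by $G^{-1}$, i.e.\ $G^{-1}\Sigmabf_{\beps}G^{-\top}=\Sigmabf_\infty$, since that is the comparison used in the proof of \Cref{thm:clt_q_learning}. For any matrix $M$ one has $\|G^{-1}MG^{-\top}\|_{\operatorname{ch}} \le \|G^{-1}\|_\infty^2 \|M\|_{\operatorname{ch}}$, because $(G^{-1}MG^{-\top})_{ij} = \sum_{k,l} (G^{-1})_{ik}M_{kl}(G^{-1})_{jl}$ is bounded by the product of the row $\ell_1$-norms of $G^{-1}$ times $\max|M_{kl}|$. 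So it suffices to prove
\[
\|\tilde{\Sigma}_n - \Sigmabf_{\beps}\|_{\operatorname{ch}} \lesssim \frac{\taumix^3}{n(1-\gamma)^2}\eqsp.
\]

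\textbf{Step 1 (stationary identity).} Under $\bar\mu$, the Poisson equation $\pois{\beps}{} - \totMKQ\pois{\beps}{} = \beps$ gives the standard identity
\[
\PE_{\bar\mu}\big[(\pois{\beps}{1} - \totMKQ\pois{\beps}{0})(\cdot)^\top\big] = \PE_{\bar\mu}[\pois{\beps}{}\pois{\beps}{}{}^\top] - \PE_{\bar\mu}[\totMKQ\pois{\beps}{}(\totMKQ\pois{\beps}{})^\top] = \Sigmabf_{\beps}\eqsp.
\]
To verify it, substitute $\pois{\beps}{} = \sum_{k\ge0}\totMKQ^k\beps$; the series converges because $\E_{\bar\mu}\beps=0$, which follows from the Bellman equation. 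The cross terms then reproduce $\E_{\bar\mu}[\beps_0\beps_0^\top] + \sum_{\ell\ge1}(\E_{\bar\mu}[\beps_0\beps_\ell^\top] + \E_{\bar\mu}[\beps_\ell\beps_0^\top])$. Reversibility or symmetrization is needed to match the paper's definition with the factor $2$. I will take the symmetrized version as the intended meaning.

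\textbf{Step 2 (nonstationary correction).} Define $F(x) := \PE[m_t m_t^\top \mid \bar z_{t-1}=x] = \totMKQ(\pois{\beps}{}\pois{\beps}{}{}^\top)(x) - \totMKQ\pois{\beps}{}(x)\,(\totMKQ\pois{\beps}{}(x))^\top$. This is a matrix-valued function with $\|F\|_{\operatorname{ch}} \le \sup_x \|\pois{\beps}{}(x)\|_\infty^2 \lesssim \taumix^2(1-\gamma)^{-2}$ by \eqref{eq:markov:norm_bound_tmix}. Then $\PE[m_tm_t^\top] = \nu\totMKQ^{t-1}F$ and $\Sigmabf_{\beps} = \bar\mu F$. \Cref{assum:UGE} applied entrywise gives
\[
\|\PE[m_tm_t^\top] - \Sigmabf_{\beps}\|_{\operatorname{ch}} \le 2\,\|F\|_{\operatorname{ch}}\,(1/4)^{\lfloor (t-1)/\taumix\rfloor}\eqsp.
\]

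\textbf{Step 3 (averaging).} Summing over $t$ uses $\sum_{t\ge1}(1/4)^{\lfloor (t-1)/\taumix\rfloor}\le \tfrac43\taumix$, which yields
\[
\|\tilde{\Sigma}_n - \Sigmabf_{\beps}\|_{\operatorname{ch}} \le \frac{1}{n}\sum_{t=1}^n\|\PE[m_tm_t^\top]-\Sigmabf_{\beps}\|_{\operatorname{ch}} \lesssim \frac{\taumix\cdot\taumix^2}{n(1-\gamma)^2}\eqsp.
\]
Combined with the reduction in the first paragraph, this proves the claim.

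The main obstacle is Step 1: matching the series definition of $\Sigmabf_{\beps}$, with its factor $2$ and one-sided lags, to the martingale covariance $\PE_{\bar\mu}[\pois{\beps}{}\pois{\beps}{}{}^\top - \totMKQ\pois{\beps}{}(\totMKQ\pois{\beps}{})^\top]$. This requires careful bookkeeping of transposes and of the interchange of the infinite sums. Absolute convergence of those sums is guaranteed by the geometric bound in \Cref{assum:UGE}.
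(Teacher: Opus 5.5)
Your proposal is correct and takes essentially the same route as the paper. Both arguments pull out $\|G^{-1}\|_\infty^2$ through the entrywise bound, apply \Cref{assum:UGE} entrywise to the conditional covariance function of the martingale increment (bounded by $\taumix^2(1-\gamma)^{-2}$), and sum the geometric series to gain the third factor of $\taumix$. Your Step 1, which identifies $\bar\mu F$ with the symmetrized $\Sigmabf_{\beps}$, and your use of $\nu\totMKQ^{t-1}F$ in place of the paper's pathwise expression make explicit what the paper's proof leaves implicit.
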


\begin{proof}
Let $\nu$ be an arbitrary initial distribution.
Recall that covariance at step \(n\) is given by
\[\Sigmabf_{n}= \frac{1}{n}\sum_{k=1}^{n}G^{-1}\E_{\nu}[(\pois{\beps}{k} - \totMKQ\pois{\beps}{k-1})(\pois{\beps}{k} - \totMKQ\pois{\beps}{k-1})^\top]G^{-\top}\eqsp.\]
Using the matrix inequality \(\|G^{-1}AG^{-\top}\|_{\operatorname{ch}}\leq \|A\|_{\operatorname{ch}}\|G^{-1}\|_{\infty}^2\) we obtain
\begin{align}
    \|\Sigmabf_n - \Sigmabf_{\infty}\|_{\operatorname{ch}}\leq \frac{\|G^{-1}\|_{\infty}^2}{n}\Big\|\sum_{k=1}^{n}\Big(\E_{\nu}[(\pois{\beps}{k} - \totMKQ\pois{\beps}{k-1})(\pois{\beps}{k} - \totMKQ\pois{\beps}{k-1})^\top] - \Sigmabf_{\beps}\Big)\Big\|_{\operatorname{ch}} \eqsp.
\end{align}
Fix \(1\leq i, j\leq d\) and define the scalar function \(\varphi_{ij}(x)  : \Xset \rightarrow \mathbb{R}\) 
\[
\varphi_{ij}(x):=\PE\left[\e_i(\pois{\beps}{k} - \totMKQ\pois{\beps}{k-1})(\pois{\beps}{k} - \totMKQ\pois{\beps}{k-1})^\top \e_j\mid X_{k-1}=x\right].
\]
By the uniform bounds on $\pois{\beps}{}$,
\[|\varphi_{ij}(x)|\lesssim\frac{\taumix^2}{(1-\gamma)^2}\eqsp.\]
\Cref{assum:UGE} therefore yields
\begin{align}\label{eq:uge_1}
    \sum_{k=1}^n |\varphi_{ij}(X_k) - \PE_{\bar\mu}[\varphi_{ij}]| &\leq \sup_{x\in\Xset}|\varphi_{ij}(x)| \sum_{i=k}^{n}(1/4)^{\lfloor k/\taumix\rfloor} \lesssim  \frac{\taumix^3}{(1-\gamma)^2}\eqsp.
\end{align}

\end{proof}

\begin{lemma}\label{lem:clt_application}
Assume \Cref{assum:regularity}--\Cref{assum:gap}. Then, for any $n$, the following bound holds:
\begin{align}
d_{K}(W_n,\,\Sigmabf_n^{1/2}Y)
&\lesssim
\frac{\log(d)\log^{1/2}(n)}{n^{1/2}}
\left(\frac{\|\Sigma_n\|}{\varsigma^{1/2}\lambda_{\min}^2(\Sigma_n)}\right)
\nonumber\\
&\quad+
\frac{\log(d)\log^{1/4}(n)}{n^{1/4}}
\left(
\frac{\overline{\sigma}^{1/2}(\Sigma_n)}{\underline{\sigma}^{1/2}(\Sigma_n)}
+
\frac{\|\Sigma_n\|_2^{1/2}}{\varsigma^{1/4}\underline{\sigma}(\Sigma_n)\lambda_{\min}^{1/2}(\Sigma_n)}
\right)
\nonumber\\
&\quad+
\frac{\log^{5/4}(d)\log(2d^2 n)}{n^{1/4}}
\frac{1}{\underline{\sigma}^{1/2}(\Sigma_n)}
\Bigg(\frac{\|G^{-1}\|_{\infty}\taumix}{1-\gamma}\Bigg)^{3/2}
\left\|\frac{\nu}{\mu}\right\|_{\infty}^{1/2}\Bigg(
\frac{1}{\lambda_{\min}(\Sigma_{\infty})}
+
\frac{1}{\varsigma\lambda_{\min}^3(\Sigma_{\infty})}
\Bigg)^{1/2}
\eqsp,
\end{align}
where \(\varsigma\) defined in \eqref{eq:varsigma_def}
\end{lemma}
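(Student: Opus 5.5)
The plan is to write $W_n$ as a martingale and apply the general martingale CLT with random predictable variation, \Cref{lem:clt_general}, which extends \Cref{general CLT for martingales with fixed qc}. Set $X_t = n^{-1/2}G^{-1}(\pois{\beps}{t} - \totMKQ\pois{\beps}{t-1})$ and $\mathcal{H}_t = \sigma(\bar z_{-1},\dots,\bar z_t)$. Then $\{X_t\}$ is a martingale difference sequence, and \eqref{eq:broot_bouns} gives the uniform bound $\supnorm{X_t}\le 8\|G^{-1}\|_\infty\taumix/((1-\gamma)\sqrt n)$. Hence \Cref{assum:assumptions} holds trivially and $\sum_t \PE\supnorm{X_t}^3 \lesssim n^{-1/2}(\|G^{-1}\|_\infty\taumix/(1-\gamma))^3$. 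The predictable variation is $V_t = n^{-1}G^{-1}\psi(\bar z_{t-1})G^{-\top}$, where $\psi(x)=\PE[(\pois{\beps}{1}-\totMKQ\pois{\beps}{0})(\cdot)^\top\mid \bar z_0=x]$. Because $\psi$ is a bounded function of the chain, $V_t$ is random and \Cref{assum:const_variation} fails. This is exactly why the random-variation version is needed: it should produce the terms involving $\|\sum_t V_t-\Sigmabf_n\|$ and the auxiliary parameter $\varsigma$.

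The key steps, in order, are as follows.

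First, I will control the fluctuation $\sum_{t=1}^n V_t - \Sigmabf_n$ entrywise. For each $(i,j)$, write $\psi_{ij}(\bar z_{t-1}) - \PE_\nu\psi_{ij}(\bar z_{t-1})$ and pass to a martingale via the Poisson equation for $\psi_{ij}$, using the bound \eqref{eq:markov:norm_bound_tmix}. A scalar Azuma/Bernstein bound together with a union bound over $d^2$ entries then gives, with probability at least $1-1/n$,
\[\|\textstyle\sum_t V_t-\Sigmabf_n\|_{\operatorname{ch}}\lesssim \frac{\|G^{-1}\|_\infty^2\taumix^3}{(1-\gamma)^2}\sqrt{\log(d^2 n)/n}.\]
This is the source of the $\log(2d^2n)$ and $\log^{1/2}(n)$ factors. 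Plugging this bound, with the threshold $\varsigma$, into \Cref{lem:clt_general} should yield the first two lines of the claimed bound, once it is combined with Gaussian comparison (\Cref{lem:gaussian_comparison}) and the bound $\|\cdot\|\le d\|\cdot\|_{\operatorname{ch}}$ absorbed into the spectral quantities.

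Second, for the third line I need a lower bound on $\lambda_{\min}(P_k+\Sigma)$, where $P_k=\sum_{i\ge k}V_i$ and I take $\Sigma$ of order $\varsigma\Sigmabf_n$ or similar. Write $P_k$ as its expectation plus a fluctuation. The expectation is $\PE_\nu P_k \succeq \frac{n-k+1}{n}\,c\,\|\nu/\mu\|_\infty^{-1}\lambda_{\min}(\Sigmabf_\infty)$, obtained by a change of measure: the law of $\bar z_{t-1}$ under $\nu$ is compared with the stationary law, whose covariance is $G^{-1}\Sigmabf_{\beps}G^{-\top}$ up to $O(\taumix/n)$ corrections, by \Cref{lem:cov_nonstationary_taumix}. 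On the good event the fluctuation is small relative to this expectation. Summing $\PE\supnorm{X_k}^3/\lambda_{\min}(P_k+\Sigma)$ then gives a harmonic-type sum $\sum_k n^{-3/2}\,n/(n-k+1+n\varsigma)\lesssim n^{-1/2}\log n$, which is the $n^{-1/4}\log^{5/4}(d)$ behaviour after taking the square root in \Cref{general CLT for martingales with fixed qc}. On the bad event I bound the contribution by the trivial estimate $\lambda_{\min}(\Sigma)^{-1}$, which produces the $1/(\varsigma\lambda^3_{\min})$ term.

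The main obstacle is the second step: getting a uniform-in-$k$ lower bound on $\lambda_{\min}(P_k)$ for the random tail sums. The difficulty is sharpest near $k\approx n$, where few terms remain and concentration fails. I would handle that range by the regularization $\Sigma$, accepting the $\varsigma$-dependence, and use the matrix concentration plus the change-of-measure bound only for $n-k\gtrsim \taumix\log(dn)$.
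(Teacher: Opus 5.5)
Your overall architecture matches the paper's: the martingale increments $X_t$, the observation that \Cref{assum:const_variation} fails, the reduction to \Cref{lem:clt_general}, and a bulk/terminal split of $\sum_k\PE\,\lambda_{\min}^{-1}(P_k+\cdot)$. However, the step that must produce $\varsigma$ does not work.

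\Cref{lem:clt_general} needs the tail bound \eqref{eq:concentration_assumption} in operator norm and for every $t>0$. Its proof evaluates the bound at $t\lambda_{\min}(\Sigma_n)$ and integrates it to control $\PE^{1/p}\|\Sigma_n-\sum_k V_k\|^{p/2}$. The statement also pins $\varsigma$ to \eqref{eq:varsigma_def}. Your entrywise Poisson/Azuma argument with a union bound over $d^2$ entries only controls $\|\cdot\|_{\operatorname{ch}}$. Passing to the operator norm via $\|A\|\le d\|A\|_{\operatorname{ch}}$ turns the exponent into $-c\,nt^2/(d^2K^2)$ with $K\asymp\|G^{-1}\|_\infty^2\taumix^3/(1-\gamma)^2$. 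So you would prove the lemma with $\varsigma$ replaced by something of order $(1-\gamma)^4/(d^2\|G^{-1}\|_\infty^4\taumix^6)$. Since $\varsigma$ enters as $\varsigma^{-1/2}$, $\varsigma^{-1/4}$ and $\varsigma^{-1}$, this is a polynomial loss in $d$. It cannot be absorbed into $\|\Sigma_n\|$ or $\lambda_{\min}(\Sigma_n)$, which are fixed quantities in the bound.

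The missing ingredient is a matrix concentration inequality for Markov chains applied directly to $\sum_k(\PE_{k-1}[X_kX_k^\top]-\PE[X_kX_k^\top])$. The paper uses the matrix Hoeffding bound of \cite{neeman2024concentration} (Theorem~2.5, Corollary~2.8). It exploits the absolute spectral gap $\lambda$ in \Cref{assum:UGE}, which you never use, and keeps the dimension only in the prefactor. This is what defines $\varsigma$, including the factor $\alpha(\lambda)$. The same inequality, applied to the tail sums $\sum_{\ell\ge k}$ with variance proxy $(n-k+1)/(\varsigma n^2)$, is also what gives concentration of $P_k$ in your second step.

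Your second step has two further errors. First, you cannot choose the regularizer: \Cref{lem:clt_general} already fixes it at $\Sigma_n/n$, and this scale is forced. With $\Sigma\asymp\varsigma\Sigmabf_n$, the term $\overline\sigma(\Sigma)=\sqrt{\varsigma}\,\overline\sigma(\Sigmabf_n)$ in \Cref{general CLT for martingales with fixed qc} does not decay in $n$. With a regularizer of size $1/n$, the terminal indices are handled by the crude bound $\lambda_{\min}^{-1}\le n/\lambda_{\min}(\Sigmabf_\infty)$ over a window of length $n_0=\log(2d^2n)/(\varsigma\lambda_{\min}^2(\Sigmabf_\infty))$. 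This $n_0$ is exactly where the fluctuation $\sqrt{(n-k+1)\log(2d^2/\delta)}/(\sqrt{\varsigma}\,n)$ falls below the mean $\frac{n-k+1}{n}\lambda_{\min}(\Sigmabf_\infty)$; a window $\taumix\log(dn)$ does not suffice. The product $n_0\cdot n/\lambda_{\min}(\Sigmabf_\infty)$ is the source of both the $1/(\varsigma\lambda_{\min}^3(\Sigmabf_\infty))$ term and the $\log(2d^2n)$ factor. The bad event, with $\delta=(n-k+1)^{-1}$, only contributes $n\log n/\lambda_{\min}(\Sigmabf_\infty)$.

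Second, the change of measure runs the other way. The density bound $\nu\le\|\nu/\mu\|_\infty\,\mu$ gives $\PE_\nu[H]\le\|\nu/\mu\|_\infty\PE_\mu[H]$ for $H\ge0$, which is an upper bound only. It cannot yield $\PE_\nu P_k\succeq\|\nu/\mu\|_\infty^{-1}\PE_\mu P_k$; that would need $\mu/\nu$ bounded, which fails e.g.\ for a point mass $\nu$. The paper applies the upper bound to $H=\lambda_{\min}^{-1}(P_k+\Sigmabf_\infty/n)$ and then argues under stationarity, where $\PE_\mu P_k=\frac{n-k+1}{n}\Sigmabf_\infty$. That is where $\|\nu/\mu\|_\infty^{1/2}$ in the third line comes from.
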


\begin{proof}
The proof proceeds by applying \Cref{lem:clt_general}
to the martingale sum
\[
W_n = \sum_{k=1}^n X_k \eqsp,
\qquad
X_k = \frac{1}{\sqrt n}\,G^{-1}\bigl(\pois{\beps}{k}-\totMKQ\pois{\beps}{k-1}\bigr)\eqsp.
\]

\paragraph{Step 1: Uniform bounds on the martingale increments.}
Since $\beps$ has at most one nonzero coordinate,
\[
\|\beps\|_2=\|\beps\|_\infty \lesssim \frac{1}{1-\gamma}\eqsp.
\]
Moreover, \Cref{lem:pois_beps_l2} yields
\[
\|\pois{\beps}{}\|_2 \lesssim \frac{\taumix}{1-\gamma}\eqsp.
\]
Consequently,
\[
\|\totMKQ\pois{\beps}{}\|_2
=
\|\pois{\beps}{}-\beps\|_2
\lesssim
\frac{\taumix}{1-\gamma}\eqsp.
\]
These bounds imply
\begin{align}\label{eq:x_k_l2_bound}
\|X_k\|_2
&\lesssim
\frac{\|G^{-1}\|_2\taumix}{\sqrt n(1-\gamma)} \eqsp,
\qquad
\|X_kX_k^\top\|_2
=
\|X_k\|_2^2
\lesssim
\frac{\|G^{-1}\|_2^2\taumix^2}{n(1-\gamma)^2}\eqsp.
\end{align}

\paragraph{Step 2: Concentration of the predictable quadratic variation.}
Define
\[
F(\bar z_{k-1})
:=
\PE_{k-1}[X_kX_k^\top]-\PE[X_kX_k^\top]\eqsp.
\]
By \eqref{eq:x_k_l2_bound},
\[
\|F(\bar z_{k-1})\|
\lesssim
\frac{\|G^{-1}\|^2\taumix^2}{n^(1-\gamma)^2}
\eqsp.
\]
Since \Cref{assum:UGE} ensures that \(\{\bar z_k\}\) is an ergodic Markov chain with an absolute spectral gap \(\lambda\), the matrix Hoeffding inequality for Markov chains
(\cite{neeman2024concentration}[Theorem~2.5, Corollary~2.8]) yields
\begin{align}
\Pb\Big(
\Big\|
\sum_{k=1}^n \PE_{k-1}[X_kX_k^\top]-\Sigma_n
\Big\| \ge t
\Big)
\le
2d^{2-\pi/4}\exp\!\Big(
-\frac{(1-\gamma)^4}{\alpha(\lambda)\|G^{-1}\|^4\taumix^4}\,nt^2
\Big)\eqsp,
\end{align}
where \(\alpha(\lambda) = \frac{1+\lambda}{1-\lambda}\).
Thus, the concentration assumption \eqref{eq:concentration_assumption} in
\Cref{lem:clt_general} holds with
\begin{align}\label{eq:varsigma_def}
\varsigma = \frac{(1-\gamma)^4}{\alpha(\lambda)\|G^{-1}\|^4\taumix^4}\eqsp.
\end{align}

\paragraph{Step 3: Reduction to the eigenvalue sum.}
Applying \Cref{lem:clt_general} yields
\begin{align}\label{eq:clt_with_lambda}
d_{K}(W_n,\,\Sigmabf_n^{1/2}Y)
&\lesssim
\frac{[\ln_+ d]^{5/4}}{\underline{\sigma}^{1/2}(\Sigma_n)}
\Bigg(
\frac{\overline{\sigma}(\Sigma_n)}{\sqrt n}
+
\sum_{k=1}^n
\PE\frac{\|X_k\|_\infty^3}{\lambda_{\min}(P_k+\Sigma_n/n)}
\Bigg)^{1/2}
\nonumber\\
&\quad+
\frac{1}{(\varsigma n)^{1/2}}
\Bigg(
\frac{\log(d)\log^{1/2}(n)\|\Sigma_n\|}{\lambda_{\min}^2(\Sigma_n)}
\Bigg)
\nonumber\\
&\quad+
\frac{1}{(\varsigma n)^{1/4}}
\Bigg(
\frac{\log(d)\log^{1/4}(n)\|\Sigma_n\|_2^{1/2}}
{\underline{\sigma}(\Sigma_n)\lambda_{\min}^{1/2}(\Sigma_n)}
\Bigg).
\end{align}
Using \eqref{eq:x_k_l2_bound}, this further implies
\begin{align}
d_{K}(W_n,\,\Sigmabf_n^{1/2}Y)
&\lesssim
\frac{\log(d)\log^{1/2}(n)}{n^{1/2}}
\left(\frac{\|\Sigma_n\|}{
\varsigma^{1/2}\lambda_{\min}^2(\Sigma_n)}\right)
\nonumber\\
&\quad+
\frac{\log(d)\log^{1/4}(n)}{n^{1/4}}
\left(
\frac{\overline{\sigma}^{1/2}(\Sigma_n)}{\underline{\sigma}^{1/2}(\Sigma_n)}
+
\frac{\|\Sigma_n\|_2^{1/2}}
{\varsigma^{1/4}\underline{\sigma}(\Sigma_n)\lambda_{\min}^{1/2}(\Sigma_n)}
\right)
\nonumber\\
&\quad+
\frac{\log^{5/4}(d)}{\underline{\sigma}^{1/2}(\Sigma_n)}
\Bigg(
\frac{\|G^{-1}\|_{\infty}\taumix}{1-\gamma}
\Bigg)^{3/2}
\frac{1}{n^{3/4}}
\Bigg(
\sum_{k=1}^n \PE \lambda_{\min}^{-1}(P_k+\Sigma_n/n)
\Bigg)^{1/2}
\eqsp.
\end{align}
Therefore, it remains to control the sum of inverse minimal eigenvalues.

\paragraph{Step 4: Control of the eigenvalue sum.}
Let
\[
P_k := \sum_{i=k}^n \PE_{\nu}[X_iX_i^\top]\eqsp,
\]
where $\nu$ denotes the initial distribution of the underlying Markov chain.
The goal is to show that
\[
\sum_{k=1}^n \PE \lambda_{\min}^{-1}(P_k+\Sigma_\infty/n)
\lesssim
n\log n
+
\frac{n\log(2d^2n)}{\varsigma\lambda_{\min}^2(\Sigma_\infty)}\eqsp.
\]
The argument is based on a decomposition into two regimes. Introduce a burn-in
index
\begin{align}\label{eq:burn_in_definition}
n_0 := \frac{\log(2d^2n)}{\varsigma\lambda_{\min}^2(\Sigma_\infty)}\eqsp.
\end{align}
The sum is split into the ranges
\[
1\le k\le n-n_0
\qquad\text{and}\qquad
n-n_0<k\le n\eqsp.
\]
We first control the nonterminal range. By
\cite{neeman2024concentration}[Theorem~2.5, Corollary~2.8],
\begin{align}\label{eq:concentration_P_k}
\Pb\Big(
\Big\|
\sum_{\ell=k}^n \PE_{\ell-1}[X_\ell X_\ell^\top]-\PE[P_k]
\Big\| \ge t
\Big)
\le
2d^2\exp\!\Big(
-\frac{\varsigma n^2 t^2}{n-k+1}
\Big)\eqsp.
\end{align}
Since, for any nonnegative functional $H$,
\[
\PE_{z_1\sim\nu}[H(z_1,\ldots,z_n)]
\le
\Big\|\frac{\nu}{\mu}\Big\|_\infty
\PE_{z_1\sim\mu}[H(z_1,\ldots,z_n)]\eqsp,
\]
it suffices to consider the stationary initialization $z_1\sim\mu$.
In that case,
\[
\PE[P_k] = \frac{n-k+1}{n}\Sigma_\infty\eqsp.
\]
Rewriting \eqref{eq:concentration_P_k} in terms of $\delta$, we obtain that,
with probability at least $1-\delta$,
\[
\Big\|P_k-\frac{n-k+1}{n}\Sigma_\infty\Big\|
\le
\frac{\sqrt{n-k+1}\,\log^{1/2}(2d^2/\delta)}{\sqrt{\varsigma}\,n}\eqsp.
\]
Hence, by Lidskii's inequality,
\[
\lambda_{\min}(P_k+\Sigma_\infty/n)
\ge
\frac{n-k+1}{n}\lambda_{\min}(\Sigma_\infty)
-
\frac{\sqrt{n-k+1}\,\log^{1/2}(2d^2/\delta)}{\sqrt{\varsigma}\,n}
\]
on the same event. On the complementary event, the crude bound
\[
\lambda_{\min}(P_k+\Sigma_\infty/n)\ge \frac{1}{n}\lambda_{\min}(\Sigma_\infty)
\]
always holds. Choosing $\delta=(n-k+1)^{-1}$ gives
\begin{align}
&\PE\!\left[\frac{1}{\lambda_{\min}(P_k+\Sigma_\infty/n)}\right]\le
\frac{\delta n}{\lambda_{\min}(\Sigma_\infty)}
+
\frac{n}
{(n-k+1)\lambda_{\min}(\Sigma_\infty)
-
\sqrt{n-k+1}\,\frac{\log^{1/2}(2d^2/\delta)}{\sqrt{\varsigma}}}
\nonumber\\
&\qquad\le
\frac{n}{(n-k+1)\lambda_{\min}(\Sigma_\infty)}
+
\frac{n}
{(n-k+1)\lambda_{\min}(\Sigma_\infty)
-
\sqrt{n-k+1}\,\frac{\log^{1/2}(2d^2(n-k+1))}{\sqrt{\varsigma}}}
\nonumber\\
&\qquad\le
\frac{3n}{(n-k+1)\lambda_{\min}(\Sigma_\infty)}
\end{align}
provided that $k\le n-n_0$. Summing over this range gives
\begin{align}\label{eq:start_range_bound}
\sum_{k=1}^{n-n_0}
\PE\!\left[\frac{1}{\lambda_{\min}(P_k+\Sigma_\infty/n)}\right]
\lesssim
\frac{n\log n}{\lambda_{\min}(\Sigma_\infty)}\eqsp.
\end{align}
It remains to control the terminal range. Using the crude lower bound
\(
\lambda_{\min}(P_k+\Sigma_\infty/n)\ge n^{-1}\lambda_{\min}(\Sigma_\infty)
\),
we obtain
\begin{align}\label{eq:burn_in_range}
\sum_{k=n-n_0}^{n}
\PE\!\left[\frac{1}{\lambda_{\min}(P_k+\Sigma_\infty/n)}\right]
\lesssim
\frac{n_0 n}{\lambda_{\min}(\Sigma_\infty)}
\lesssim
\frac{n\log(2d^2 n)}{\varsigma\lambda_{\min}^3(\Sigma_\infty)}
\eqsp.
\end{align}
Combining \eqref{eq:start_range_bound} and \eqref{eq:burn_in_range} and
substituting the resulting estimate into \eqref{eq:clt_with_lambda} yields the desired bound.
\end{proof}

%% file: appendix_martingale_limits.tex
\subsection{Gaussian approximation}
\label{section:gar}
\textbf{Notations and definitions.} In this section, we provide the proofs of results corresponding to the central limit theorem for vector-valued martingale difference sequences. We follow the notations outlined in \Cref{sec:vector-martingales-bounds} under \Cref{assum:assumptions}. Namely, we assume that $\{X_k\}_{k\ge 1}$ is a $d$-dimensional martingale difference w.r.t. filtration $\{\mathcal{F}_k\}_{k\ge 0}$. Recall that we write $V_k$ and \(P_k\)  for 
\[
V_k = \PE[X_k X_k^\top \mid \mathcal{F}_{k-1}]\eqsp,\quad P_k = \sum_{i=k}^nV_i\eqsp.
\]

Consider a random vector $\eta \sim \mathcal{N}(0,I_d)$, independent of
$\mathcal{F}_n$. We approximate the probabilities with the following smooth function:
\[
\varphi_r(x,\varepsilon) := \mathbb{P}\bigl(x + \varepsilon \eta \in A_r\bigr)\eqsp,\quad A_r \in \mathcal{R}\eqsp,
\]
where
$\varepsilon > 0$ is a fixed positive number, and $\mathcal R$ is a class of hyperrectangles. Note that for a fixed $\varepsilon$, the function
$x \mapsto \varphi_r(x,\varepsilon)$ is infinitely differentiable. Furthemore, the following lemma holds. 
\begin{lemma}[Lemma~2.3 in \cite{10.1214/20-AAP1629}]
\label{sum of derivatives lemma}
For each $x,r \in \mathbb{R}^d$ and integer $s \ge 1$
we have
\begin{equation}
\label{eq:phi-derivative-bound}
\sum_{j_1,\ldots,j_s=1}^{d}
\left|
\frac{\partial}{\partial x_{j_1}}
\cdots
\frac{\partial}{\partial x_{j_s}}
\varphi_r(x,\varepsilon)
\right|
\le
C_s \varepsilon^{-s} (\log d)^{s/2}\eqsp,
\end{equation} 
where \(C_s>0\) is a constant depending only on \(s\).
\end{lemma}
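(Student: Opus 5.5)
The plan is to exploit the product structure of $\varphi_r$. Write $A_r=\prod_{j=1}^d(a_j,b_j]$. Since $\eta$ has independent standard normal coordinates,
\[
\varphi_r(x,\varepsilon)=\prod_{j=1}^d f_j(x_j),\qquad f_j(t)=\Phi\Big(\tfrac{b_j-t}{\varepsilon}\Big)-\Phi\Big(\tfrac{a_j-t}{\varepsilon}\Big),
\]
where $\Phi$ is the standard normal distribution function with density $\phi$.

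A mixed derivative $\partial_{j_1}\cdots\partial_{j_s}\varphi_r$ depends only on the set of distinct indices $\{k_1,\dots,k_q\}$ and their multiplicities $m_1+\dots+m_q=s$. It equals
\[
\prod_{i=1}^q f_{k_i}^{(m_i)}\prod_{k\notin\{k_i\}}f_k,
\qquad
|f_j^{(m)}(t)|\le \varepsilon^{-m}\big(|\phi^{(m-1)}(u_j)|+|\phi^{(m-1)}(v_j)|\big),
\]
with $u_j,v_j$ the standardized endpoints. The sum over $(j_1,\dots,j_s)$ therefore reduces, up to a combinatorial constant depending only on $s$, to finitely many sums of the form
\[
\varepsilon^{-s}\sum_{k_1,\dots,k_q\ \text{distinct}}\ \prod_{i=1}^q g^{(m_i)}_{k_i}\prod_{k\notin\{k_i\}}f_k,
\]
where $g^{(m)}_j$ is the bracket in the display above. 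The task is to show that each such sum is at most $C_s(\log d)^{s/2}$.

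\textbf{Step 1 (Gaussian tail inequality).} First I would prove a one-dimensional estimate: for every $m\ge1$ and $t\in\rset$,
\[
|\phi^{(m-1)}(t)|\le C_m\, \bar\Phi(|t|)\,\log^{m/2}\!\big(e/\bar\Phi(|t|)\big),\qquad \bar\Phi=1-\Phi .
\]
This follows from $|\phi^{(m-1)}(t)|\lesssim(1+|t|)^{m-1}\phi(t)$, the Mills ratio $\phi(t)\asymp(1+|t|)\bar\Phi(|t|)$, and $|t|\lesssim\sqrt{\log(e/\bar\Phi(|t|))}$.

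\textbf{Step 2 (coordinates close to the boundary).} Set $p_j=1-f_j\in[0,1]$. If $f_j\ge 1/2$, both $\bar\Phi(|u_j|)$ and $\bar\Phi(|v_j|)$ are at most $p_j$ whenever the corresponding endpoint is "outside" the bulk. When an endpoint is inside, $\phi$ at that point is bounded by a constant multiple of $p_j$. In either case Step 1 gives
\[
g^{(m)}_j\le C_m\,h_m(p_j),\qquad h_m(p)=p\log^{m/2}(e/p),
\]
using that $h_m$ is increasing near $0$.

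\textbf{Step 3 (coordinates far from the boundary).} Let $N$ be the number of indices with $f_j<1/2$. For these indices I use only the crude bound $g^{(m)}_j\le C_m$. Their remaining factors contribute at most $2^{-(N-\#\text{selected})}$, so their total contribution is $\lesssim N^q2^{-N}\le C_s$.

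\textbf{Step 4 (summing the good indices).} For indices with $f_j\ge1/2$, use $\prod f_k\le \exp(-\sum_k p_k)$, which only loses constants on the $q$ selected factors since each $f_{k_i}\ge 1/2$. The sum over distinct tuples is then bounded by
\[
\prod_{i=1}^q\Big(\sum_{j}C\,h_{m_i}(p_j)\Big)\,e^{-S},\qquad S=\sum_j p_j .
\]
By concavity of $h_m$ on $[0,1]$ (after a harmless modification near $1$) and Jensen's inequality, $\sum_j h_m(p_j)\le S\log^{m/2}(ed/S)$. Hence each term is at most
\[
C\,S^q\log^{s/2}(ed/S)\,e^{-S}.
\]
Maximizing over $S\in[0,d]$ gives $C_s(\log d)^{s/2}$: for $S\le1$ one uses $S^q\log^{s/2}(ed/S)\lesssim\log^{s/2}(ed)$, since $q\ge1$ absorbs the logarithmic blow-up in $1/S$; for $S\ge1$ the factor $e^{-S}$ kills $S^q$. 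Combining the cases over all multiplicity patterns proves \eqref{eq:phi-derivative-bound}.

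The main obstacle is Step 2. One must verify carefully that $g_j^{(m)}$ is controlled by a function of $p_j$ alone, uniformly over interval lengths, including tiny intervals and infinite endpoints. For narrow intervals $p_j$ is close to $1$ and $f_j<1/2$, so such coordinates fall into Step 3. I would also make sure the concave envelope of $h_m$ on $[0,1]$ is used, so that Jensen applies without monotonicity issues.
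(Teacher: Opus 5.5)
The paper gives no proof of this lemma. It is quoted from Fang and Koike (their Lemma~2.3, the ``modification of an estimate of Anderson, Hall and Titterington'' mentioned in the introduction), so there is no in-paper argument to compare against. Your proposal is a correct, self-contained proof of the cited estimate. The following parts all check out:
\begin{itemize}
\item the factorization $\varphi_r=\prod_j f_j$;
\item the reduction, via the finitely many set partitions of $\{1,\dots,s\}$, to sums over distinct indices;
\item the tail bound of Step 1;
\item the splitting by whether $f_j\ge 1/2$;
\item the trade-off of $S^q\log^{s/2}(ed/S)$ against $e^{-S}$.
\end{itemize}

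A few points should be tightened when you write it up.

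First, the ``endpoint inside the bulk'' case in Step 2 cannot occur. Since $f_j\le\Phi(v_j)$ and $f_j\le 1-\Phi(u_j)$, the condition $f_j\ge 1/2$ forces $u_j\le 0\le v_j$. Hence $\Phi(u_j)\le p_j$ and $1-\Phi(v_j)\le p_j$ hold directly.

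Second, $h_m(p)=p\log^{m/2}(e/p)$ is increasing only on $[0,e^{1-m/2}]$ and concave only on $[0,e^{2-m/2}]$. So for larger $m$ neither property holds on all of $[0,1/2]$. The simplest fix is to work throughout with $\tilde h_m(p)=p\log^{m/2}(e^m/p)$. This function dominates $h_m$, is increasing and concave on $(0,1]$, and Jensen with $D\le d$ good indices still gives $\sum_j \tilde h_m(p_j)\le S\log^{m/2}(e^m d/S)\lesssim_m S\log^{m/2}(ed/S)$ for $S\le d$.

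Third, say explicitly how selections mixing bad and good indices are handled. The sum over distinct tuples factorizes into two parts:
\begin{itemize}
\item a bad part, bounded by $C_s$ exactly as in Step 3;
\item a good part for the remaining sub-pattern, bounded by $C_s\log^{s'/2}(ed)$ via Step 4 when at least one selected index is good, and by $1$ otherwise.
\end{itemize}

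Finally, what you actually prove is the bound $C_s\varepsilon^{-s}\log^{s/2}(ed)$. The stated form with $(\log d)^{s/2}$ implicitly requires $d\ge 2$, since for $d=1$ the left-hand side is generally positive while the right-hand side vanishes.
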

For convenience, we restate \Cref{general CLT for martingales with fixed qc}.

\begin{theorem}\label{thm:clt_const_var}
Under \Cref{assum:assumptions} and \Cref{assum:const_variation}, for any \(d\)-dimensional symmetric positive-definite matrix \(\Sigma\succ 0 \), 
\begin{equation}
d_{K}(S,T)\lesssim
\frac{[\ln_+ d]^{5/4}}{\underline{\sigma}^{1/2}(\Sigma_n)}
\Bigg(\overline{\sigma}(\Sigma)+\sum_{k}\PE\frac{\|X_k\|_\infty^3 }{\lambda_{\min}({P}_k +\Sigma)}\Bigg)^{1/2}\eqsp.
\end{equation}
\end{theorem}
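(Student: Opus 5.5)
The plan is a Lindeberg-type replacement scheme in the spirit of \cite{KOJEVNIKOV2022109448}, combined with the ``conditional Gaussian completion'' trick of \cite{rollin2018quantitative}. The key structural fact is \Cref{assum:const_variation}: since $\sum_{i=1}^n V_i=\Sigma_n$ a.s., the remaining variance $P_k=\Sigma_n-\sum_{i<k}V_i$ is $\mathcal F_{k-1}$-measurable. Throughout, $T\sim\mathcal N(0,\Sigma_n)$. Fix a rectangle $A_r\in\mathcal R$, a smoothing level $\varepsilon>0$, and independent standard Gaussians $\eta,Z$. Set $h(x)=\varphi_r(x,\varepsilon)$, and for a random p.s.d.\ matrix $M$ define
\[
\psi(x,M)=\PE_Z\, h\bigl(x+M^{1/2}Z\bigr).
\]

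\textbf{Step 1: smoothing and the $\Sigma$-regularization.} We replace $\mathbf 1_{A_r}$ by $h$ and add an independent $\Sigma^{1/2}Z'$ to both $S$ and $T$. We then control the cost by Gaussian anti-concentration on rectangles (Nazarov's inequality) for $T+\varepsilon\eta$ and for $T+\Sigma^{1/2}Z'$. This costs
\[
\lesssim \frac{(\varepsilon+\overline\sigma(\Sigma))\sqrt{\log d}}{\underline\sigma(\Sigma_n)}.
\]
Here we pass from $\overline\sigma(\Sigma)$ to the form in the statement after optimization. The quantity left to bound is
\[
\PE\,\psi(S_n,\Sigma)-\PE\,\psi(0,\Sigma_n+\Sigma),
\]
and it telescopes as
\[
\sum_{k=1}^n \Bigl(\PE\,\psi(S_k,P_{k+1}+\Sigma)-\PE\,\psi(S_{k-1},P_k+\Sigma)\Bigr).
\]
All matrices here are predictable and $\succeq\Sigma\succ0$.

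\textbf{Step 2: one Lindeberg step.} We write $P_k+\Sigma=(P_{k+1}+\Sigma)+V_k$ and use the semigroup identity
\[
\psi(x,M+V)=\PE_{Z''}\,\psi\bigl(x+V^{1/2}Z'',M\bigr),
\]
where $M=P_{k+1}+\Sigma$ is $\mathcal F_{k-1}$-measurable given $V_k$. Next we Taylor-expand both $\psi(S_{k-1}+X_k,M)$ and $\PE_{Z''}\psi(S_{k-1}+V_k^{1/2}Z'',M)$ around $S_{k-1}$ to third order. The zeroth-order terms coincide. The first-order terms vanish after conditioning on $\mathcal F_{k-1}$, by the martingale property and centering of $Z''$. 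The second-order terms coincide because $\PE[X_kX_k^\top\mid\mathcal F_{k-1}]=V_k=\PE\, V_k^{1/2}Z''Z''^{\top}V_k^{1/2}$. Hence each increment is bounded by
\[
\PE\Bigl[\bigl(\|X_k\|_\infty^3+\PE_{Z''}\|V_k^{1/2}Z''\|_\infty^3\bigr)\sup_{x}\sum_{j_1,j_2,j_3}\bigl|\partial^3_{j_1j_2j_3}\psi(x,M)\bigr|\Bigr].
\]
The Gaussian third moment is then controlled by $\log^{3/2}d\cdot\PE[\|X_k\|_\infty^2\mid\mathcal F_{k-1}]^{3/2}$-type quantities, or absorbed similarly.

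\textbf{Step 3: third-derivative bound (main obstacle).} We need
\[
\sum_{j_1,j_2,j_3}\bigl|\partial^3\psi(x,M)\bigr|\lesssim \frac{\log^{3/2} d}{\varepsilon\,\lambda_{\min}(M)}.
\]
Equivalently, only one derivative should fall on $h$ (\Cref{sum of derivatives lemma} with $s=1$), while the other two are moved onto the Gaussian density of $M^{1/2}Z$ by integration by parts. The attempt is as follows. We write $M=\lambda_{\min}(M)I+(M-\lambda_{\min}(M)I)$ and peel off an isotropic component $\sqrt{\lambda_{\min}(M)}\,Z_0$. Differentiating the Gaussian kernel of $Z_0$ twice produces Hermite weights of size $\lambda_{\min}^{-1}(M)$, while the remaining smooth function $h$ convolved with the rest keeps the $\ell_1$-derivative bound of \Cref{sum of derivatives lemma}. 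The delicate point is to keep a $\log d$ dependence rather than a polynomial one when summing the Hermite factors over $j_2,j_3$. I would try the mixed-derivative bound of \cite{10.1214/20-AAP1629}, applied to $x\mapsto \PE\, h(x+\sqrt{\lambda}Z_0)$, which is itself of the form $\varphi_r(\cdot,\sqrt{\varepsilon^2+\lambda})$ up to the nonisotropic part.

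\textbf{Step 4: collect and optimize.} Summing Step 2 over $k$ gives
\[
\frac{\log^{3/2}d}{\varepsilon}\sum_k\PE\frac{\|X_k\|_\infty^3}{\lambda_{\min}(P_k+\Sigma)},
\]
after replacing $P_{k+1}$ by $P_k$ at the cost of an absolute constant, or by reindexing. Adding the Step 1 cost $\varepsilon\sqrt{\log d}/\underline\sigma(\Sigma_n)$ and choosing $\varepsilon$ to balance the two terms yields the square-root form with the factor $\log^{5/4}(d)/\underline\sigma^{1/2}(\Sigma_n)$ claimed in \Cref{general CLT for martingales with fixed qc}.
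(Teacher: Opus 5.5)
\textbf{Where your approach departs from the paper's.} You use a conditional Gaussian completion and then a plain third-order Taylor/Lindeberg comparison for $\psi(\cdot,M)$. The paper instead solves, for each $k$, the Stein equation for $\mathcal N(0,\bar P_k)$ with $\bar P_k:=P_k+\Sigma$, and applies Stein's identity to the Gaussian part.

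The step you flag as the main obstacle is easy. We have $\psi(x,M)=\varphi_r\bigl(x,(M+\varepsilon^2 I)^{1/2}\bigr)$ and $M+\varepsilon^2 I\succeq(\lambda_{\min}(M)+\varepsilon^2)I$. So \Cref{lem:comparison_third_derivatives} together with the Fang--Koike bound for $s=3$ gives $\sup_x\sum_{j_1,j_2,j_3}|\partial^3_{j_1j_2j_3}\psi(x,M)|\lesssim\log^{3/2}(d)\,(\lambda_{\min}(M)+\varepsilon^2)^{-3/2}\le\log^{3/2}(d)/(\varepsilon\,\lambda_{\min}(M))$. No Hermite weights are needed. (In Step 1, note that $S$ itself has no anti-concentration. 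The paper handles the $\Sigma$-regularization through the $\ell_\infty$-Lipschitz constant $\lesssim\varepsilon^{-1}\sqrt{\log d}$ of $\varphi_r(\cdot,\varepsilon)$.)

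The genuine gaps are in Steps 2 and 4: as set up, your argument only proves a strictly weaker inequality than the stated one.

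\textbf{Gap 1: $P_{k+1}$ instead of $P_k$.} On the $X_k$-side you differentiate $\psi(\cdot,M)$ with $M=P_{k+1}+\Sigma$, so your denominator is $\lambda_{\min}(P_{k+1}+\Sigma)$. Replacing it by $\lambda_{\min}(P_k+\Sigma)=\lambda_{\min}(P_{k+1}+V_k+\Sigma)$ is not an absolute-constant change, and reindexing does not help. When $V_k$ dominates the remaining variance the ratio is unbounded. For example, at $k=n$ we have $P_{n+1}=0$, so you get $\PE\|X_n\|_\infty^3/\lambda_{\min}(\Sigma)$ instead of $\PE\|X_n\|_\infty^3/\lambda_{\min}(V_n+\Sigma)$, which is much worse for small $\Sigma$.

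\textbf{Gap 2: the Gaussian third moment.} It cannot be absorbed. In general $\PE_{Z''}\|V_k^{1/2}Z''\|_\infty^3\lesssim\log^{3/2}(d)\,\overline{\sigma}^3(V_k)$, and this is sharp. For a Rademacher vector $X_k$ (so $\|X_k\|_\infty=1$, $V_k=I$) it exceeds $\PE\|X_k\|_\infty^3$ by a factor $\asymp\log^{3/2}(d)$. Carried through, this raises the power of $\log d$ in the final bound by $3/4$. In Step 4 you silently drop it.

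\textbf{How the paper's Stein step fixes both.} Since $\bar P_k$ is $\mathcal F_{k-1}$-measurable, one takes the $\mathcal F_{k-1}$-measurable Stein solution $f_k$ with target $\mathcal N(0,\bar P_k)$ and test function $\varphi_r(S_{k-1}+\cdot\,,\varepsilon)$.
\begin{itemize}
\item Writing $\bar P_k=\bar P_{k+1}+V_k$, Stein's identity cancels the contribution of $G:=\bar P_{k+1}^{1/2}Z$ exactly.
\item What remains is $\operatorname{Tr}\bigl(V_k\,\PE_{k-1}[\nabla^2 f_k(X_k+G)-\nabla^2 f_k(G)]\bigr)$ and the Taylor remainder $\PE_{k-1}\bigl[X_k^\top(\nabla^2 f_k(\tau X_k+G)-\nabla^2 f_k(G))X_k\bigr]$. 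Both involve only the true increment $X_k$, via $\|V_k\|_{\operatorname{ch}}\le\PE_{k-1}\|X_k\|_\infty^2$, and no Gaussian $\ell_\infty$-moments.
\item $\nabla^2 f_k$ is a $t$-average of $\nabla^2\varphi_r$ at smoothing covariance $(1-t)\bar P_k+\varepsilon^2 I$. So by \Cref{lem:elemetwise_regularity} its Lipschitz constant from $\ell_\infty$ to $\|\cdot\|_e$ is $\lesssim\log^{3/2}(d)\int_0^1\bigl((1-t)\lambda_{\min}(\bar P_k)+\varepsilon^2\bigr)^{-3/2}dt\lesssim\log^{3/2}(d)/(\varepsilon\,\lambda_{\min}(\bar P_k))$.
\end{itemize}
The smoothing contributed by $V_k$ is thus built into $f_k$, which is exactly why $P_k$ rather than $P_{k+1}$ appears.
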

\begin{proof}
Throughout the proof, the parameter $\varepsilon > 0$ is fixed, and will be optimized at the end.
For $k \in \{0,\dots,n\}$, define the partial sums
\[
S_k \coloneqq \sum_{i=1}^k X_i\eqsp,
\qquad
T_k \coloneqq \sum_{i=k}^n V_i^{1/2} Z_i\eqsp,
\]
 where \(Z_i\) are i.i.d. \(\mathcal{N}(0,I)\) random vectors, independent of the
martingale difference sequence.
We use the conventions $S_0 = 0$ and $T_{n+1} = 0$. Also denote shifted random vectors and covariances as
\begin{equation}
    \bar T_k=T_k + \Sigma^{1/2}Z\eqsp, \quad \bar P_k = P_k+\Sigma\eqsp,
\end{equation}
where Z is independent standard gaussian random variable independent of the filtration \(\{\mathcal{F}_k\}_{k=0}^n\) and \(Z_i\).
We start with applying Lemma \ref{lem:smoothing}:
\begin{align}
    d_{K}(S,T)
&\le
\sup_{r \in \mathbb{R}^d}
\bigl|
\PE\bigl[\varphi_r(S_n,\varepsilon)
-
\varphi_r(T_1,\varepsilon)]
\bigr|
+
C \varepsilon [\ln_{+} d]\underline{\sigma}^{-1}(\Sigma_n)\\
&\leq 
\sup_{r \in \mathbb{R}^d}
\bigl|
\PE\bigl[\varphi_r(S_n,\varepsilon)
-
\varphi_r(\bar T_1,\varepsilon)]
\bigr| + \sup_{r \in \mathbb{R}^d}
\bigl|
\PE\bigl[\varphi_r(\bar T_1,\varepsilon)
-
\varphi_r(T_1,\varepsilon)]
\bigr|
+
C \varepsilon [\ln_{+} d]\underline{\sigma}^{-1}(\Sigma_n).
\end{align}
The Lagrange mean value theorem combined with Lemma \ref{sum of derivatives lemma} implies that \(\varphi(x, \varepsilon)\) Lipshitz in the \(\infty\)-norm:
\begin{align}
    \|\varphi_r(x, \varepsilon) - \varphi_r(y,\varepsilon)\|_\infty &= \|\langle\nabla\varphi_r(x + \theta(y-x),\varepsilon), y-x\rangle\|_\infty \leq \|\nabla\varphi_r(x + \theta(y-x),\varepsilon)\|_1\|y-x\|_\infty\\
    &\leq C_1\varepsilon^{-1}[\ln_+d]^{1/2}\|x-y\|_\infty\eqsp.
\end{align}
Then, 
\begin{align}
    \sup_{r \in \mathbb{R}^d}
\bigl|
\PE\bigl[\varphi_r(\bar T_1,\varepsilon)
-
\varphi_r(T_1,\varepsilon)]
\bigr|&\lesssim 
\varepsilon^{-1}[\ln_+d]^{1/2}
\PE[\|\bar T_1 - T_1\|_\infty]
=
\varepsilon^{-1}[\ln_+d]^{1/2}
\PE[\|\Sigma^{1/2}Z\|_\infty]
\\
&\lesssim 
\varepsilon^{-1}[\ln_+d]^{1/2}\overline{\sigma}(\Sigma)\eqsp,
\end{align}
where \(\overline{\sigma}^2(\Sigma) = \max_j\Sigma_{jj}\).
Applying Lindeberg’s decomposition together with the triangle inequality yield the telescoping representation
\begin{equation}\label{eq:telescope}
\Bigl|
\mathbb{E}\bigl[\varphi_r(S_n,\varepsilon)
-
\varphi_r(\bar T_1,\varepsilon)\bigr]
\Bigr|
\le
\sum_{k=1}^n
\Bigl|
\mathbb{E}\bigl[
\varphi_r(S_k + \bar T_{k+1},\varepsilon)
-
\varphi_r(S_{k-1} + \bar T_k,\varepsilon)
\bigr]
\Bigr|.
\end{equation}
Lemma~\ref{lem:stein-solution} ensures that for each $k \in [n]$ there exists a \(\mathcal{F}_{k-1}\)-measurable function
$f_k : \mathbb{R}^d \to \mathbb{R}$ such that, for all $w \in \mathbb{R}^d$,
\begin{align}
\varphi_r(S_{k-1} + w, \varepsilon)
-
\mathbb{E}\!\left[\varphi_r\!\left(S_{k-1} + {\bar P_k}^{1/2} Z, \varepsilon\right)\right]
&=
\nabla^\top \bar P_k \nabla f_k(w)
-
w^\top \nabla f_k(w),
\label{eq:stein_fk}
\end{align}
where $Z \sim \mathcal{N}(0, I_d)$ is a standard $d$-dimensional Gaussian random vector.
Note that, by \Cref{assum:const_variation},
\begin{align}
P_{k+1} = P_1 - \sum_{j=1}^{k} V_j = \Sigma_n - \sum_{j=1}^k V_j\eqsp,\quad
P_{k} = P_1 - \sum_{j=1}^{k-1} V_j = \Sigma_n - \sum_{j=1}^{k-1} V_j\eqsp,
\end{align}
and hence both $P_k$ and $P_{k+1}$ are $\mathcal{F}_{k-1}$-measurable.
Each term in \eqref{eq:telescope} can then be rewritten as follows:
\begin{align*}
&\PE[\varphi_r(S_{k-1} + X_k + \bar T_{k+1} ,\varepsilon) \mid \mathcal F_{k-1}] 
-
\PE[\varphi_r(S_{k-1} + \bar T_k,\varepsilon) \mid \mathcal F_{k-1}] \\ 
&=\PE[\varphi_r(S_{k-1} + X_k + \bar T_{k+1} ,\varepsilon) \mid \mathcal F_{k-1}] 
-
\PE[\varphi_r(S_{k-1} + {\bar P_k}^{1/2} Z,\varepsilon) \mid \mathcal F_{k-1}] \\
&  = \PE[\nabla^\top \bar P_k \nabla f_k(X_k + \bar P_{k+1}^{1/2} Z)  - (X_k + \bar P_{k+1}^{1/2} Z)^\top\nabla f_k(X_k +\bar P_{k+1}^{1/2} Z) \mid \mathcal F_{k-1}]\eqsp,
\end{align*}
where the last equality follows from the fact that \(\bar T_{k+1}\mid\mathcal{F}_{k-1}\sim\mathcal{N}(0, \bar P_{k+1})\).
Using this identity and rearranging the terms, we obtain
\begin{align*}
&\PE[\varphi_r(S_{k-1} + X_k + \bar T_{k+1} ,\varepsilon) \mid \mathcal F_{k-1}]
-
\PE[\varphi_r(S_{k-1} + \bar T_k,\varepsilon) \mid \mathcal F_{k-1}] \\ 
&  = \PE[\nabla^\top \bar P_k \nabla f_k(X_k + \bar P_{k+1}^{1/2} Z)  \mid \mathcal F_{k-1}] - \PE[(X_k + \bar P_{k+1}^{1/2} Z)^\top\nabla f_k(X_k + \bar P_{k+1}^{1/2} Z) \mid \mathcal F_{k-1}] \\
& = \PE[\nabla^\top \bar P_{k+1} \nabla f_k(X_k + \bar P_{k+1}^{1/2} Z)  \mid \mathcal F_{k-1}] - \PE[(\bar P_{k+1}^{1/2} Z)^\top\nabla f_k(X_k + \bar P_{k+1}^{1/2} Z) \mid \mathcal F_{k-1}]\\ 
&+ \PE[\nabla^\top V_{k} \nabla f_k(X_k +\bar P_{k+1}^{1/2} Z)  \mid \mathcal F_{k-1}] - \PE[X_k^\top\nabla f_k(X_k + \bar P_{k+1}^{1/2} Z) \mid \mathcal F_{k-1}]
\end{align*}
By Lemma~\ref{lem:stein_equation},
$$
\PE[\nabla^\top \bar P_{k+1} \nabla f_k(X_k +\bar P_{k+1}^{1/2} Z)  \mid \mathcal F_{k-1}] - \PE[(\bar P_{k+1}^{1/2} Z)^\top\nabla f_k(X_k + \bar P_{k+1}^{1/2} Z) \mid \mathcal F_{k-1}] = 0\eqsp.
$$
Hence, 
\begin{align}\label{eq:lineberg_equality}
\Delta_{k} &:= \PE[\varphi_r(S_{k-1} + X_k +\bar T_{k+1} ,\varepsilon) \mid \mathcal F_{k-1}]  
-
\PE[\varphi_r(S_{k-1} + \bar T_{k},\varepsilon) \mid \mathcal F_{k-1}] \\ 
& =  \underbrace{\PE[\nabla^\top V_{k} \nabla f_k(X_k + \bar P_{k+1}^{1/2} Z)  \mid \mathcal F_{k-1}]}_{I_1} - \underbrace{\PE[X_k^\top\nabla f_k(X_k +\bar P_{k+1}^{1/2} Z) \mid \mathcal F_{k-1}]}_{I_2} = I_1 - I_2
\end{align}
We first decompose the term $I_1$ as follows:
\begin{align}
    I_1 = \PE_{k-1}\left[\operatorname{Tr}(V_k\nabla^2f_k(X_k + \bar P_{k+1}^{1/2} Z)) - \operatorname{Tr}(V_k\nabla^2f_k(\bar P_{k+1}^{1/2} Z))\right] +  \PE_{k-1}\left[\operatorname{Tr}(V_k\nabla^2f_k(\bar P_{k+1}^{1/2} Z))\right]
\end{align}
Next, using the martingale property together with the Taylor's formula we obtain
\begin{align*} 
I_2 &= \PE_{k-1}\left[X_k^\top \nabla f_k(X_k +  \bar P_{k+1}^{1/2} Z)\right]\\
&=\PE_{k-1}\left[X_k^\top \nabla f_k(  \bar P_{k+1}^{1/2} Z)
+\PE_\tau[X_k^\top \nabla^2 f_k(\tau X_k + \bar  P_{k+1}^{1/2} Z)X_k]\right]\\
&=\PE_\tau \PE_{k-1}\left[X_k^\top \nabla^2 f_k(\tau X_k + \bar  P_{k+1}^{1/2} Z)X_k\right] \\
&= \PE_\tau \PE_{k-1}\left[X_k^\top \left(\nabla^2 f_k(\tau X_k +\bar   P_{k+1}^{1/2} Z) - \nabla^2 f_k(  \bar P_{k+1}^{1/2} Z)  \right)X_k\right] \\
&+  \PE_{k-1}\left[X_k^\top  \nabla^2 f_k( \bar P_{k+1}^{1/2} Z)  X_k\right]\eqsp,
\end{align*}
where $\tau \in [0,1]$ is a uniformly distributed r.v. independent of $(\mathcal F_k)_{k \geq 1}, (Z_k)_{k \geq 1}$. 
By the definition of $V_k$ and conditional independence between \(X_k\) and \(Z\), we have the identity
\begin{align}\label{eq:first_order_equality}
\PE_{k-1}\left[X_k^\top \nabla^2 f_k(\bar  P_{k+1}^{1/2}Z)X_k\right]
&=
\operatorname{Tr}\bigl(
\PE_{k-1}[X_k X_k^\top]\PE_{k-1}[\nabla^2 f_k(\bar P_{k+1}^{1/2}Z)]
\bigr)
\\
&=
\PE_{k-1}[\operatorname{Tr}\bigl(V_k \nabla^2 f_k( \bar P_{k+1}^{1/2}Z)\bigr)].
\end{align}
Hence, by \eqref{eq:first_order_equality}, the linearity of the trace operator, and the $\mathcal{F}_{k-1}$-measurability of $V_k$, we obtain
\begin{align}\label{eq:second_order_diff}
    \Delta_k &= \operatorname{Tr}\left\{V_k\PE_{k-1}\left[\nabla^2f_k(X_k + \bar P_{k+1}^{1/2} Z) - \nabla^2f_k(\bar P_{k+1}^{1/2} Z)\right]\right\} \\
    &+ \PE_{k-1}\left[X_k^\top \left(\nabla^2 f_k(\tau X_k + \bar P_{k+1}^{1/2} Z) - \nabla^2 f_k(\bar  P_{k+1}^{1/2} Z)  
    \right)X_k\right]
\end{align}
Lemma~\ref{lem:stein-solution} guarantees the existence of a function $\tilde f_k$
satisfying the isotropic Stein equation
\begin{align}
\Delta \tilde f_k (w) - w^\top \nabla \tilde f_k (w)
=
\varphi_r(S_{k-1} + \bar P_k^{1/2} w, \varepsilon)
-
\PE[\varphi_r(S_{k-1} + \bar P_k^{1/2} Z, \varepsilon)]\eqsp.
\end{align}
As established in \eqref{eq:algebraic_relation}, the corresponding solution $f_k$
to the non--isotropic equation is given by
\begin{equation}
    f_k(w) = \tilde f_k(\bar P_k^{-1/2} w)\eqsp.
\end{equation}
Using this representation, we can rewrite $\Delta_k$ in terms of $\tilde f_k$ as
\begin{align}\label{eq:Delta_final_identity}
    \Delta_k
    &=
    \operatorname{Tr}\left\{
    \bar P_k^{-1/2} V_k \bar P_k^{-1/2}
    \PE_{k-1}\left[
    \nabla^2 \tilde f_k(\bar P_k^{-1/2}X_k + \widetilde{Z})
    -
    \nabla^2 \tilde f_k(\widetilde{Z})
    \right]
    \right\} \\
    &\quad
    +
    \PE_{\tau}\PE_{k-1}\left[
    (\bar P_k^{-1/2} X_k)^\top
    \left(
    \nabla^2 \tilde f_k(\tau \bar P_k^{-1/2} X_k + \widetilde{Z})
    -
    \nabla^2 \tilde f_k(\widetilde{Z})
    \right)
    (\bar P_k^{-1/2} X_k)
    \right],
\end{align}
where \(\widetilde{Z}\sim\mathcal{N}(0, \bar P_k^{-1/2}\bar P_{k+1}\bar P_k^{-1/2})\). By Lemma~\ref{lem:elemetwise_regularity} with \(\Sigma=\bar {P}_k\) we obtain
\begin{align}
    \left\|
    {\bar P}_k^{-1/2}\nabla^2 \tilde f_k (\tau  \bar P_k^{-1/2} X_k + \widetilde{Z})
    -
    \nabla^2 \tilde f_k(\widetilde{Z})\bar {P}_k^{-1/2}
    \right\|_e
\lesssim \varepsilon^{-1}\|X_k\|_\infty\log^{3/2}(d)\lambda_{\operatorname{min}}^{-1}({\bar P}_k)\eqsp.
\end{align}
Consequently, the second term in \eqref{eq:Delta_final_identity} can be bounded as
\begin{align}
&\left|\PE_{\tau}\PE_{k-1}\left[
    (\bar P_k^{-1/2} X_k)^\top
    \left(
    \nabla^2 \tilde f_k(\tau \bar P_k^{-1/2} X_k + \widetilde{Z})
    -
    \nabla^2 \tilde f_k(\widetilde{Z})
    \right)
    (\bar P_k^{-1/2} X_k)
    \right]\right|\\
    &\lesssim\PE_{\tau} \PE_{k-1}\left[ \|X_k\|_\infty^2\| \bar{P}_k^{-1/2}(\nabla^2 \tilde f_k(\tau \bar P_k^{-1/2} X_k + \widetilde{Z})
    -
    \nabla^2 \tilde f_k(\widetilde{Z}))\bar{P}_k^{-1/2}\|_e\right] \\
    &\lesssim \varepsilon^{-1}[\ln_+d]^{3/2}\lambda_{\operatorname{min}}^{-1}(\bar{P}_k)\PE_{k-1}[\|X_k\|_\infty^3]
\end{align}
Let \(\|\cdot\|_{\operatorname{ch}}\) denote the Chebyshev norm in \(\mathbb{R}^{d\times d}\) , i.e., for a \(d\times d\) matrix \(A\), \(\|A\|_{\operatorname{ch}} = \max_{i,j}|a_{ij}|\).
Meanwhile, the second term can also be bounded by 
\begin{align}
     &\left|\operatorname{Tr}\left\{
    V_k
    \PE_{k-1}\left[ \bar P_k^{-1/2}(
    \nabla^2 \tilde f_k(\bar P_k^{-1/2}X_k + \widetilde{Z})
    -
    \nabla^2 \tilde f_k(\widetilde{Z})) \bar P_k^{-1/2}
    \right]
    \right\}\right|\\
    &\quad \lesssim \| V_k \|_{\operatorname{ch}} \:\PE_{k-1}[\| \bar P_k^{-1/2}(\nabla^2 \tilde f_k(\bar P_k^{-1/2}X_k + \widetilde{Z})
    -
    \nabla^2 \tilde f_k(\widetilde{Z})) \bar P_k^{-1/2}\|_e]\\
    &\quad \lesssim \varepsilon^{-1}[\ln_+d]^{3/2}\lambda_{\operatorname{min}}^{-1}(\bar{P}_k)\PE_{k-1}[\|X_k\|_\infty^2] \PE_{k-1}[\|X_k\|_\infty] \\
    &\quad \lesssim  \varepsilon^{-1}[\ln_+d]^{3/2}\lambda_{\operatorname{min}}^{-1}(\bar{P}_k)\,\PE_{k-1}[\| X_k\|_\infty^3]
\end{align}
Summing over $k$ yields
\begin{align}
    d_{K}(S,T)
&\lesssim
\sum_{k=1}^n\PE[|\Delta_k|] + \varepsilon^{-1}[\ln_+d]^{1/2}\overline{\sigma}(\Sigma)
+
 \varepsilon [\ln_{+} d]\underline{\sigma}^{-1}(\Sigma_n)\\
&\lesssim \varepsilon^{-1}[\ln_+d]^{3/2}\Big(\overline{\sigma}(\Sigma)+\sum_{k=1}^n\PE[\lambda_{\operatorname{min}}^{-1}(\bar{P}_k)\| X_k\|_\infty^3]\Big)+ \varepsilon [\ln_{+} d]\underline{\sigma}^{-1}(\Sigma_n)
\end{align}
Optimizing the right-hand side over $\varepsilon > 0$ gives
\begin{align}
d_{K}(S,T)\lesssim
\,[\ln_+ d]^{5/4}\,\underline{\sigma}^{-1/2}(\Sigma_n)\,
\Big(\overline{\sigma}(\Sigma)+\sum_{k}\PE[\lambda_{\min}^{-1}(\bar{P}_k)\|X_k\|_\infty^3 ]\Big)^{1/2}\eqsp.
\end{align}
Substituting back $\bar P_k = P_k + \Sigma$ completes the proof.
\end{proof}

\paragraph{Auxiliary lemmas}

Recall the notation $\underline{\sigma}^2(\Sigma_n) =  \min_{1 \le j \le d} \Sigma_n(j,j)$, which denotes the minimal variance of $\Sigma_n$.

\begin{lemma}[Lemma 1 in \cite{2020arXiv200913673K}]
\label{lem:smoothing} Let \(T_1\sim\mathcal{N}(0, \Sigma_n)\).
Suppose that $\underline{\sigma}^2(\Sigma_n)> 0$. There exists a universal constant $C>0$ such that
for any $\varepsilon > 0$,
\begin{equation}\label{eq:kuchibolta}
d_{K}(S_n,T_1)
\le
\sup_{r \in \mathbb{R}^d}
\bigl|
\PE\bigl[\varphi_r(S_n,\varepsilon)
-
\varphi_r(T_1,\varepsilon)]
\bigr|
+
\frac{C \varepsilon \log(d)}{\underline{\sigma}(\Sigma_n)}\eqsp.
\end{equation}
\end{lemma}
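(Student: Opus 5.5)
The plan is to compare indicators of hyper-rectangles with their Gaussian-smoothed versions, and to pay for the smoothing only through the anti-concentration of the Gaussian vector $T_1\sim\mathcal N(0,\Sigma_n)$. No regularity of the law of $S_n$ should be needed. Fix $A=\prod_j(a_j,b_j]\in\mathcal R$. For $\delta>0$ let $A^{\delta}$ and $A^{-\delta}$ be the rectangles obtained by moving every finite endpoint outward, respectively inward, by $\delta$; these are again elements of $\mathcal R$. Hence $\varphi_{A^{\pm\delta}}(\cdot,\varepsilon)$ is one of the functions $\varphi_r$ inside the supremum. Write $p_\delta:=\Pb(\varepsilon\|\eta\|_\infty>\delta)$.

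\textbf{Step 1: upper sandwich.} If $x\in A$ and $\varepsilon\|\eta\|_\infty\le\delta$, then $x+\varepsilon\eta\in A^{\delta}$. Since $\eta$ is independent of $S_n$,
\[
\Pb(S_n\in A)\le \PE\varphi_{A^{\delta}}(S_n,\varepsilon)+p_\delta\,\Pb(S_n\in A)\le \PE\varphi_{A^{\delta}}(S_n,\varepsilon)+p_\delta .
\]
Adding and subtracting $\PE\varphi_{A^\delta}(T_1,\varepsilon)$, the difference is controlled by the supremum term of \eqref{eq:kuchibolta}. For the Gaussian remainder, the same event argument applied to $T_1$ gives
\[
\PE\varphi_{A^\delta}(T_1,\varepsilon)\le \Pb(T_1\in A^{2\delta})+p_\delta .
\]
It remains to bound $\Pb(T_1\in A^{2\delta})-\Pb(T_1\in A)$. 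Nazarov's anti-concentration inequality for Gaussian vectors on rectangles gives
\[
\Pb(T_1\in A^{2\delta}\setminus A)\lesssim \frac{\delta\sqrt{\log d}}{\underline\sigma(\Sigma_n)} .
\]

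\textbf{Step 2: lower sandwich.} Symmetrically, if $x+\varepsilon\eta\in A^{-\delta}$ and $\varepsilon\|\eta\|_\infty\le\delta$, then $x\in A$. Therefore $\Pb(S_n\in A)\ge \PE\varphi_{A^{-\delta}}(S_n,\varepsilon)-p_\delta$. Using $\PE\varphi_{A^{-\delta}}(T_1,\varepsilon)\ge \Pb(T_1\in A^{-2\delta})-p_\delta$ together with Nazarov's bound on the strip $A\setminus A^{-2\delta}$, we get the matching lower bound. Combining both steps yields
\[
d_K(S_n,T_1)\le \sup_r\bigl|\PE\varphi_r(S_n,\varepsilon)-\PE\varphi_r(T_1,\varepsilon)\bigr|+C\frac{\delta\sqrt{\log d}}{\underline\sigma(\Sigma_n)}+2p_\delta .
\]

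\textbf{Step 3: choice of $\delta$ (main obstacle).} A Gaussian maximal bound (union bound over $d$ coordinates) gives $p_\delta\le 2d\,e^{-\delta^2/(2\varepsilon^2)}$. With $\delta\asymp\varepsilon\sqrt{\log d}$ the Nazarov term becomes $\varepsilon\log d/\underline\sigma(\Sigma_n)$, which is the target. However, $p_\delta$ is then only a fixed small constant and is not proportional to $\varepsilon$. This is the step I expect to be delicate.

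I would first try to avoid paying $p_\delta$ additively. Condition on $\eta$ and shift the rectangle by the random vector $\varepsilon\eta$ instead of by a deterministic $\delta$. The discrepancy then becomes $\PE_\eta\bigl[\Pb(T_1\in A^{\varepsilon\|\eta\|_\infty})-\Pb(T_1\in A)\bigr]$, which Nazarov bounds by $\sqrt{\log d}\,\varepsilon\PE\|\eta\|_\infty/\underline\sigma\lesssim\varepsilon\log d/\underline\sigma$. This handles the Gaussian side exactly, with no $p_\delta$ term.

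For the $S_n$ side, I would apply the inequality to $A$ and to the shifted rectangles simultaneously. The aim is an inequality of the form $\rho(1-p)\le \sup\text{-term}+C\varepsilon\log d/\underline\sigma+p\rho$, where $\rho:=d_K(S_n,T_1)$. With $p\le1/4$ (taking $\delta=c\,\varepsilon\sqrt{\log d}$ for a suitable constant $c$), the $p\rho$ term can be absorbed into the left-hand side. This is the argument of the cited Lemma~1 of \cite{2020arXiv200913673K}, and I would follow it to close the proof.
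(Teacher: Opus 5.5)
Your Step~3 is the right mechanism. It is the standard Bhattacharya--Rao smoothing argument behind the cited lemma; the paper gives no proof of its own, only the citation. You should, however, make it precise and be explicit about what it delivers.

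\textbf{Making the absorption precise.} Set $\rho:=d_K(S_n,T_1)$, let $\rho_\varepsilon$ denote the supremum on the right-hand side, and write $D(B):=\Pb(S_n\in B)-\Pb(T_1\in B)$. The absorption must be run on the signed discrepancy under the $\eta$-average. It cannot be run on the two laws separately as in your Steps~1--2, since those lose $p_\delta$ rather than $p_\delta\rho$ and cannot be recycled. Start from $\PE\varphi_{A^{\delta}}(S_n,\varepsilon)-\PE\varphi_{A^{\delta}}(T_1,\varepsilon)=\PE_\eta[D(A^{\delta}-\varepsilon\eta)]$.
\begin{itemize}
\item On $\{\varepsilon\supnorm{\eta}\le\delta\}$ one has $A\subseteq A^{\delta}-\varepsilon\eta\subseteq A^{2\delta}$, hence $D(A^{\delta}-\varepsilon\eta)\ge D(A)-\Pb(T_1\in A^{2\delta}\setminus A)$.
\item On the complement, simply $D(A^{\delta}-\varepsilon\eta)\ge-\rho$.
\end{itemize}
With the mirror argument for $A^{-\delta}$ and Nazarov's inequality, this gives
\[
(1-2p_\delta)\,\rho\;\le\;\rho_\varepsilon+\frac{C\delta\sqrt{\log d}}{\underline{\sigma}(\Sigma_n)}.
\]
Taking $\delta=c\,\varepsilon\sqrt{\log d}$ with $p_\delta\le 1/4$ yields $\rho\le 2\rho_\varepsilon+C\varepsilon\log d/\underline{\sigma}(\Sigma_n)$. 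For $p_\delta\le1/4$ to be attainable, $\log d$ must be replaced by $\log(ed)$, or $d\ge 2$ assumed.

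\textbf{The gap: the coefficient in front of $\rho_\varepsilon$.} Your argument produces a factor $(1-2p_\delta)^{-1}$, here $2$, in front of $\rho_\varepsilon$. The displayed statement has coefficient $1$, and ``absorbing $p\rho$'' cannot produce that. In fact no argument can, because the coefficient-one form is false.

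Take $n=1$, $d=2$, $X_1=\sigma R\,(1,1)^\top$ with $R$ Rademacher. Then $T_1=\sigma Z(1,1)^\top$ and $\underline{\sigma}(\Sigma_n)=\sigma$.
\begin{itemize}
\item Both laws live on the diagonal, and $\rho=\Pb(|Z|<1)$. This value is approached only by rectangles whose trace on the diagonal is close to $(-\sigma,\sigma)$ and excludes both atoms.
\item After smoothing, keeping the atom at $-\sigma$ outside $A-\varepsilon\eta$ with probability $1-q$ forces the left edge inward by about $\varepsilon\sqrt{\log(1/q)}$. This costs Gaussian mass of order $\varepsilon\sqrt{\log(1/q)}/\sigma$.
\item Optimizing over $q$ gives $\rho-\rho_\varepsilon\gtrsim(\varepsilon/\sigma)\sqrt{\log(\sigma/\varepsilon)}$ as $\varepsilon\to0$. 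This is not $O(\varepsilon\log 2/\sigma)$.
\end{itemize}

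\textbf{What the correct statement is.} The lemma should read $d_K(S_n,T_1)\le C\rho_\varepsilon+C\varepsilon\log(ed)/\underline{\sigma}(\Sigma_n)$. That is exactly what your Step~3 proves. It is also all that the proof of Theorem~\ref{general CLT for martingales with fixed qc} needs, since the lemma enters there only inside $\lesssim$.

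If you insist on coefficient one, your Steps~1--2 already give it. Take $\delta=\varepsilon\sqrt{2\log(2d\,\underline{\sigma}(\Sigma_n)/\varepsilon)}$ for $\varepsilon<\underline{\sigma}(\Sigma_n)$, so that $p_\delta\le\varepsilon/\underline{\sigma}(\Sigma_n)$. This yields
\[
\rho\le\rho_\varepsilon+C\frac{\varepsilon}{\underline{\sigma}(\Sigma_n)}\sqrt{\log(ed)\,\log\bigl(2d\,\underline{\sigma}(\Sigma_n)/\varepsilon\bigr)}.
\]
The example shows the extra $\sqrt{\log(\underline{\sigma}/\varepsilon)}$ is then unavoidable. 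In the application, $\varepsilon$ is optimized to a negative power of $n$, so this would cost an additional $\sqrt{\log n}$.
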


Another key technical tool is Stein’s equation.
\begin{lemma}[Lemma 2.6 in \cite{10.1214/09-AOP467}]
\label{lem:stein-solution}
Let $h: \rset^d \to \rset$ be differentiable with bounded first derivative, \(Z\sim \mathcal{N}(0,I_d)\).
Then, if $\Sigma \in \rset^{d\times d}$ is symmetric and positive definite,
there exists a solution $f:\rset^d \to \rset$ to the equation
\begin{equation}\label{eq:stein}
\nabla^\top\Sigma \nabla f(w) - w^\top\nabla f(w)
= h(w) - \PE [h(\Sigma^{1/2}Z)],
\end{equation}
which holds for every $w \in \rset^d$.
If, in addition, $h$ is $n$ times differentiable, there exists a solution $f$
which is also $n$ times differentiable and we have for every $k=1,\dots,n$, the bound
\begin{equation}\label{eq:stein-bound}
\left|
\frac{\partial^{k} f(w)}{\prod_{j=1}^{k}\partial w_{i_j}}
\right|
\le
\frac{1}{k}
\left|
\frac{\partial^{k} h(w)}{\prod_{j=1}^{k}\partial w_{i_j}}
\right|,
\end{equation}
for every $w \in \rset^d$.
\end{lemma}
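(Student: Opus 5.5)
The plan is to construct $f$ explicitly through the Ornstein--Uhlenbeck semigroup attached to $\Sigma$, in the standard Barbour--G\"otze way. For $t\ge 0$ and $w\in\rset^d$ set
\begin{equation}
(\mathcal{P}_t h)(w) \coloneqq \PE\bigl[h\bigl(e^{-t}w+\sqrt{1-e^{-2t}}\,\Sigma^{1/2}Z\bigr)\bigr]\eqsp,
\end{equation}
and define the candidate solution
\begin{equation}
f(w) \coloneqq -\int_0^\infty \bigl((\mathcal{P}_t h)(w)-\PE[h(\Sigma^{1/2}Z)]\bigr)\,\rmd t\eqsp.
\end{equation}
The semigroup $(\mathcal{P}_t)$ is the transition semigroup of $\rmd X_t=-X_t\,\rmd t+\sqrt2\,\Sigma^{1/2}\rmd B_t$. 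Its generator is exactly $\mathcal{L}g(w)=\nabla^\top\Sigma\nabla g(w)-w^\top\nabla g(w)$, and its invariant law is $\mathcal N(0,\Sigma)$. Hence formally $\mathcal{L}f=-\int_0^\infty \tfrac{\rmd}{\rmd t}\mathcal{P}_t h\,\rmd t=\mathcal{P}_0h-\mathcal{P}_\infty h=h-\PE[h(\Sigma^{1/2}Z)]$, which is \eqref{eq:stein}.

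\textbf{Step 1: the integral converges.} Write $\Sigma^{1/2}Z$ for both Gaussian terms and use the mean value theorem with $\|\nabla h\|_\infty<\infty$:
\begin{equation}
\bigl|(\mathcal{P}_t h)(w)-\PE[h(\Sigma^{1/2}Z)]\bigr|\le \|\nabla h\|_\infty\Bigl(e^{-t}\|w\|+\bigl(1-\sqrt{1-e^{-2t}}\bigr)\PE\|\Sigma^{1/2}Z\|\Bigr)\eqsp.
\end{equation}
The right-hand side is $O(e^{-t})$ locally uniformly in $w$. So $f$ is well defined and finite.

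\textbf{Step 2: derivative bounds.} If $h$ is $n$ times differentiable with bounded derivatives, the chain rule and differentiation under the expectation give
\begin{equation}
\partial_{i_1}\cdots\partial_{i_k}(\mathcal{P}_t h)(w)=e^{-kt}\,\PE\bigl[\partial_{i_1}\cdots\partial_{i_k}h\bigl(e^{-t}w+\sqrt{1-e^{-2t}}\,\Sigma^{1/2}Z\bigr)\bigr]\eqsp.
\end{equation}
The constant $\PE[h(\Sigma^{1/2}Z)]$ does not depend on $w$, and the bound above is integrable in $t$. Differentiating under $\int\rmd t$ is therefore legitimate, and
\begin{equation}
|\partial^k f(w)|\le \int_0^\infty e^{-kt}\,\rmd t\,\sup_{u}|\partial^k h(u)|=\frac1k\sup_u|\partial^k h(u)|\eqsp.
\end{equation}
This is the bound \eqref{eq:stein-bound}, read with the supremum over the argument of $h$. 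That uniform form is what the bound must mean and what is actually used later: the estimates for $\varphi_r$ in \Cref{sum of derivatives lemma} are uniform in $x$.

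\textbf{Step 3: $f$ solves \eqref{eq:stein}.} This is the main obstacle when $h$ is only $C^1$, since $\mathcal{L}$ contains second derivatives. First I will check $\tfrac{\rmd}{\rmd t}\mathcal{P}_t h=\mathcal{L}\mathcal{P}_t h$ for $t>0$. For $t>0$ the Gaussian smoothing makes $\mathcal{P}_t h$ smooth. Gaussian integration by parts moves one derivative from $h$ onto the Gaussian density, giving
\begin{equation}
|\nabla^2\mathcal{P}_t h|\lesssim \|\nabla h\|_\infty\,\frac{e^{-2t}}{\sqrt{1-e^{-2t}}}\,\|\Sigma^{-1/2}\|\eqsp.
\end{equation}
The identity $\tfrac{\rmd}{\rmd t}\mathcal{P}_t h=\mathcal{L}\mathcal{P}_t h$ then follows by a direct computation: differentiate in $t$ and apply Stein's identity $\PE[Z g(Z)]=\PE[\nabla g(Z)]$ to the term involving $\tfrac{\rmd}{\rmd t}\sqrt{1-e^{-2t}}$.

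The second-derivative bound above is integrable near $t=0$ (it behaves like $t^{-1/2}$) and decays exponentially at infinity. By dominated convergence this justifies exchanging $\mathcal{L}$ with $\int_0^\infty\rmd t$. Consequently
\begin{equation}
\mathcal{L}f(w)=-\lim_{T\to\infty,\ \delta\downarrow0}\bigl(\mathcal{P}_T h(w)-\mathcal{P}_\delta h(w)\bigr)=h(w)-\PE[h(\Sigma^{1/2}Z)]\eqsp.
\end{equation}
Here I use continuity of $t\mapsto\mathcal{P}_t h(w)$ at $0$ (from continuity of $h$) and the convergence from Step 1 as $T\to\infty$. This identity holds for every $w$, which completes the argument.
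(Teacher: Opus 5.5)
Your proposal is correct and is essentially the argument behind the result the paper invokes. The paper gives no proof of its own and simply cites Lemma~2.6 of \cite{10.1214/09-AOP467}, i.e.\ the Barbour--G\"otze solution $f(w)=-\int_0^1\frac{1}{2u}\bigl(\PE[h(\sqrt{u}\,w+\sqrt{1-u}\,\Sigma^{1/2}Z)]-\PE[h(\Sigma^{1/2}Z)]\bigr)\,\rmd u$, which is your $f$ after the substitution $u=e^{-2t}$; your supremum-norm reading of \eqref{eq:stein-bound} is also the right one (as literally printed, the pointwise bound already fails for this $f$ when $d=1$, $\Sigma=1$, $h=\sin$, since $f'(\pi/2)=-\int_0^1\cos(u\pi/2)e^{-(1-u^2)/2}\,\rmd u\neq 0=h'(\pi/2)$).

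The one step to tighten is Step~3, where for $h$ only $C^1$ you cannot apply $\PE[Zg(Z)]=\PE[\nabla g(Z)]$ to a $g$ built from $\nabla h$, since that would need $\nabla^2 h$. Instead, use the density-side integration by parts you already invoke for the Hessian bound: it gives $\partial_{ij}\mathcal{P}_t h(w)=\frac{e^{-2t}}{\sqrt{1-e^{-2t}}}\PE\bigl[\partial_i h(Y_t)(\Sigma^{-1/2}Z)_j\bigr]$ with $Y_t=e^{-t}w+\sqrt{1-e^{-2t}}\,\Sigma^{1/2}Z$, so the term $\bigl(\tfrac{\rmd}{\rmd t}\sqrt{1-e^{-2t}}\bigr)\PE[\nabla h(Y_t)^\top\Sigma^{1/2}Z]$ equals $\operatorname{Tr}\bigl(\Sigma\nabla^2\mathcal{P}_t h(w)\bigr)$ directly, which is all that $\tfrac{\rmd}{\rmd t}\mathcal{P}_t h=\mathcal{L}\mathcal{P}_t h$ requires.
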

\begin{lemma}[Stein's identity]\label{lem:stein_equation}
Let $Z \sim \mathcal{N}(0,\Sigma)$ be a centered Gaussian random vector in $\mathbb{R}^d$ with covariance matrix $\Sigma$, and let $h:\mathbb{R}^d \to \mathbb{R}$ be a twice continuously differentiable function such that the expectations below are well defined. Then
\begin{align}
\mathbb{E}\!\left[ Z^\top \nabla h(Z) \right]
=
\mathbb{E}\!\left[ \operatorname{tr}\!\left( \Sigma \nabla^2 h(Z) \right) \right].
\end{align}
\end{lemma}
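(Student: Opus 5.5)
The plan is to prove Stein's identity by coordinatewise Gaussian integration by parts, after reducing to the standard Gaussian. Write $Z = \Sigma^{1/2} G$ with $G \sim \mathcal{N}(0, I_d)$. Define $g(x) := h(\Sigma^{1/2}x)$. Then
\[
\nabla g(x) = \Sigma^{1/2}\nabla h(\Sigma^{1/2}x), \qquad \nabla^2 g(x) = \Sigma^{1/2}\nabla^2 h(\Sigma^{1/2}x)\Sigma^{1/2},
\]
so that
\[
Z^\top \nabla h(Z) = G^\top \Sigma^{1/2}\nabla h(\Sigma^{1/2}G) = G^\top \nabla g(G).
\]
By cyclicity of the trace,
\[
\operatorname{tr}\bigl(\Sigma\nabla^2 h(Z)\bigr) = \operatorname{tr}\bigl(\Sigma^{1/2}\nabla^2 h(\Sigma^{1/2}G)\Sigma^{1/2}\bigr) = \Delta g(G).
\]
It therefore suffices to show $\mathbb{E}[G^\top\nabla g(G)] = \mathbb{E}[\Delta g(G)]$ for a standard Gaussian $G$. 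A degenerate $\Sigma$ is fine here, since $\Sigma^{1/2}$ is still well defined.

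Next, I would prove the standard case one coordinate at a time. Fix $j$ and condition on the remaining coordinates, using the independence of the coordinates of $G$. Then apply the one-dimensional Gaussian integration by parts $\mathbb{E}[G_j u(G_j)] = \mathbb{E}[u'(G_j)]$ with $u = \partial_j g$. This identity follows from $x\phi(x) = -\phi'(x)$ and an integration by parts on $\mathbb{R}$. The boundary terms $u(x)\phi(x)$ vanish as $|x|\to\infty$, and Fubini applies, under the standing hypothesis that "the expectations below are well defined". I interpret this hypothesis as integrability of $|G_j\,\partial_j g(G)|$ and $|\partial_{jj}g(G)|$, together with the mild growth needed to kill the boundary terms, for instance $\partial_j g$ of at most polynomial or subexponential growth. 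This is the setting in which the identity is applied in the proof of Theorem~\ref{thm:clt_const_var}, where $h = f_k$ has bounded derivatives by Lemma~\ref{lem:stein-solution} and Lemma~\ref{sum of derivatives lemma}. Summing over $j$ then gives $\mathbb{E}[G^\top\nabla g(G)] = \sum_j \mathbb{E}[\partial_{jj}g(G)] = \mathbb{E}[\Delta g(G)]$.

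The only delicate step is justifying the vanishing of the boundary terms and the exchange of integrals. If one prefers to avoid growth assumptions, an alternative is to first establish the identity for compactly supported smooth $h$ and then pass to the limit by truncation $h\chi_R$ with a smooth cutoff $\chi_R$, using dominated convergence under the integrability hypotheses. For the bounded-derivative case actually used in the paper, the direct argument suffices.
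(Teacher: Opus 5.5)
Your proof is correct. The paper gives no proof of this lemma and simply quotes it as the classical Gaussian integration-by-parts identity; your reduction $Z=\Sigma^{1/2}G$, $g=h(\Sigma^{1/2}\,\cdot)$, followed by coordinatewise one-dimensional integration by parts, is the standard argument. Your integrability hypotheses hold where the paper applies the lemma, since there $\bar P_{k+1}$ is positive definite and $f_k$ has bounded first and second derivatives. The extra growth condition you impose for the boundary terms is unnecessary: integrability of $x\,u(x)\phi(x)$ and $u'(x)\phi(x)$ already makes $u\phi$ and $(u\phi)'=u'\phi-xu\phi$ integrable, which forces $u\phi\to 0$ at $\pm\infty$.
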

the following
corollary can be obtained by optimizing over $\Sigma$.

\paragraph{Change-of-Variables Formula for Stein Solutions}

We first establish an explicit algebraic relation between the solutions of the Stein equation
in the non--isotropic and isotropic settings.
Let $\tilde f_h$ be a solution to the isotropic Stein equation
\begin{align}
\Delta \tilde f_h (w) - w^\top \nabla \tilde f_h (w)
=
h(\Sigma^{1/2} w) - \PE[h(\Sigma^{1/2} Z)]\eqsp,
\end{align}
where $Z \sim \mathcal{N}(0, I_d)$.
Define the function $ f_h : \mathbb{R}^d \to \mathbb{R}$ by
\begin{equation}\label{eq:algebraic_relation}
 f_h(w) := \tilde f_h(\Sigma^{-1/2} w)\eqsp.
\end{equation}
We now verify that $ f_h$ solves the non--isotropic Stein equation
\begin{align}
\nabla^\top \Sigma \nabla  f_h(w) - w^\top \nabla  f_h(w)
=
h(w) - \PE[h(\Sigma^{1/2} Z)]\eqsp.
\end{align}
By the chain rule, we have
\begin{align}
\nabla  f_h(w)
&=
\Sigma^{-1/2} \nabla \tilde f_h(\Sigma^{-1/2} w)\eqsp,\\
\nabla^\top \Sigma \nabla  f_h(w)
&=
\operatorname{Tr}\!\left(
\Sigma \nabla^2 \bigl[\tilde f_h(\Sigma^{-1/2} w)\bigr]
\right)
=
\Delta \tilde f_h(\Sigma^{-1/2} w)\eqsp.
\end{align}
Substituting these expressions into the left-hand side of the isotropic Stein equation yields
\begin{align}
\nabla^\top \Sigma \nabla  f_h(w) - w^\top \nabla  f_h(w)
&=
\Delta \tilde f_h(\Sigma^{-1/2} w)
-
(\Sigma^{-1/2} w)^\top \nabla \tilde f_h(\Sigma^{-1/2} w) \\
&=
h(\Sigma^{1/2} \Sigma^{-1/2} w) - \PE[h(\Sigma^{1/2} Z)] \\
&=
h(w) - \PE[h(\Sigma^{1/2} Z)]\eqsp.
\end{align}
This proves the algebraic equivalence \eqref{eq:algebraic_relation}.

\paragraph{Elementwise Hessian Bounds for Smoothed Stein Solutions}
Let \(\|\cdot\|_e\) denote the element-wise norm in \(\mathbb{R}^{d\times d}\) , i.e., for a \(d\times d\) matrix \(A\), \(\|A\|_e = \sum_{i,j}|a_{ij}|\). Regularity properties of solutions to Stein’s equation for classes of H\"older functions
have been established in \cite{gallouet_mijoule_swan_stein}.
In what follows, we derive an improved result in the $\|\cdot\|_e$ norm for the special
case of the smoothing function $\varphi_r(x,\varepsilon)$. We also define 
\begin{equation}
    \varphi_r(x,\Sigma^{1/2}) = \mathbb{P}(x + \Sigma^{1/2}\eta\in A_r)\eqsp,\quad \eta\sim\mathcal{N}(0, I_d)\eqsp.
\end{equation}
With this notation \(\varphi(x,\varepsilon) = \varphi(x, \varepsilon I)\).
\begin{proposition}\label{lem:elemetwise_regularity}
    Let \(\Sigma\succ 0 \in\mathbb{S}^{d\times d}\) and \(\mu\in\mathbb{R}^d\). Further, define \(g:\mathbb{R}^d\xrightarrow{}\mathbb{R}\) as 
    \begin{equation}
        g(x) = \varphi_r(\mu + \Sigma^{1/2}x,\varepsilon)\eqsp,
    \end{equation}
    and use \(f_g\) to denote the solution to Stein’s equation
    \begin{equation}
        \Delta f_g(x) - x^\top\nabla f_g(x) = g(x) - \PE[g(Z)]\eqsp,
    \end{equation}
    where \(Z\) is the \(d\)-dimensional standard Gaussian distribution. It can then be guaranteed that
    \begin{align}
    \|\Sigma^{-1/2}(\nabla^2 f_g(x) - \nabla^2 f_g(y))\Sigma^{-1/2}\|_e \lesssim\varepsilon^{-1}\,\lambda^{-1}_{\operatorname{min}}(\Sigma)\,\log^{3/2}(d)\,\|\Sigma^{1/2}(x-y)\|_\infty\eqsp.
    \end{align}
\end{proposition}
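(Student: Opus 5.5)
The plan is to write the Stein solution explicitly via the Ornstein--Uhlenbeck semigroup and move every derivative onto the smoothing kernel. Set $f_g(x) = -\int_0^\infty \bigl(\PE[g(e^{-t}x + \sqrt{1-e^{-2t}}Z)] - \PE[g(Z)]\bigr)\,\rmd t$. Differentiating under the integral gives
\[
\nabla^2 f_g(x) = -\int_0^\infty e^{-2t}\,\PE\bigl[\nabla^2 g(e^{-t}x + \sqrt{1-e^{-2t}}Z)\bigr]\,\rmd t\eqsp.
\]
Since $g(x)=\varphi_r(\mu+\Sigma^{1/2}x,\varepsilon)$, the chain rule yields $\nabla^2 g(x) = \Sigma^{1/2}\,\nabla^2\varphi_r(\mu+\Sigma^{1/2}x,\varepsilon)\,\Sigma^{1/2}$. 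Hence
\[
\Sigma^{-1/2}\nabla^2 f_g(x)\Sigma^{-1/2} = -\int_0^\infty e^{-2t}\,\PE\bigl[\nabla^2\varphi_r(\mu + \Sigma^{1/2}(e^{-t}x+\sqrt{1-e^{-2t}}Z),\varepsilon)\bigr]\,\rmd t\eqsp.
\]
The $\Sigma^{\pm1/2}$ factors cancel exactly, which is what makes the elementwise norm of the original coordinates tractable.

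Next, take the difference at $x$ and $y$. For each fixed $t$, the arguments of $\nabla^2\varphi_r$ differ by $e^{-t}\Sigma^{1/2}(x-y)$. The first option is to apply the mean value theorem in direction $u=\Sigma^{1/2}(x-y)$ together with \Cref{sum of derivatives lemma} for $s=3$:
\[
\sum_{i,j}\bigl|\partial_{ij}\varphi_r(a)-\partial_{ij}\varphi_r(b)\bigr| \le \sum_{i,j,l}\sup|\partial_{ijl}\varphi_r|\,|u_l| \lesssim \varepsilon^{-3}\log^{3/2}(d)\,\|u\|_\infty\eqsp.
\]
Integrating $e^{-3t}$ over $t$ then gives a bound with $\varepsilon^{-3}$ and no $\lambda_{\min}$. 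That is not the target scaling, because the claimed bound trades two powers of $\varepsilon^{-1}$ for one power of $\lambda_{\min}^{-1}(\Sigma)$. So I would move only one derivative onto $\varphi_r$ and absorb the remaining ones into the Gaussian $Z$ by Gaussian integration by parts.

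The main step is this integration by parts. Write $h(x)=\varphi_r(\mu+\Sigma^{1/2}x,\varepsilon)$ and $\sigma_t=\sqrt{1-e^{-2t}}$. Then $\nabla^2_x \PE[h(e^{-t}x+\sigma_t Z)] = \frac{e^{-2t}}{\sigma_t}\PE[\nabla h(\cdot)\,Z^\top]$, so one derivative moves onto $Z$. Translated back to $\varphi_r$ coordinates, each Gaussian transfer costs a factor $\Sigma^{-1/2}$, and after conjugation this produces $\Sigma^{-1/2}\,\PE[\nabla\varphi_r\, Z^\top]$-type terms. These are controlled in entrywise norm by $\lambda_{\min}^{-1/2}(\Sigma)$ times $\log^{1/2}d$ factors. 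Concretely:
\begin{enumerate}[noitemsep]
\item For small $t$, where $\sigma_t$ is small, use the pure-$\varphi$ bound involving $\varepsilon^{-3}$.
\item For larger $t$, use the bound involving $\varepsilon^{-1}\sigma_t^{-2}\lambda_{\min}^{-1}(\Sigma)$, obtained from one derivative on $\varphi$ (\Cref{sum of derivatives lemma} with $s=1$) and two Gaussian transfers.
\item Split the $t$-integral at the point where $\sigma_t^2 \asymp \varepsilon^2/\lambda_{\min}(\Sigma)$.
\end{enumerate}
Balancing the two regimes gives $\varepsilon^{-1}\lambda_{\min}^{-1}\log^{3/2}(d)\|\Sigma^{1/2}(x-y)\|_\infty$, possibly with a $\log$ factor from the crossover. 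If that $\log$ factor appears, I would remove it by interpolating with an explicit Gaussian kernel.

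I expect the main obstacle to be the $\|\cdot\|_e$ control after transferring derivatives to $Z$. The entrywise sum over $(i,j)$ of $|(\Sigma^{-1/2}\PE[\nabla\varphi\,Z^\top]\ldots)_{ij}|$ must not pick up a factor of $d$. To handle this, I would keep $\varphi_r$'s own Gaussian $\varepsilon\eta$ and merge it with $\Sigma^{1/2}\sigma_t Z$ into a single Gaussian smoothing of the rectangle indicator. Then I would repeat the argument of Lemma~2.3 of \cite{10.1214/20-AAP1629}, with the anisotropic smoothing covariance bounded below by $\sigma_t^2\lambda_{\min}(\Sigma)$, so that it yields derivative sums of order $(\log d)^{s/2}$ times the inverse smoothing scale.
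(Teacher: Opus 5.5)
\textbf{Gap in the main route.} Your main line (Steps 1--3: Gaussian integration by parts for large $t$, split at $\sigma_t^2\asymp\varepsilon^2/\lambda_{\min}(\Sigma)$) does not go through, and it fails exactly at the step you flag.

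First, you claim that $\Sigma^{-1/2}\PE[\nabla\varphi_r\,Z^\top]$-type terms are bounded in $\|\cdot\|_e$ by $\lambda_{\min}^{-1/2}(\Sigma)$ times powers of $\log d$. This does not follow, because $\lambda_{\min}^{-1/2}(\Sigma)$ controls $\Sigma^{-1/2}$ only in spectral norm. The direct estimate $\sum_{i,j}|\PE[\partial_i\varphi_r\,(\Sigma^{-1/2}Z)_j]|\lesssim\varepsilon^{-1}\log^{1/2}(d)\sum_j(\Sigma^{-1})_{jj}^{1/2}$ is already of order $d\,\varepsilon^{-1}\lambda_{\min}^{-1/2}(\Sigma)$ for $\Sigma=\lambda I$.

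Second, moving the Lipschitz derivative in direction $x-y$ onto $Z$ produces the weight $\langle Z,x-y\rangle$. Its size is governed by $\|x-y\|_2\le\sqrt d\,\lambda_{\min}^{-1/2}(\Sigma)\|\Sigma^{1/2}(x-y)\|_\infty$, not by $\|\Sigma^{1/2}(x-y)\|_\infty$. Once Hermite weights in $Z$ are attached, the rectangle structure behind \Cref{sum of derivatives lemma} can no longer be used. The only dimension-free handle is to keep all derivatives on a smoothed rectangle indicator.

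Third, even granting the rate $\varepsilon^{-1}\sigma_t^{-2}\lambda_{\min}^{-1}(\Sigma)$, integrating $\sigma_t^{-2}\approx(2t)^{-1}$ past the crossover gives an extra factor $\log(\lambda_{\min}(\Sigma)/\varepsilon^2)$. The statement does not allow this factor, and "interpolating with an explicit Gaussian kernel" is not a concrete remedy.

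\textbf{The fix.} The merging idea in your last paragraph is the right one, but it should replace the integration by parts rather than patch it. Since $\sigma_t\Sigma^{1/2}Z+\varepsilon\eta\sim\mathcal N(0,\Sigma_t)$ with $\Sigma_t:=\sigma_t^2\Sigma+\varepsilon^2I$, you have
\[
\Sigma^{-1/2}\nabla^2 f_g(x)\Sigma^{-1/2}=-\int_0^\infty e^{-2t}\,\nabla^2\varphi_r\bigl(\mu+e^{-t}\Sigma^{1/2}x,\Sigma_t^{1/2}\bigr)\,\rmd t\eqsp.
\]
So your Step 1 (mean value theorem, three derivatives on $\varphi_r$) goes through with $\varepsilon$ replaced by the effective scale $\varepsilon_t:=\bigl(\sigma_t^2\lambda_{\min}(\Sigma)+\varepsilon^2\bigr)^{1/2}$.

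You also do not need to re-prove \Cref{sum of derivatives lemma} for an anisotropic covariance; its proof relies on the product form of isotropic smoothing of rectangles. Since $\Sigma_t\succeq\varepsilon_t^2I$, write the Gaussian as $\varepsilon_t\eta+(\Sigma_t-\varepsilon_t^2I)^{1/2}\xi$ with $\eta,\xi$ independent. Then third derivatives of $\varphi_r(\cdot,\Sigma_t^{1/2})$ are averages of those of $\varphi_r(\cdot,\varepsilon_t)$ (\Cref{lem:comparison_third_derivatives}), and \Cref{sum of derivatives lemma} with $s=3$ gives $\varepsilon_t^{-3}\log^{3/2}(d)$. With $u=e^{-2t}$,
\[
\int_0^\infty e^{-3t}\varepsilon_t^{-3}\,\rmd t\le\frac12\int_0^1\bigl((1-u)\lambda_{\min}(\Sigma)+\varepsilon^2\bigr)^{-3/2}\rmd u\le\frac{1}{\varepsilon\,\lambda_{\min}(\Sigma)}\eqsp,
\]
with no split and no logarithm.

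Keep the $\varepsilon^2$ in the lower bound. With only $\sigma_t^2\lambda_{\min}(\Sigma)$, as you wrote it, the integral diverges at $t=0$ and you are pushed back into a split that works but is unnecessary. This is precisely the paper's proof, written there in the variable $u=e^{-2t}$, starting from the Gallou\"et--Mijoule--Swan representation plus Stein's identity.
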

\begin{proof}
    Without loss of generality we can set \(\mu=0\). Starting from \cite{gallouet_mijoule_swan_stein}, we have that 
\begin{align}\label{eq:partial_repr}
        &\partial_{ij} f_g(x) - \partial_{ij} f_g(y) \\
        &\qquad=-\int_0^1 \frac{1}{2(1-t)}
\PE\!\left[
\bigl(Z_iZ_j - \delta_{ij}\bigr)
\bigl(
g(\sqrt{t}\,x + \sqrt{1-t}\,Z)
-
g(\sqrt{t}\,y + \sqrt{1-t}\,Z)
\bigr)
\right] dt \eqsp.
    \end{align}
For the \(\psi\in C^2\) Stein's identity gives \(\PE[(Z_iZ_j - \delta_{ij})\psi(Z)] = \PE[\partial_{ij}\psi(Z)]\). Then 
\begin{align}
    &\PE\!\left[
\bigl(Z_iZ_j - \delta_{ij}\bigr)
\bigl(
g(\sqrt{t}\,x + \sqrt{1-t}\,Z)
-
g(\sqrt{t}\,y + \sqrt{1-t}\,Z)
\bigr)
\right]\\
&\quad=(1-t)\PE[\bigl(
\partial_{ij}g(\sqrt{t}\,x + \sqrt{1-t}\,Z)
-
\partial_{ij}g(\sqrt{t}\,y + \sqrt{1-t}\,Z)
\bigr)]\eqsp.
\end{align}
Substituting into \eqref{eq:partial_repr} yields
\begin{align}
\partial_{ij} f_g(x) - \partial_{ij} f_g(y)
=-\frac{1}{2}
\int_0^1 
\PE\!\left[
\partial_{ij} g(\sqrt{t}\,x + \sqrt{1-t}\,Z)
-
\partial_{ij} g(\sqrt{t}\,y + \sqrt{1-t}\,Z)
\right] dt .
\end{align}
Fix $t\in(0,1)$. Since $g\in C^2(\mathbb R^d)$ and its second derivatives are integrable under the Gaussian shift, the differentiation-under-the-integral theorem applies and allows us to interchange $\partial_{ij}$ and $\PE$
\[
\PE\!\left[\partial_{ij} g(\sqrt t\,x+\sqrt{1-t}\,Z)\right]
=
\partial_{ij}\PE\!\left[g(\sqrt t\,x+\sqrt{1-t}\,Z)\right],
\]
\[
\PE\!\left[\partial_{ij} g(\sqrt t\,y+\sqrt{1-t}\,Z)\right]
=
\partial_{ij}\PE\!\left[g(\sqrt t\,y+\sqrt{1-t}\,Z)\right].
\]
Next we rewrite the expectation in terms of $\varphi_r(\,\cdot\,,\Sigma_t^{1/2})$ with
\[
\Sigma_t = (1-t)\Sigma + \varepsilon^2I\eqsp, \quad\varepsilon_t := \sqrt{(1-t)\lambda_{\operatorname{min}}(\Sigma)+\varepsilon^2}.
\]
Recall that $g(x)=\varphi_r(\mu + \Sigma^{1/2}x,\varepsilon)=\PE_\eta[\mathbf 1_{A_r}(\mu +\Sigma^{1/2}x+\varepsilon\eta)]$, where
$\eta\sim\mathcal N(0,I_d)$ is independent of $Z$.
Since the sum of independent Gaussian random vectors $\sqrt{1-t}\,\Sigma^{1/2}\,Z+\varepsilon\eta$
has the same distribution as $\Sigma_t^{1/2} Z$, we obtain
\begin{align}
\PE\!\left[g(\sqrt t\,x+\sqrt{1-t}\,Z)\right]
&=
\PE\!\left[\mathbf 1_{A_r}(\sqrt t\,\Sigma^{1/2}x+\sqrt{1-t}\,\Sigma^{1/2}Z+\varepsilon\eta)\right]
\\&=
\PE\!\left[\mathbf 1_{A_r}(\sqrt t\,\Sigma^{1/2}x+ \Sigma_t^{1/2} Z)\right]
=
\varphi_r(\sqrt t\,\Sigma^{1/2}x,\Sigma^{1/2}_t),
\end{align}
and analogously,
\[
\PE\!\left[g(\sqrt t\,y+\sqrt{1-t}\,Z)\right]
=
\varphi_r(\sqrt t\,\Sigma^{1/2}y,\Sigma^{1/2}_t).
\]
Combining this identity with the differentiation-under-the-integral step above yields
\begin{align}
&\PE\!\left[\nabla^2 g(\sqrt t\,x+\sqrt{1-t}\,Z)\right]
=
\Sigma^{1/2}\nabla^2\varphi_r(\sqrt t\,\Sigma^{1/2}x,\Sigma^{1/2}_t)\Sigma^{1/2}
\eqsp,\\
&\PE\!\left[\nabla^2 g(\sqrt t\,y+\sqrt{1-t}\,Z)\right]
=
\Sigma^{1/2}\nabla^2\varphi_r(\sqrt t\,\Sigma^{1/2}y,\Sigma^{1/2}_t)\Sigma^{1/2}\eqsp.
\end{align}
Summing over $i,j$ gives
\begin{align}
\|&\Sigma^{-1/2}(\nabla^2 f_g(x)-\nabla^2 f_g(y))\Sigma^{-1/2}\|_e\\
&
\lesssim \int_{0}^1\|\nabla^2\varphi_r(\sqrt t\,\Sigma^{1/2}x,\Sigma^{1/2}_t) - \nabla^2\varphi_r(\sqrt t\,\Sigma^{1/2}y,\Sigma^{1/2}_t)\|_e\,dt\\
&=
\int_0^1 
\sum_{i,j=1}^d
\Big|
\partial_{ij}\varphi_r(\sqrt t\,\Sigma^{1/2}x,\Sigma^{1/2}_t)
-
\partial_{ij}\varphi_r(\sqrt t\,\Sigma^{1/2}y,\Sigma^{1/2}_t)
\Big|\, dt \eqsp.
\end{align}
Applying the integral form of the mean value theorem to the map
$z\mapsto \partial_{ij}\varphi_r(z,\Sigma^{1/2}_t)$, we obtain
\begin{align}
\Big|&\partial_{ij}\varphi_r(\sqrt t\,\Sigma^{1/2}x,\Sigma^{1/2}_t)
-
\partial_{ij}\varphi_r(\sqrt t\,\Sigma^{1/2}y,\Sigma^{1/2}_t)
\Big|\\
&=
\Big|\int_0^1
\left\langle
\nabla \partial_{ij}\varphi_r\!\big(\sqrt t\,\Sigma^{1/2}(y+s(x-y)),\Sigma_t^{1/2}\big),
\ \sqrt t\,\Sigma^{1/2}(x-y)
\right\rangle ds \Big|\\
&\lesssim
\sqrt t\,\|\Sigma^{1/2}(x-y)\|_\infty
\int_0^1
\sum_{k=1}^d
\Big|
\partial_{kij}\varphi_r\!\big(\sqrt t\,\Sigma^{1/2}(y+s(x-y)),\Sigma_t^{1/2}\big)
\Big|\,ds \eqsp.
\end{align}
Collecting the above bounds, we obtain
\begin{align}
&\|\nabla^2 f_g(x)-\nabla^2 f_g(y)\|_e
\\
&\qquad\lesssim
\|\Sigma^{1/2}(x-y)\|_\infty
\int_0^1\!\!\int_0^1
\sqrt t
\sum_{k,i,j=1}^d
\Big|
\partial_{kij}\varphi_r\!\big(\sqrt t\,\Sigma^{1/2}(y+s(x-y)),\Sigma_t^{1/2}\big)
\Big|\,ds\,dt \eqsp.
\end{align}
Note that \(\Sigma_t\succeq\varepsilon_tI\).
By Lemma \ref{sum of derivatives lemma} with $s=3$ and Lemma \ref{lem:comparison_third_derivatives}, we have for any $t\in(0,1)$ and $s\in(0,1)$ that
\begin{align}
&\sum_{k,i,j=1}^d
\Big|
\partial_{kij}\varphi_r\!\big(\sqrt t\,\Sigma^{1/2}(y+s(x-y)),\Sigma_t^{1/2}\big)
\Big|\\
&\lesssim
\sup_{y\in\rset^{d}}\sum_{k,i,j=1}^d
\Big|
\partial_{kij}\varphi_r\!\big(y,\varepsilon_t\big)
\Big|
\lesssim
\varepsilon_t^{-3}\,\log^{3/2}(d)\eqsp.
\end{align}
Substituting this bound into the previous display yields
\begin{align}
&\|\Sigma^{-1/2}(\nabla^2 f_g(x)-\nabla^2 f_g(y))\Sigma^{1/2}\|_e
\lesssim
\|\Sigma^{1/2}(x-y)\|_\infty
\int_0^1\!\!\int_0^1
\sqrt t\,
\varepsilon_t^{-3}\,\log^{3/2}(d)\,ds\,dt \\
&\qquad=
\|\Sigma^{1/2}(x-y)\|_\infty\,\log^{3/2}(d)
\int_0^1
\sqrt t\,\varepsilon_t^{-3}\,dt \eqsp,
\end{align}
where we used that the integrand no longer depends on $s$.
Returning to the original smoothing parameter $\varepsilon$ and using the crude bound $\sqrt t\le 1$ we get
\begin{align}
\int_0^1 \sqrt t\,\varepsilon_t^{-3}\,dt
&=
\int_0^1
\frac{\sqrt t}{((1-t)\lambda_{\operatorname{min}}(\Sigma)+\varepsilon^2)^{3/2}}\,dt 
\le
\int_0^1
\frac{1}{((1-t)\lambda_{\operatorname{min}}(\Sigma)+\varepsilon^2)^{3/2}}\,dt \\
&=
\frac{2}{\lambda_{\operatorname{min}}(\Sigma)}\left(\frac{1}{\varepsilon}-\frac{1}{\sqrt{\lambda_{\operatorname{min}}(\Sigma)+\varepsilon^2}}\right)
\;\lesssim\;
\frac{\varepsilon^{-1}}{\lambda_{\operatorname{min}}(\Sigma)}\eqsp.
\end{align}
Collecting the above estimates, we conclude that
\[
\|\nabla^2 f_g(x)-\nabla^2 f_g(y)\|_e
\lesssim
\varepsilon^{-1}\,\lambda^{-1}_{\operatorname{min}}(\Sigma)\,\log^{3/2}(d)\,\|\Sigma^{1/2}(x-y)\|_\infty\eqsp,
\]
which proves the proposition.
\end{proof}

\begin{lemma}[Comparison for third derivatives]\label{lem:comparison_third_derivatives}
Let $\Gamma_1,\Gamma_2\in\mathbb S^{d\times d}_+$ satisfy $\Gamma_1\succeq \Gamma_2\succ 0$. Then for every $x\in\mathbb{R}^d$,
\begin{align}
\sum_{k,i,j=1}^d \big|\partial_{kij}\varphi_r(x,\Gamma_1^{1/2})\big|
\leq
\sup_{y\in\mathbb{R}^d}\sum_{k,i,j=1}^d \big|\partial_{kij}\varphi_r(y,\Gamma_2^{1/2})\big|.
\end{align}
\end{lemma}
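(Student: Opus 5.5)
The plan is to realize the smoother with the larger covariance as an average of shifted copies of the smoother with the smaller covariance. Then the triangle inequality moves the absolute values inside the average, and each shifted term is bounded by the supremum.

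\textbf{Step 1 (Gaussian decomposition).} Since $\Gamma_1\succeq\Gamma_2$, the matrix $\Delta:=\Gamma_1-\Gamma_2$ is positive semidefinite and has a symmetric square root $\Delta^{1/2}$. Let $\eta,\eta'\sim\mathcal N(0,I_d)$ be independent. Then $\Gamma_2^{1/2}\eta+\Delta^{1/2}\eta'$ is centered Gaussian with covariance $\Gamma_2+\Delta=\Gamma_1$, so it has the same law as $\Gamma_1^{1/2}\eta$. Conditioning on $\eta'$ gives, for every $x\in\rset^d$,
\begin{equation}
\varphi_r(x,\Gamma_1^{1/2})=\PE_{\eta'}\bigl[\varphi_r(x+\Delta^{1/2}\eta',\Gamma_2^{1/2})\bigr].
\end{equation}

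\textbf{Step 2 (differentiation under the expectation).} Because $\Gamma_2\succ0$, we can write
\begin{equation}
\varphi_r(z,\Gamma_2^{1/2})=\int \mathbf 1_{A_r}(u)\,p_{\Gamma_2}(u-z)\,\rmd u ,
\end{equation}
where $p_{\Gamma_2}$ is the $\mathcal N(0,\Gamma_2)$ density. Every third partial derivative of this function is the integral of $\mathbf 1_{A_r}$ against a polynomial times the Gaussian density. Such a derivative is therefore bounded uniformly in $z$ by a constant depending only on $\Gamma_2^{-1}$.

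By dominated convergence we may differentiate three times under $\PE_{\eta'}$:
\begin{equation}
\partial_{kij}\varphi_r(x,\Gamma_1^{1/2})=\PE_{\eta'}\bigl[\partial_{kij}\varphi_r(x+\Delta^{1/2}\eta',\Gamma_2^{1/2})\bigr].
\end{equation}
This is the only step that needs care. The domination must be uniform in the shift, and the Gaussian tail bound above provides exactly that uniformity.

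\textbf{Step 3 (triangle inequality and supremum).} Take absolute values and move them inside the expectation. Summing the finitely many indices $k,i,j$ and exchanging the sum with the expectation gives
\begin{equation}
\sum_{k,i,j}\bigl|\partial_{kij}\varphi_r(x,\Gamma_1^{1/2})\bigr|\le\PE_{\eta'}\Bigl[\sum_{k,i,j}\bigl|\partial_{kij}\varphi_r(x+\Delta^{1/2}\eta',\Gamma_2^{1/2})\bigr|\Bigr]\le\sup_{y\in\rset^d}\sum_{k,i,j}\bigl|\partial_{kij}\varphi_r(y,\Gamma_2^{1/2})\bigr|.
\end{equation}
If the supremum on the right is infinite, the inequality holds trivially, so nothing further is needed.

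I expect no real obstacle in this argument. The only technical point is the interchange of derivative and expectation in Step 2, and the explicit Gaussian-density representation with $\Gamma_2\succ0$ settles it.
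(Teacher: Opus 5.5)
Your proposal is correct and follows essentially the same argument as the paper. Both write $\Gamma_1=\Gamma_2+\Delta$, express $\varphi_r(x,\Gamma_1^{1/2})$ as an average over $\Delta^{1/2}\xi$ of shifted copies of $\varphi_r(\cdot,\Gamma_2^{1/2})$, differentiate three times under the expectation, and bound the averaged sum of absolute third derivatives by the supremum. Your justification of the interchange, via the Gaussian-density representation whose derivatives are bounded uniformly in the shift because $\Gamma_2\succ 0$, fills in a step the paper states without comment.
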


\begin{proof}
Set $\Delta:=\Gamma_1-\Gamma_2\succeq 0$ and let \(\xi,\eta\sim\mathcal{N}(0, I_d)\) independent. Then
\begin{align}
\varphi_r(x,\Gamma_1^{1/2})
&=
\PE[\mathbf{1}_{A_r}(x+\Delta^{1/2}\xi+\Gamma_2^{1/2}\eta\in A_r)]\\
&
=\PE\big[\PE[\mathbf{1}_{A_r}(x+\Delta^{1/2}\xi+\Gamma_2^{1/2}\eta\in A_r)]\mid\xi\big]\\
&=\PE[\varphi_r(x+\Delta^{1/2}\xi, \Gamma^{1/2}_2)]\eqsp.
\end{align}
Differentiating three times with respect to $x$ under $\PE_\xi$ yields, for each $k,i,j$,
\[
\partial_{kij}\varphi_r(x,\Gamma_1^{1/2})
=
\PE\!\left[\partial_{kij}\varphi_r(x+\Delta^{1/2}\xi,\,\Gamma_2^{1/2})\right].
\]
Taking absolute values and summing over $k,i,j$ gives
\[
\sum_{k,i,j} \big|\partial_{kij}\varphi_r(x,\Gamma_1^{1/2})\big|
\leq\PE\Big[\sum_{k,i,j}\big|\partial_{kij}\varphi_r(x+\Delta^{1/2}\xi,\,\Gamma_2^{1/2})\big|\Big]\leq \sup_{y\in\mathbb{R}^d}\sum_{k,i,j=1}^d \big|\partial_{kij}\varphi_r(y,\Gamma_2^{1/2})\big|.
\]
\end{proof}

\paragraph{Central Limit Theorem with Non-Constant Predictable Variation}

Extending the result of \Cref{thm:clt_const_var} to the case of a non-constant predictable quadratic
variation requires concentration properties of the predictable quadratic
variation. To the best of our knowledge, such tail concentration cannot be
derived in full generality, and therefore it is imposed as part of the
assumptions in \Cref{lem:clt_general}. On the other hand, this concentration
property does hold in the Q-learning setting (\Cref{Q learn alg}), where the
martingale differences are functions of an underlying Markov chain, and hence
the predictable variation \(\PE_{k-1}[X_k X_k^{\top}]\) inherits the required
mixing structure.
\begin{theorem}\label{lem:clt_general}
    Let \Cref{assum:assumptions} hold. In addition, suppose that for all $k$,
\[\supnorm{X_k}\leq \kappa \quad\text{almost surely} \eqsp.\] Moreover, assume that there exist \(\varsigma > 0\) such that for all \(t > 0\)
\begin{align}
\label{eq:concentration_assumption}
\Pb \Big(\Big\|\sum_{k=1}^n\PE_{k-1}[X_k X_k^\top ] - \Sigma_n\Big\| \geq t \Big) \le 2d^2 \exp(-\varsigma nt^2)\eqsp. 
\end{align}
    Then it holds that
\begin{align}
     d_K(S_n, \mathcal{N}(0, \Sigma_n)) 
    &\leq \frac{[\ln_+ d]^{5/4}}{\underline{\sigma}^{1/2}(\Sigma_n)}
\Bigg(\frac{\overline{\sigma}(\Sigma_n)}{\sqrt{n}}+\sum_{k=1}^n\PE\frac{\|X_k\|_\infty^3 }{\lambda_{\min}({P}_k +\Sigma_n/n)}\Bigg)^{1/2}\\
&+  \frac{1}{(\varsigma n)^{1/2}}\Bigg( \frac{\log(d)\log^{1/2}(n)\|\Sigma_n\|}{\lambda_{\min}^2(\Sigma_n)} \Bigg) + \frac{1}{(\varsigma n)^{1/4}} \Bigg( \frac{\log(d)\log^{1/4}(n)\big\|\Sigma_n\big\|_2^{1/2}}{\underline{\sigma}(\Sigma_n)\lambda_{\min}^{1/2}(\Sigma_n)} 
\Bigg)\eqsp.
\end{align}
\end{theorem}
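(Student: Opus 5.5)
The plan is to reduce \Cref{lem:clt_general} to \Cref{thm:clt_const_var} by modifying the martingale so that its predictable quadratic variation becomes deterministic. The modified martingale will differ from $S_n$ only on a small-probability event and by a small Gaussian correction.

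\textbf{Step 1: a martingale with deterministic variation.} Let $\widehat\Sigma_n=\sum_{k=1}^n V_k$ be the realised predictable variation. I would introduce a stopping time $\tau$, the last index $k$ for which $\Sigma_n-\sum_{i\le k}V_i\succeq 0$; this is $\mathcal F_{k-1}$-predictable since each $V_i$ is. After $\tau$, I would append extra Gaussian increments $\widetilde X=(\Sigma_n-\sum_{i\le\tau}V_i)^{1/2}Z'$ (or a sequence of such increments), with $Z'$ independent. The stopped martingale $\widetilde S=\sum_{k\le\tau}X_k+\widetilde X$ then satisfies \Cref{assum:const_variation} with total variation exactly $\Sigma_n$. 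To keep the positivity step workable, I would run this argument with $\Sigma_n$ slightly inflated, say $\Sigma_n(1+\delta)$, so that $\tau=n$ with high probability.

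\textbf{Step 2: apply the constant-variation result.} I would apply \Cref{thm:clt_const_var} to $\widetilde S$ with regulariser $\Sigma=\Sigma_n/n$. The appended Gaussian piece contributes nothing to the Lindeberg error, because it is exactly Gaussian. The $X_k$ terms give the first summand of the bound with $P_k+\Sigma_n/n$, up to the inflation.

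\textbf{Step 3: compare $S_n$ with $\widetilde S$.} On the event $E=\{\|\widehat\Sigma_n-\Sigma_n\|\le t\}$ we have $\tau=n$, so $S_n-\widetilde S=-\widetilde X$ is a centred Gaussian vector with covariance of norm $O(t+\delta\|\Sigma_n\|)$. I would control this difference with a Nazarov-type anticoncentration argument (as in \Cref{lem:shao}). Separately, I would compare the Gaussians $\mathcal N(0,\Sigma_n)$ and $\mathcal N(0,(1+\delta)\Sigma_n)$ with \Cref{lem:gaussian_comparison}. Both costs scale like $\log(d)\,\|\Sigma_n\|\,t/\lambda_{\min}^2(\Sigma_n)$ in the covariance-comparison regime and like $\log(d)\,(\|\Sigma_n\|t)^{1/2}/(\underline\sigma(\Sigma_n)\lambda_{\min}^{1/2}(\Sigma_n))$ in the perturbation regime. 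The event $E^c$ contributes at most $2d^2e^{-\varsigma nt^2}$. Choosing $t\asymp\sqrt{\log(d^2n)/(\varsigma n)}$ should reproduce the remaining two terms, with $(\varsigma n)^{-1/2}$ and $(\varsigma n)^{-1/4}$ and the stated logarithmic factors.

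\textbf{Main obstacle.} The hard step is Step 1: making the stopping construction compatible with the fixed-variation theorem. The remaining variation $\Sigma_n-\sum_{i\le k}V_i$ must stay positive semidefinite, and the $P_k$ appearing in \Cref{thm:clt_const_var} for the stopped martingale must not be worse than the original $P_k$. I would first try the inflation trick, since $P_k$ only increases under it and monotonicity of $\lambda_{\min}$ then gives the needed comparison. If this fails, I would fall back on a direct Lindeberg argument that conditions on $\mathcal F_{k-1}$ with random $\bar P_k$. In that route, Step 2 of the proof of \Cref{thm:clt_const_var} no longer applies, because $P_{k+1}$ is then not $\mathcal F_{k-1}$-measurable.
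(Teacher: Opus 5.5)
Your plan is essentially the paper's proof. The paper uses the stopping time $\tau=\max\{k:\sum_{i\le k}V_i\preceq(1+t)\Sigma_n\}$, appends the Gaussian increment $((1+t)\Sigma_n-\sum_{i\le\tau}V_i)^{1/2}Z_{n+1}$, applies the constant-variation theorem with $\Sigma=\Sigma_n/n$, and then uses \Cref{lem:shao}, \Cref{lem:gaussian_comparison} and the concentration tail as you describe (it ties your inflation $\delta$ to the deviation threshold through the event $\|\sum_k V_k-\Sigma_n\|\le t\lambda_{\min}(\Sigma_n)$).

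One detail is missing: $P_k'\succeq P_k$ holds only on $\{\tau=n\}$. On the complement the paper uses $\lambda_{\min}(P_k'+\Sigma_n/n)\ge\lambda_{\min}(\Sigma_n)/n$ together with $\|X_k\|_\infty\le\kappa$ and the tail bound, which is where the boundedness hypothesis enters.
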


\begin{proof}
We adapt the arguments from \cite[p. 23]{belloni2018high}. We will requiere i.i.d. standard gaussian vectors \(Z_1, \ldots, Z_{n+1}\in \rset^d\) defined on the same probability space and independent from \(\{X_k\}_{k=1}^n\). Consider the following stopping time
\begin{align}
\label{lem:mart_lim:tau}
\tau = \max\{0 \leq k \leq n :\sum_{i=1}^k V_i \preceq  (1+t)\Sigma_n \} \eqsp ,
\end{align}
where $t \in \mathbb{R}_+$ is a parameter we will choose later. Introduce stopped martingale difference sequence  \(\{X_k'\}_{k=1}^n\), 
\begin{align}
    X_k' = X_{k}\cdot \mathbf{1}\{k\leq \tau\}\eqsp.
\end{align}
Define predictibale variance \(V_k' = \PE_{k-1}[X_k X_k'^\top]\). Then   quadratic variation process satisfies 
\[\sum_{k=1}^n V_k' =\sum_{k=1}^{\tau}V_k \preceq (1 + t)\Sigma_n\]
Our goal is to extend $\{X_k'\}_{k=1}^n$ to a sequence that has a constant quadratic characteristic equal to $(1+t)\Sigma_n$. To proceed, consider 
\begin{align}
\label{eq:mart_lim:spectral}
X_{n+1}' = \Big((1+t)\Sigma_n-\sum_{i=1}^\tau V_i \Big)^{1/2} Z_{n+1}\eqsp.
\end{align}
Thus, we obtain by the construction
\begin{align}
\sum_{k=1}^{n+1}V_k'= (1+t) \Sigma_n \eqsp. 
\end{align}
Set \(\Sigma'_n = (1+t)\Sigma_n\) and \(S_{n+1}' = \sum_{k=1}^{n+1} X_k'\eqsp\). Now we apply \Cref{lem:shao} and get
\begin{align}\label{eq:shao_mao}
    d_K(S_n, \mathcal{N}(0, \Sigma_n)) &\leq d_K(S_{n}, \mathcal{N}(0,\Sigma_n')) + d_K(\mathcal{N}(0, \Sigma_n'), \mathcal{N}(0, \Sigma_n))\\
    &\leq  2\PE^{1/(p+1)}[\supnorm{S_n-S_{n+1}'}^p]\left( \frac{2(\sqrt{2 \log d} + 2)]}{\underline{\sigma}(\Sigma_n)} \right)^{p/(p+1)}\\
    &+ d_K(S_{n+1}', \mathcal{N}(0,\Sigma_n')) + d_K(\mathcal{N}(0, \Sigma_n'), \mathcal{N}(0, \Sigma_n)) \eqsp.
\end{align}
\Cref{lem:gaussian_comparison} implies 
\begin{align}\label{lem_21_1}
    d_K(\mathcal{N}(0, \Sigma_n'), \mathcal{N}(0, \Sigma_n)) \lesssim \frac{t\|\Sigma_n\|}{\lambda_{\min}(\Sigma_n)} \log d
\Big( 1 \vee \Big|\log\!\Big( \frac{t\|\Sigma_n\|}{\lambda_{\min}(\Sigma_n)} \Big)\Big| \Big)\eqsp.
\end{align}
Applying Minkowski's inequality we obtain
\[\PE^{1/p} [\supnorm{S_n - S_{n+1}'}^p\leq \PE^{1/p}\supnorm{S_n - S_{n}'}^p + \PE^{1/p}[\supnorm{X_{n+1}'}^p]\]
To control the moments of $\PE^{1/p} [\supnorm{S_n - S_n'}^p$ we consider two events:
\begin{align}
    \Omega_1 = \{\omega: \|{\sum_{i=1}^n V_i - \Sigma_n} \|_2 \leq t\,\lambda_{\min}(\Sigma_n) \} \eqsp, \quad \Omega_2 = \{\omega: \|\sum_{i=1}^n V_i - \Sigma_n\|_2 > t\,\lambda_{\min}(\Sigma_n) \} \eqsp.
\end{align}
On the event $\Omega_1$ it holds that $\sum_{i=1}^nV_i \preceq (1+t)\Sigma_n$. Thus, $\tau = n$ and \(S_n =S_n'\).
Hence,
\begin{align}
    \label{lem:convex_clt_Smoments}
    \PE^{1/p}[\supnorm{S_n - S_n'}^{p}] =  \PE^{1/p}[\supnorm{S_n - S_n'}^{p}  \mathbf{1}\{\Omega_2\}]\leq  n\kappa \,\Pb(\Omega_2)^{1/p}\eqsp. 
\end{align}
By assumption \eqref{eq:concentration_assumption} we obtain
\begin{align}\label{eq:Omega_2_prob}
\Pb(\Omega_2)
=\Pb\Big(\big\|\sum_{k=1}^n V_k-\Sigma_n\big\|_2> t\,\lambda_{\min}(\Sigma_n)\Big)
\le 2d^2\exp(-\varsigma nt^2\lambda_{\min}^2(\Sigma_n)).
\end{align}
Hence 
\begin{align}
     \PE^{1/p}[\supnorm{S_n - S_n'}^{p}] \leq  n\kappa  \,d^{2/p}\exp\Big(\frac{-\varsigma n t^2\lambda_{\min}^2(\Sigma_n)}{p}\Big)\eqsp. 
\end{align}
\Cref{lem:gaussian_max_bound} allow  us to control the moments of \(X_{n+1}'\)
\begin{align}
    \PE^{1/p}[\supnorm{X_{n+1}'}^p] = \PE^{1/p}[\PE[\supnorm{X_{n+1}'}^p\mid \mathcal{F}_{n}]] \lesssim
    \sqrt{p\log d}\, \PE^{1/p}\Big[\overline{\sigma}^p\Big\{ (1+t)\Sigma_n-\sum_{i=1}^\tau V_i \Big\}\Big]
\end{align}
To estimate maxima of covariance matrix we again split probabolity space on \(\Omega_1\) and \(\Omega_2\) and use estimation \eqref{eq:Omega_2_prob}:
\begin{align}\label{lem_21_3}
&\PE^{1/p}\Big[\overline{\sigma}^p\Big\{ (1+t)\Sigma_n-\sum_{i=1}^\tau V_i \Big\}\Big]\\
&\qquad\leq \PE^{1/p}\Big[\overline{\sigma}^p\Big\{ (1+t)\Sigma_n-\sum_{i=1}^\tau V_i \Big\}\mathbf{1}\{\Omega_1\}\Big] + \PE^{1/p}\Big[\overline{\sigma}^p\Big\{ (1+t)\Sigma_n-\sum_{i=1}^\tau V_i \Big\}\mathbf{1}\{\Omega_2\}\Big] \\
&\qquad\lesssim \PE^{1/p}\Big[\Big\|\Sigma_n-\sum_{k=1}^n V_k \Big\|^{p/2}_2\Big] + \sqrt{t}\big\|\Sigma_n\big\|_2^{1/2} + \overline{\sigma}\Big\{(1+t)\Sigma_n\Big\}d^{2/p}\exp\Big(-\frac{\varsigma n t^2\lambda_{\min}^2(\Sigma_n)}{p}\Big)\\
&\qquad \lesssim \frac{p^{1/2}d^{1/p}}{(\varsigma n)^{1/4}}+ \sqrt{t}\big\|\Sigma_n\big\|_2^{1/2} + \overline{\sigma}\Big\{(1+t)\Sigma_n\Big\} d^{1/p}\exp\Big(-\frac{\varsigma nt^2\lambda_{\min}^2(\Sigma_n)}{p}\Big)
\end{align}
Note that $\{X'_{k}\}_{k=1}^{n+1}$ satisfies the assumptions of \Cref{thm:clt_const_var}. Moreover, since by construction \(X_{n+1}'\) gaussian random vector, the last term in the Lindeberg decomposition \eqref{eq:telescope} cancels. Hence, setting \(\Sigma = \Sigma_n/n\) we obtain
\begin{align}\label{eq:clt_general_constant}
    d_K(S_{n+1}'\,,\mathcal{N}(0, \Sigma_n'))\lesssim \frac{[\ln_+ d]^{5/4}}{\underline{\sigma}^{1/2}(\Sigma_n')}
\Bigg(\frac{\overline{\sigma}(\Sigma_n)}{\sqrt{n}}+\sum_{k=1}^n\PE\frac{\|X_k'\|_\infty^3 }{\lambda_{\min}({P}_k' +\Sigma_n / n)}\Bigg)^{1/2}\eqsp.
\end{align}
Observe that on the event \(\Omega_1\) we have \(\tau=n\), and hence
\(X_k = X_k'\) for \(k\in\{1,\ldots,n\}\).
Consequently, \(P_k' \succeq P_k\). On the event \(\Omega_2\) we use crude bound \(\lambda_{\min}(P_k' + \Sigma) \geq \lambda_{\min}(\Sigma)   \) and probability estimation in \eqref{eq:Omega_2_prob}. Thus
\begin{align}\label{eq:clt_part}
    d_K(S_{n+1}'\,,\mathcal{N}(0, \Sigma_n'))&\lesssim \frac{[\ln_+ d]^{5/4}}{\underline{\sigma}^{1/2}(\Sigma_n)}
\Bigg(\frac{\overline{\sigma}(\Sigma_n)}{\sqrt{n}}+\sum_{k=1}^n\PE\frac{\|X_k\|_\infty^3 }{\lambda_{\min}({P}_k+\Sigma_n/n)}\Bigg)^{1/2} \\
&+ \frac{[\ln_+ d]^{5/4}}{\underline{\sigma}^{1/2}(\Sigma_n)}
\Bigg(\frac{\overline{\sigma}(\Sigma_n)}{\sqrt{n}}+\frac{n^2\kappa^3 \Pb(\Omega_2)}{\lambda_{\min}(\Sigma_n) }\Bigg)^{1/2} 
\end{align}

Now choose
\[p := \log(d)\qquad t^2 = \frac{\log(d)\log(n)}{\varsigma n\lambda_{\min}^2(\Sigma_n)} \] and subsitute \eqref{lem_21_1},  \eqref{lem_21_3} and \eqref{eq:clt_part} into \eqref{eq:shao_mao}
\begin{align}
     d_K(S_n, \mathcal{N}(0, \Sigma_n)) 
    &\leq \frac{[\ln_+ d]^{5/4}}{\underline{\sigma}^{1/2}(\Sigma_n)}
\Bigg(\frac{\overline{\sigma}(\Sigma_n)}{\sqrt{n}}+\sum_{k=1}^n\PE\frac{\|X_k\|_\infty^3 }{\lambda_{\min}({P}_k +\Sigma_n/n)}\Bigg)^{1/2}\\
&+  \frac{1}{(\varsigma n)^{1/2}}\Bigg( \frac{\log(d)\log^{1/2}(n)\|\Sigma_n\|}{\lambda_{\min}^2(\Sigma_n)} \Bigg) + \frac{1}{(\varsigma n)^{1/4}} \Bigg( \frac{\log(d)\log^{1/4}(n)\big\|\Sigma_n\big\|_2^{1/2}}{\underline{\sigma}(\Sigma_n)\lambda_{\min}^{1/2}(\Sigma_n)} 
\Bigg)\eqsp.
\end{align}

\end{proof}

\subsection{Gaussian Comparison, maximum norm and anti-concentration}

\begin{lemma}[Lemma 2.3 in \cite{KOJEVNIKOV2022109448}]\label{lem:gaussian_max_bound}
Let \(Y = [Y_1,\dots,Y_d]^\top\) be a zero-mean Gaussian vector in
\(\mathbb{R}^d\), with
\(
\sigma_j^2 := \mathbb{E} Y_j^2 > 0\)
for all \( 1 \le j \le d,
\)
and let
\(
\overline{\sigma} := \max_{1 \le j \le d} \sigma_j.
\)
Then for any \(p \ge 2\),
\begin{equation}\label{eq:gaussian_max_moment}
\PE^{1/p}\|Y\|_\infty^p
\;\le\;
\overline{\sigma}\sqrt{p\log d}\eqsp.
\end{equation}
\end{lemma}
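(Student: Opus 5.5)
This is a standard bound on the $p$-th moment of the maximum of $d$ centered Gaussians. I would prove it through the exponential moment of $\|Y\|_\infty$, or equivalently through a union bound on tails followed by integration.

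First, bound the $p$-th moment of the maximum by the $p$-th moment of the sum of absolute $q$-th powers for a well-chosen $q \ge p$. Concretely, for any $q \ge p$, Lyapunov's inequality gives
\[
\PE^{1/p}\|Y\|_\infty^p \le \PE^{1/q}\|Y\|_\infty^q \le \Big(\sum_{j=1}^d \PE|Y_j|^q\Big)^{1/q} \le d^{1/q}\,\overline{\sigma}\,\PE^{1/q}|\xi|^q ,
\]
with $\xi\sim\mathcal N(0,1)$. The second inequality uses $\|Y\|_\infty^q\le\sum_j|Y_j|^q$, and the third uses that each coordinate has variance at most $\overline\sigma^2$.

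Second, insert the standard Gaussian moment bound $\PE^{1/q}|\xi|^q \le \sqrt{q}$ for $q\ge1$. This follows from $\PE|\xi|^q = 2^{q/2}\Gamma((q+1)/2)/\sqrt{\pi}$ together with Stirling's formula, or from $\Gamma((q+1)/2)\le ((q+1)/2)^{q/2}$ for $q\ge1$. The result is $\PE^{1/p}\|Y\|_\infty^p \le \overline\sigma\, d^{1/q}\sqrt q$.

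Third, choose $q = \max(p,\log d)$. Then $d^{1/q}\le e$. If $\log d\le p$, the bound is $e\,\overline\sigma\sqrt p$. Otherwise it is $e\,\overline\sigma\sqrt{\log d}$. In both cases this is at most $e\,\overline\sigma\sqrt{p\log d}$, provided $\log d\ge1$ and $p\ge2$.

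The main obstacle is the absolute constant. The argument above yields $C\overline\sigma\sqrt{p\log d}$ with $C=e$, not constant $1$. To reach the sharper form, I would use the sub-Gaussian tail union bound $\Pb(\|Y\|_\infty>u)\le 2d\,e^{-u^2/(2\overline\sigma^2)}$. I would then integrate $p u^{p-1}$ against it, splitting at $u_0=\overline\sigma\sqrt{2\log(2d)}$. If the constant cannot be brought down to $1$ for small $d$, I would read the statement up to an absolute constant, consistent with the $\lesssim$ convention in its uses in the paper. The degenerate case $d=1$, where $\log d=0$, also requires interpreting $\log d$ as $\ln_+ d\vee1$, as is done elsewhere in the appendix.
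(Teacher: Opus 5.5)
The paper gives no proof of this lemma; it is quoted from Kojevnikov--Song. There is therefore nothing to compare step by step, and your argument works as a self-contained replacement.

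The argument is correct. The three ingredients are Lyapunov's inequality with $q=\max(p,\log d)$, the bound $\|Y\|_\infty^q\le\sum_j|Y_j|^q$, and $\PE^{1/q}|\xi|^q\le\sqrt q$. For the last one, your Gamma bound gives $\PE^{1/q}|\xi|^q\le\sqrt{q+1}\,\pi^{-1/(2q)}$, which is at most $\sqrt q$ because $(1+1/q)^q<e<\pi$. Together they yield
\[
\PE^{1/p}\|Y\|_\infty^p\le e\,\overline{\sigma}\sqrt{\max(p,\log d)}\le e\,\overline{\sigma}\sqrt{p\max(1,\log d)},\qquad p\ge 1,\ d\ge 1 .
\]
Your argument also never uses $\sigma_j>0$. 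This helps, because in the proof of Theorem~\ref{lem:clt_general} the lemma is applied conditionally to $X_{n+1}'$, whose covariance $(1+t)\Sigma_n-\sum_{i\le\tau}V_i$ can be singular.

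Drop the plan to sharpen the constant to $1$: with constant $1$ the statement is false for small $d$, not merely hard to prove.
\begin{itemize}
\item For $d=1$ the right-hand side vanishes.
\item For $d=2$, $p=2$, take independent standard coordinates $\xi_1,\xi_2$. Write $\max(\xi_1^2,\xi_2^2)=\tfrac12(\xi_1^2+\xi_2^2)+\tfrac12|\xi_1^2-\xi_2^2|$ and $\xi_1^2-\xi_2^2=2UV$, where $U=(\xi_1-\xi_2)/\sqrt2$ and $V=(\xi_1+\xi_2)/\sqrt2$ are independent standard Gaussians. Then $\PE\|Y\|_\infty^2=1+2/\pi\approx1.637>2\log 2\approx1.386$.
\end{itemize}
The crude tail bound $2d\,e^{-u^2/(2\overline{\sigma}^2)}$ you intended to integrate could not reach constant $1$ anyway. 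At $p=2$, after optimizing the split point, it gives at best $\PE\|Y\|_\infty^2\le 2\overline{\sigma}^2(\log(2d)+1)$.

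So reading the lemma up to an absolute constant, with $\log d$ replaced by $\max(1,\log d)$, is necessary rather than optional. It is also all the paper needs: the lemma enters only through a $\lesssim$ bound on $\PE^{1/p}\supnorm{X_{n+1}'}^p$ in the proof of Theorem~\ref{lem:clt_general}.
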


\begin{lemma}[Theorem 1.1 in \cite{10.1214/20-AAP1629}]\label{lem:gaussian_comparison}
Let $Z_1 \sim \mathcal{N}(0,\Sigma_1)$ and let
$Z_2 \sim \mathcal{N}(0,\Sigma_2)$. Denote \(\|\Sigma_1 - \Sigma_2\|_{\operatorname{ch}} = \max_{ij}|\Sigma_1(i,j) - \Sigma_2(i,j)|\). Then
\[
\sup_{A \in \mathcal{R}}
\bigl| \mathbb{P}(Z_1 \in A) - \mathbb{P}(Z_2 \in A) \bigr|
\;\le\;
C \frac{\|\Sigma_1 - \Sigma_2\|_{\operatorname{ch}}}{\lambda_{\operatorname{min}}(\Sigma_1)} \log d
\Big( 1 \vee \Big|\log\!\Big( \frac{\|\Sigma_1 - \Sigma_2\|_{\operatorname{ch}}}{\lambda_{\operatorname{min}}(\Sigma_1)} \Big)\Big| \Big)\eqsp.
\]

\end{lemma}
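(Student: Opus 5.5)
This is the Gaussian comparison inequality of Fang and Koike (Theorem 1.1 in \cite{10.1214/20-AAP1629}). I would prove it by Gaussian interpolation combined with the Gaussian smoothing $\varphi_r(x,\varepsilon)$ used throughout this appendix.

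Set $\delta := \|\Sigma_1-\Sigma_2\|_{\operatorname{ch}}/\lambda_{\min}(\Sigma_1)$. Assume $\delta$ is small, since otherwise the bound is trivial because $d_K\le 1$. Then $\lambda_{\min}(\Sigma_2)\gtrsim\lambda_{\min}(\Sigma_1)$ is not needed directly.

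\textbf{Step 1: reduce to smooth test functions.} Fix a hyperrectangle $A_r$. Write $\Sigma_1 = \lambda I + \Sigma_1'$ with $\lambda = \lambda_{\min}(\Sigma_1)/2$ and $\Sigma_1' \succeq 0$. A $\mathcal{N}(0,\Sigma_1)$ vector is then $W' + \sqrt{\lambda}\,\eta$ with $W'\sim\mathcal{N}(0,\Sigma_1')$. Hence $\Pb(Z_1\in A_r) = \PE\,\varphi_r(W',\sqrt{\lambda})$, which is an exactly smoothed quantity.

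For $Z_2$, add and subtract the same Gaussian: $\Sigma_2 = \lambda I + (\Sigma_2 - \lambda I)$. This requires $\Sigma_2 \succeq \lambda I$, which follows from Weyl's inequality when $\|\Sigma_1-\Sigma_2\| \le \lambda$. That spectral-norm condition is not implied by a small Chebyshev norm, so I would instead smooth only partially.

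Concretely, interpolate between $Z_1$ and $Z_2$ along $Z(s) = \sqrt{s}\,Z_2 + \sqrt{1-s}\,Z_1$, with independent copies. By Slepian/Stein interpolation,
\[
\frac{d}{ds}\PE h(Z(s)) = \tfrac12\sum_{i,j}(\Sigma_2-\Sigma_1)_{ij}\,\PE\,\partial_{ij}h(Z(s)).
\]

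\textbf{Step 2: bound second derivatives of the smoothed indicator.} Take $h = \varphi_r(\cdot,\varepsilon)$. By \Cref{sum of derivatives lemma} with $s=2$,
\[
\sum_{i,j}|\partial_{ij}h| \lesssim \varepsilon^{-2}\log d ,
\]
so $|\PE h(Z_2)-\PE h(Z_1)| \lesssim \|\Sigma_1-\Sigma_2\|_{\operatorname{ch}}\,\varepsilon^{-2}\log d$. Passing from $h$ back to the indicator via \Cref{lem:smoothing} and Gaussian anticoncentration costs $\varepsilon\log d/\underline\sigma$. Optimizing over $\varepsilon$ gives only a rate of order $\delta^{1/3}$, which is too weak.

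\textbf{Step 3 (main obstacle): recover a nearly linear rate.} To get a bound linear in $\delta$ up to a log factor, I would follow Fang–Koike. Along the path, use part of $Z_1$'s own covariance as the smoothing: $Z(s)$ contains an independent component $\sqrt{(1-s)\lambda}\,\eta$. Hence $\PE\,\partial_{ij}1_{A_r}(Z(s))$ equals a second derivative of $\varphi_r$ at scale $\varepsilon_s=\sqrt{(1-s)\lambda}$, and the per-$s$ bound becomes $(1-s)^{-1}\lambda^{-1}\log d$. This diverges logarithmically at $s=1$.

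So I would truncate at $s = 1-\delta$. The integral over $[0,1-\delta]$ contributes $\delta\log d\,\log(1/\delta)$. On the remaining window $[1-\delta,1]$, I would use the Lipschitz and anticoncentration control of \Cref{lem:comparison_third_derivatives}-type comparisons, together with Nazarov's inequality, to show that the endpoint replacement costs $O(\delta\log d)$.

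The main difficulty is that $Z(s)$ must carry a genuine isotropic component near $s=1$, where it is mostly $Z_2$. I would handle this by interpolating from $Z_1$ toward $\mathcal{N}(0,\Sigma_2+\delta\lambda I)$ rather than toward $\Sigma_2$ itself. The final inflation by $\delta\lambda I$ is then compared to $\Sigma_2$ by a Nazarov anticoncentration step, which costs $\sqrt{\delta\log d}$ in general. To reduce this to $\delta\log d\,(1\vee|\log\delta|)$, I would rely on the refined argument of the cited reference.
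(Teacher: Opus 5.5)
The paper gives no proof of this lemma. It imports the statement verbatim as Theorem 1.1 of \cite{10.1214/20-AAP1629}, so a citation is all that is required. Read as a self-contained argument, however, your sketch has a genuine gap exactly at the decisive step, and you close it only by deferring to that same reference.

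\textbf{Where the sketch fails.} The two mechanisms you propose do not give a rate linear in $\delta$.
\begin{itemize}
\item Truncating the interpolation at $1-\delta$ leaves residual noise of standard deviation of order $\sqrt{\delta}$. Any Nazarov-type endpoint comparison therefore costs order $\sqrt{\delta}$ up to logarithmic factors, not the $O(\delta\log d)$ you claim.
\item That residual is built from $(1-\sqrt{s})Z_2$ and $\sqrt{1-s}\,W'$. Its sup-norm is governed by $\overline{\sigma}(\Sigma_2)$ and $\overline{\sigma}(\Sigma_1)$, so the endpoint step would also bring in a ratio $\overline{\sigma}/\sqrt{\lambda_{\min}(\Sigma_1)}$ that does not appear in the statement.
\item Inflating $\Sigma_2$ by $\delta\lambda I$ costs order $\sqrt{\delta}$, as you concede yourself.
\end{itemize}

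\textbf{The missing observation.} Your Steps 2 and 3 each hold half of the argument and should be combined. Along the interpolation path, the Hessian sees both the external smoothing and the intrinsic isotropic noise of $Z_1$. The resulting integral degrades only logarithmically in the smoothing scale, so that scale can be taken linear in $\delta$.

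Concretely, set $\lambda:=\lambda_{\min}(\Sigma_1)$. Apply \Cref{lem:smoothing}, which holds for an arbitrary random vector in place of $S_n$ and needs anticoncentration only for $Z_1$, using $\underline\sigma(\Sigma_1)\ge\sqrt{\lambda}$. This bounds the left-hand side by
\[
\sup_r\bigl|\Pb(Z_2+\varepsilon\eta\in A_r)-\Pb(Z_1+\varepsilon\eta\in A_r)\bigr|+C\varepsilon\log d/\sqrt{\lambda}.
\]
Next, interpolate the covariances:
\[
\Gamma(u)=u\Sigma_2+(1-u)\Sigma_1+\varepsilon^2I\succeq\bigl((1-u)\lambda+\varepsilon^2\bigr)I .
\]
The $u$-derivative of $\Pb(\mathcal N(0,\Gamma(u))\in A_r)$ equals $\tfrac12\sum_{j,k}(\Sigma_2-\Sigma_1)_{jk}\,\partial_{jk}\psi_u(0)$, where $\psi_u(x)=\Pb(x+\mathcal N(0,\Gamma(u))\in A_r)$. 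Split off the isotropic part of $\Gamma(u)$ and apply \Cref{sum of derivatives lemma} with derivative order $2$. This gives the bound
\[
\|\Sigma_1-\Sigma_2\|_{\operatorname{ch}}\log d\int_0^1\frac{du}{(1-u)\lambda+\varepsilon^2}=\frac{\|\Sigma_1-\Sigma_2\|_{\operatorname{ch}}\log d}{\lambda}\,\log\Bigl(1+\frac{\lambda}{\varepsilon^2}\Bigr)
\]
up to a constant. Choosing $\varepsilon=\delta\sqrt{\lambda}$ yields $\delta\log d\,\bigl(1+\log(1+\delta^{-2})\bigr)\lesssim\delta\log d\,(1\vee|\log\delta|)$. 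No endpoint window is needed.

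\textbf{Repairing your own route.} Your inflation idea also works if you inflate $\Sigma_2$ by $\delta^2\lambda I$ instead of $\delta\lambda I$. The interpolation integral from $\Sigma_1$ is then still $O(\lambda^{-1}\log(1/\delta))$. The final Nazarov step uses $(\Sigma_2)_{jj}\ge(1-\delta)\lambda$ and removes noise of sup-norm of order $\delta\sqrt{\lambda\log d}$, so it costs only $O(\delta\log d)$.
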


\begin{lemma}
\label{lem:shao}
Let \(Y\) be a centered Gaussian vector in \(\mathbb{R}^d\) such that $\min_{1 \le j \le d} \PE[Y_j^2] \geq \underline \sigma^2$ for some $\sigma > 0$. Then for any random vector \(X, X^\prime\) taking values in \(\mathbb{R}^d\), and any \(p \geq 1\),
\begin{align*}
\sup_{h = 1_A, A \in \mathcal R} \left|\PE[h(X + X^\prime)] - \PE[h(Y)]\right| 
&\leq \sup_{h = 1_A, A \in \mathcal R} \left|\PE[h(X)] - \PE[h(Y)]\right| \\
&+  2\PE^{1/(p+1)}[\supnorm{X'}^p]\left( \frac{2(\sqrt{2 \log d} + 2)]}{\underline{\sigma}} \right)^{p/(p+1)}\, .
\end{align*}
\end{lemma}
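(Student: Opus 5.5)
The plan is to combine the two Gaussian ingredients already set up in the paper: the anti-concentration of a Gaussian maximum, and a quantile-type coupling bound. Write $Y$ for the centered Gaussian with $\min_j \PE[Y_j^2]\ge\underline\sigma^2$, and fix a hyper-rectangle $A=\prod_j(a_j,b_j]$. For $\delta>0$ let $A^{\delta}$ denote the $\ell_\infty$ $\delta$-enlargement of $A$ and $A^{-\delta}$ its $\delta$-shrinkage; both are again hyper-rectangles, possibly empty. On the event $\{\supnorm{X'}\le\delta\}$ we have the inclusions
\[
\{X\in A^{-\delta}\}\subseteq\{X+X'\in A\}\subseteq\{X\in A^{\delta}\}.
\]
Consequently
\[
\PE[1_A(X+X')]\le \Pb(X\in A^\delta)+\Pb(\supnorm{X'}>\delta)
\]
and
\[
\PE[1_A(X+X')]\ge \Pb(X\in A^{-\delta})-\Pb(\supnorm{X'}>\delta).
\]

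In each of these bounds we replace $X$ by $Y$, which costs at most $\sup_{A\in\mathcal R}|\PE 1_A(X)-\PE 1_A(Y)|$ because $A^{\pm\delta}\in\mathcal R$. We then compare $\Pb(Y\in A^{\pm\delta})$ with $\Pb(Y\in A)$ using Nazarov's anti-concentration inequality for Gaussian vectors on rectangles:
\[
\Pb(Y\in A^{\delta}\setminus A)\le \frac{\delta}{\underline\sigma}\bigl(\sqrt{2\log d}+2\bigr),
\]
and the same bound holds for $A\setminus A^{-\delta}$. This is the one external input, and it requires only the lower bound on the coordinate variances. The difference $A^\delta\setminus A$ is contained in the union of the boundary strips of the rectangle. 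The main point to check is that the constant in Nazarov's inequality takes exactly the form $(\sqrt{2\log d}+2)/\underline\sigma$ for a $\delta$-band in each coordinate direction; if only the one-sided version is available, we account for it by a factor $2$.

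The Markov inequality gives $\Pb(\supnorm{X'}>\delta)\le \delta^{-p}\PE\supnorm{X'}^p$. Combining everything yields
\[
\sup_A|\PE 1_A(X+X')-\PE 1_A(Y)|\le \sup_A|\PE 1_A(X)-\PE 1_A(Y)| + \frac{\delta c_d}{\underline\sigma}+\frac{\PE\supnorm{X'}^p}{\delta^p},
\]
where $c_d=2(\sqrt{2\log d}+2)$.

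Finally we optimize over $\delta$, choosing $\delta=\bigl(\PE\supnorm{X'}^p\bigr)^{1/(p+1)}(\underline\sigma/c_d)^{1/(p+1)}$ so that the two terms balance. Each term then equals $\PE^{1/(p+1)}[\supnorm{X'}^p]\,(c_d/\underline\sigma)^{p/(p+1)}$, and their sum gives the factor $2$ in the statement. The only subtle step is the anti-concentration constant, and we handle it by citing Nazarov's bound (as in Chernozhukov et al.) rather than reproving it.
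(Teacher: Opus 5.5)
Your proposal is correct and follows the same route as the paper. On $\{\supnorm{X'}\le\delta\}$ you sandwich $\{X+X'\in A\}$ between the $\ell_\infty$-shrinkage and enlargement of $A$, control $\Pb(\supnorm{X'}>\delta)$ by Markov's inequality, and apply Nazarov's bound with a factor $2$ for the two faces of each coordinate interval; you then balance the two terms with the same $\delta$ the paper uses. Your version is in fact slightly more careful than the paper's, which writes only the one-sided enlargement $\prod_j(-\infty,a_j+\varepsilon]$ and dismisses the lower bound with ``similarly''.
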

\begin{proof}
First, note that by \cite{chernozhukov2017detailedproofnazarovsinequality}[Lemma 2.1]
\begin{equation}
\label{anti concentration}
 \Pb(Y \le y + \varepsilon) - \Pb(Y \le y) \le \frac{\varepsilon}{\underline{\sigma}} (\sqrt{ 2\log d} + 2) \, .   
\end{equation}
Note that for any $A \in \mathcal R$,
$$
\Pb(X + X^\prime \in A) \le \Pb(X \in A_\varepsilon) + \frac{\PE[\|X^\prime\|_\infty^p]}{\varepsilon^p} \, ,
$$
where $A_\varepsilon = \prod_{j=1}^d (-\infty, a_j + \varepsilon]$. 
Hence, by \eqref{anti concentration}, 
$$
\Pb(X + X^\prime \in A)  - \Pb(Y \in A) \le \sup_{h = 1_A, A \in \mathcal R} \left|\PE[h(X)] - \PE[h(Y)]\right| + \frac{2\varepsilon}{\underline{\sigma}} (\sqrt{ 2\log d} + 2) + \frac{\PE[\|X^\prime\|_\infty^p]}{\varepsilon^p}\, .
$$
Choosing
$$
\varepsilon = \left( \frac{\underline{\sigma} \PE[\|X^\prime\|_\infty^p]}{2(\sqrt{2 \log d} + 2)} \right)^{1/(p+1)}\eqsp,
$$
we obtain
\begin{align*}
\sup_{h = 1_A, A \in \mathcal R} (\Pb(X + X^\prime \in A)  - \Pb(Y \in A)) & \le \sup_{h = 1_A, A \in \mathcal R} \left|\PE[h(X)] - \PE[h(Y)]\right| \\
&+ 2\PE^{1/(p+1)}[\supnorm{X'}^p]\left( \frac{2(\sqrt{2 \log d} + 2)]}{\underline{\sigma}} \right)^{p/(p+1)} \, .
\end{align*}
Similarly, we may estimate $\sup_{h = 1_A, A \in \mathcal R}(\PE[h(Y)] - \PE[h(X + X^\prime)])$.
\end{proof}

%% file: appendix_auxillary.tex
\subsection{Properties of step-size sequence}
\label{appendix:step_size}

For the proofs of \Cref{lem:bsum}--\Cref{lem:P_alpha_ineq}, we refer the reader to
Appendix~I of \cite{samsonov2025statistical}.
\begin{lemma}\label{lem:bsum}
Let \(b > 0\) and let \(\{\alpha_k\}_{k \ge 0}\) be a non-increasing sequence such that 
\(\alpha_0 \le 1/b\). Then it holds that
\[
\sum_{j=1}^{k} \alpha_j \prod_{l=j+1}^{k} (1 - \alpha_l b)
=
\frac{1}{b} \left\{ 1 - \prod_{l=1}^{k} (1 - \alpha_l b) \right\}.
\]
\end{lemma}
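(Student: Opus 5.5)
This is a finite-sum identity, and I would prove it by telescoping. Write $\pi_j := \prod_{l=j+1}^{k}(1-\alpha_l b)$ for $0\le j\le k$, so that $\pi_k = 1$ (empty product) and $\pi_0 = \prod_{l=1}^{k}(1-\alpha_l b)$. For $j\ge 1$ the factors of $\pi_{j-1}$ are those of $\pi_j$ together with $(1-\alpha_j b)$. Hence
\[
\pi_j - \pi_{j-1} = \pi_j - (1-\alpha_j b)\pi_j = \alpha_j b\, \pi_j \eqsp.
\]
In other words, each summand satisfies $\alpha_j \pi_j = b^{-1}(\pi_j - \pi_{j-1})$.

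Summing this over $j=1,\dots,k$, the right-hand side telescopes:
\[
\sum_{j=1}^{k}\alpha_j\pi_j = \frac{1}{b}\bigl(\pi_k - \pi_0\bigr) = \frac{1}{b}\Bigl\{1-\prod_{l=1}^{k}(1-\alpha_l b)\Bigr\}\eqsp,
\]
which is exactly the claim. For $k=0$ both sides are zero, so that case is trivially consistent.

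The assumptions that $\{\alpha_k\}$ is non-increasing and that $\alpha_0\le 1/b$ are not needed for the identity itself. They guarantee $0\le 1-\alpha_l b\le 1$, so all products lie in $[0,1]$; this is what makes the resulting bound $\le 1/b$ useful in later applications. The only step requiring care is the index bookkeeping in the telescoping (the empty-product convention at $j=k$), so I do not expect any real obstacle.
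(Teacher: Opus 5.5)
Your telescoping argument is correct and is the standard proof of this identity. The paper gives no argument of its own and defers to Appendix~I of the cited reference, which uses the same step $\alpha_j \prod_{l=j+1}^{k}(1-\alpha_l b) = b^{-1}\bigl(\prod_{l=j+1}^{k}(1-\alpha_l b) - \prod_{l=j}^{k}(1-\alpha_l b)\bigr)$; you are also right that monotonicity and $\alpha_0 \le 1/b$ play no role in the identity itself.
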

\begin{lemma}
\label{lem:sum_as_Qell}
Let $b, c_0 > 0$ and \(\alpha_k = \frac{c_0}{(k + k_0)^{\omega}}\) with \(\omega \in (\frac{1}{2},1)\), $k_0 \geq 0$. Assume that $bc_0 \leq\frac{1}{2}$ and $k_0^{1-\omega} \geq 2/(bc_0)$. Then it holds that
\begin{align}
\sum_{k=\ell}^{n-1} \alpha_\ell\prod_{j=\ell+1}^{k} (1-b\alpha_j) \leq c_0 + \frac{2}{b(1-\omega)}\eqsp.
\end{align}

\end{lemma}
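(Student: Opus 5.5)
We bound the terms separately. The term $k=\ell$ is an empty product, equal to $\alpha_\ell\le c_0$ since $k_0\ge 1$ (in the relevant regime; for $k_0<1$ one uses $(\ell+k_0)^{-\omega}\le 1$ when $\ell\ge1$). This accounts for the constant $c_0$ on the right-hand side. For $k>\ell$ we use $1-x\le e^{-x}$, which applies because $b\alpha_j\le bc_0\le 1/2$. This gives
\[
\alpha_\ell\prod_{j=\ell+1}^{k}(1-b\alpha_j)\le \alpha_\ell\exp\Big(-b\sum_{j=\ell+1}^k\alpha_j\Big).
\]
Since $x\mapsto (x+k_0)^{-\omega}$ is decreasing, we can compare the sum with an integral:
\[
\sum_{j=\ell+1}^k\alpha_j\ge \frac{c_0}{1-\omega}\big(u_k-u_\ell\big),\qquad u_m:=(m+k_0+1)^{1-\omega}.
\]

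Each summand is then at most $\alpha_\ell\exp(-\beta(u_k-u_\ell))$ with $\beta:=bc_0/(1-\omega)$. These bounds decrease in $k$, so the tail $\sum_{k>\ell}$ is dominated by $\alpha_\ell\int_{\ell}^{\infty}\exp(-\beta(u(x)-u_\ell))\,dx$, where $u(x)=(x+k_0+1)^{1-\omega}$. We change variables to $u$, using
\[
dx=\frac{1}{1-\omega}\,u^{\omega/(1-\omega)}\,du .
\]
We also use $\alpha_\ell\le c_0 (\ell+k_0)^{-\omega}\lesssim c_0\,u_\ell^{-\omega/(1-\omega)}$; the shift by one in the base costs at most a factor $2^\omega$, which is absorbed into the constants. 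The tail is then bounded by
\[
\frac{c_0}{1-\omega}\int_{u_\ell}^{\infty}\Big(\frac{u}{u_\ell}\Big)^{\omega/(1-\omega)}e^{-\beta(u-u_\ell)}\,du .
\]

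The key step, and the only real obstacle, is controlling the polynomial ratio $(u/u_\ell)^{\omega/(1-\omega)}$ against the exponential decay. This is exactly where the hypothesis $k_0^{1-\omega}\ge 2/(bc_0)$ enters. Since $u_\ell\ge k_0^{1-\omega}\ge 2/(bc_0)$, we get
\[
\Big(\frac{u}{u_\ell}\Big)^{\frac{\omega}{1-\omega}}\le \exp\Big(\frac{\omega}{1-\omega}\cdot\frac{u-u_\ell}{u_\ell}\Big)\le \exp\Big(\frac{\omega\, bc_0}{2(1-\omega)}(u-u_\ell)\Big)\le e^{\beta(u-u_\ell)/2}.
\]
The integral is therefore at most $\frac{c_0}{1-\omega}\cdot\frac{2}{\beta}=\frac{2}{b}\le\frac{2}{b(1-\omega)}$.

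Adding the $k=\ell$ term gives the claimed bound $c_0+2/(b(1-\omega))$. The harmless multiplicative constants from the index shifts ($\ell$ versus $\ell+1$, and $k_0$ versus $k_0+1$) are absorbed by the slack between $2/b$ and $2/(b(1-\omega))$, or else by a slightly more careful choice of the integral endpoints.
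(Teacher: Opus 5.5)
Your proof is correct, and it is complete once the index-shift constant is checked, as below. The paper does not prove this lemma itself; it only points to Appendix~I of \cite{samsonov2025statistical}, so there is no in-text argument to compare with. Your argument is a natural self-contained proof. It bounds the product by an exponential and compares both $\sum_j\alpha_j$ and the sum over $k$ with integrals. It then substitutes $u=(x+k_0+1)^{1-\omega}$ and uses $u_\ell\ge k_0^{1-\omega}\ge 2/(bc_0)$ to dominate $(u/u_\ell)^{\omega/(1-\omega)}$ by $e^{\beta(u-u_\ell)/2}$. That last step is exactly where the hypothesis on $k_0$ is needed, and you identified it correctly.

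Two small points would tighten the write-up.

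First, the hypotheses force $k_0^{1-\omega}\ge 2/(bc_0)\ge 4$, hence $k_0\ge 4^{1/(1-\omega)}>16$. So your hedge about $k_0<1$ is unnecessary: $\alpha_j\le c_0$ and $b\alpha_j\le 1/2$ hold for all $j$. This is what guarantees nonnegative factors $1-b\alpha_j$, which you need in order to multiply the inequalities $1-b\alpha_j\le e^{-b\alpha_j}$.

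Second, the only lossy index shift is $\alpha_\ell=c_0\,u_\ell^{-\omega/(1-\omega)}\bigl(1+\tfrac{1}{\ell+k_0}\bigr)^{\omega}$. Your integral comparisons introduce no other constants. You therefore actually obtain
\[
c_0+2\bigl(1+k_0^{-1}\bigr)^{\omega}/b\;\le\; c_0+2^{1+\omega}/b .
\]
This lies below $c_0+2/(b(1-\omega))$ because $(1-\omega)2^{\omega}\le 2^{-1/2}<1$ on $(1/2,1)$. That verifies your "absorbed by the slack" claim explicitly. It also shows your argument gives a bound without the $(1-\omega)^{-1}$ factor, so the stated lemma follows a fortiori.
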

\begin{lemma}\label[lemma]{lem:rate_of_convergence}
Let $b, c_0 > 0$ and \(\alpha_k = \frac{c_0}{(k + k_0)^{\omega}}\) with \(\omega \in (\frac{1}{2},1)\), $k_0 \geq 0$. Assume that $bc_0 \leq\frac{1}{2}$ and $k_0^{1-\omega} \geq 8/(bc_0)$. 
Then for any $q \in (1;3]$, it holds that 
\[
\sum_{j=1}^{k} \alpha_j^{q} 
\prod_{\ell=j+1}^{k} (1 - \alpha_\ell b)
\leq
\frac{4}{b}\,\alpha_k^{q-1}.
\]
\end{lemma}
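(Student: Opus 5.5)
This bound is proved by summation by parts against the decaying weights $\alpha_k$. Fix $b>0$, $\omega\in(1/2,1)$, $q\in(1,3]$, and write $\Pi_{j:k}:=\prod_{\ell=j}^{k}(1-\alpha_\ell b)$, so that $\Pi_{k+1:k}=1$. The target sum is $U_k:=\sum_{j=1}^k \alpha_j^q\Pi_{j+1:k}$. It satisfies the recursion
\[
U_k=(1-\alpha_k b)\,U_{k-1}+\alpha_k^q,\qquad U_0=0 .
\]
The plan is to prove by induction on $k$ that
\[
U_k\le \tfrac{4}{b}\,\alpha_k^{q-1}.
\]
For the base case $k=1$ we have $U_1=\alpha_1^q\le \alpha_1^{q-1}\le \frac4b\alpha_1^{q-1}$, because $\alpha_1\le c_0\le 1/(2b)\le 4/b$ and $q>1$.

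For the inductive step, assume $U_{k-1}\le \frac4b\alpha_{k-1}^{q-1}$. It then suffices to show
\[
(1-\alpha_k b)\tfrac4b\alpha_{k-1}^{q-1}+\alpha_k^q\le \tfrac4b\alpha_k^{q-1}.
\]
Dividing by $\frac4b\alpha_k^{q-1}$, this is equivalent to
\[
(1-\alpha_k b)\Big(\tfrac{\alpha_{k-1}}{\alpha_k}\Big)^{q-1}\le 1-\tfrac{b}{4}\alpha_k .
\]
The key step is to control the ratio $\alpha_{k-1}/\alpha_k=\big(1+\frac{1}{k-1+k_0}\big)^{\omega}$. By $(1+x)^{a}\le e^{ax}$,
\[
\Big(\tfrac{\alpha_{k-1}}{\alpha_k}\Big)^{q-1}\le \exp\!\Big(\tfrac{(q-1)\omega}{k-1+k_0}\Big)\le \exp\!\Big(\tfrac{2}{k-1+k_0}\Big),
\]
using $(q-1)\omega\le 2$. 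Now relate $\frac{1}{k+k_0}$ to $\alpha_k$:
\[
\tfrac{1}{k+k_0}=\tfrac{\alpha_k}{c_0}(k+k_0)^{\omega-1}\le \tfrac{\alpha_k}{c_0}k_0^{\omega-1}\le \tfrac{b\alpha_k}{8},
\]
which is exactly where the assumption $k_0^{1-\omega}\ge 8/(bc_0)$ enters. Since $k-1+k_0\ge (k+k_0)/2$ once $k_0\ge1$, the exponent is at most $4/(k+k_0)\le b\alpha_k/2$. Writing $x=b\alpha_k\le 1/2$, it remains to check the elementary inequality
\[
(1-x)e^{x/2}\le 1-x/4\qquad\text{for }x\in[0,1/2].
\]
At $x=0$ both sides equal $1$. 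The left side has derivative $e^{x/2}(-1/2-x/2)\le -1/2$ on this interval, while the right side has derivative $-1/4$, so the inequality holds throughout. This closes the induction.

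The main obstacle is this constant bookkeeping: checking that the condition on $k_0$ makes the growth of $\alpha_{k-1}/\alpha_k$ dominated by the contraction $(1-b\alpha_k)$, uniformly in $q\le3$. If the constants turn out to be tight, I would fall back on estimating the ratio more carefully, for instance with $(1+x)^a\le 1+ax$ since $(q-1)\omega$ can be taken $\le 1$ after splitting off a factor $\alpha_k^{\,q-2}$ when $q>2$. The earlier lemmas, \Cref{lem:bsum} and \Cref{lem:sum_as_Qell}, are not needed for this argument.
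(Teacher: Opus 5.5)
Your induction is correct and complete. The paper gives no proof of this lemma and only cites Appendix~I of \cite{samsonov2025statistical}, so your argument stands on its own. The following steps all check out, with room to spare:
\begin{itemize}
\item the recursion $U_k=(1-b\alpha_k)U_{k-1}+\alpha_k^{q}$;
\item the reduction to $(1-b\alpha_k)(\alpha_{k-1}/\alpha_k)^{q-1}\le 1-\tfrac{b}{4}\alpha_k$;
\item the key estimate $\tfrac{1}{k+k_0}\le \tfrac{b\alpha_k}{8}$;
\item the inequality $(1-x)e^{x/2}\le 1-\tfrac{x}{4}$ on $[0,1/2]$.
\end{itemize}
The fallback paragraph is therefore unnecessary. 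The condition $k_0\ge 1$ is also automatic, since $k_0^{1-\omega}\ge 8/(bc_0)\ge 16$.

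There is one small slip in the base case. The inequality $\alpha_1^{q}\le\alpha_1^{q-1}$ would need $\alpha_1\le 1$, which the hypotheses do not give: $c_0$ can be large when $b$ is small, and then $\alpha_1>1$ is possible. What you actually need, and what your stated reason gives, is $\alpha_1^{q}=\alpha_1\,\alpha_1^{q-1}\le c_0\,\alpha_1^{q-1}\le \tfrac{1}{2b}\alpha_1^{q-1}$.

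For comparison, the neighbouring lemmas are set up for a one-shot product comparison rather than an induction:
\begin{itemize}
\item write $\alpha_j^{q}=\alpha_j\,\alpha_k^{q-1}\prod_{\ell=j+1}^{k}(\alpha_{\ell-1}/\alpha_\ell)^{q-1}$;
\item bound each ratio by $1+\tfrac{b}{4}\alpha_\ell$, using Bernoulli's inequality and the same estimate on $\tfrac{1}{\ell+k_0}$;
\item absorb it into the contraction via $(1+\tfrac{y}{4})^{2}(1-y)\le 1-\tfrac{y}{2}$ for $q-1\le 2$;
\item sum exactly with \Cref{lem:bsum} applied with $b/2$, which gives $\tfrac{2}{b}\alpha_k^{q-1}$.
\end{itemize}
Both arguments rest on the same fact: the condition on $k_0$ makes the per-step growth of $\alpha_{\ell-1}/\alpha_\ell$ a small fraction of the contraction $b\alpha_\ell$, uniformly in $q\le 3$. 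Your induction is slightly more elementary, since it needs no exact summation identity. The product version gets the bound in one step with a slightly better constant.
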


\begin{lemma}\label{lem:alpha_delta}
Let $b, c_0 > 0$ and \(\alpha_k = \frac{c_0}{(k + k_0)^{\omega}}\) with \(\omega \in (\frac{1}{2},1)\), $k_0 \geq 0$. Assume that $bc_0 \leq\frac{1}{2}$ and $k_0^{1-\omega}\geq 8/(bc_0)$. Then it holds that
\[
    \frac{\alpha_k}{\alpha_{k+1}} \le 1 + b \alpha_{k+1}.
\]
\end{lemma}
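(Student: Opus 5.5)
The proof is a direct computation: compare the ratio of consecutive step sizes with $1$ via Bernoulli's inequality, then absorb the resulting $1/(k+k_0)$ into $b\alpha_{k+1}$ using the lower bound on $k_0$.

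First, by the definition of $\alpha_k$,
\[
\frac{\alpha_k}{\alpha_{k+1}} = \Big(\frac{k+1+k_0}{k+k_0}\Big)^{\omega} = \Big(1+\frac{1}{k+k_0}\Big)^{\omega}.
\]
Since $\omega\in(1/2,1)$, the map $x\mapsto(1+x)^\omega$ is concave, so Bernoulli's inequality gives
\[
\frac{\alpha_k}{\alpha_{k+1}} \le 1+\frac{\omega}{k+k_0} \le 1+\frac{1}{k+k_0}.
\]
Before dividing by $k+k_0$ we note that $k_0\ge 1$. Indeed, $bc_0\le 1/2$ gives $k_0^{1-\omega}\ge 8/(bc_0)\ge 16$, so $k_0\ge 1$ and hence $k+k_0\ge 1$ for all $k\ge 0$.

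It therefore suffices to show $\frac{1}{k+k_0}\le b\alpha_{k+1} = \frac{bc_0}{(k+1+k_0)^{\omega}}$, which is equivalent to
\[
(k+1+k_0)^{\omega}\le bc_0\,(k+k_0).
\]
Using $k+1+k_0\le 2(k+k_0)$ and $2^\omega\le 2$, the left-hand side is at most $2(k+k_0)^{\omega}$. The required inequality then reduces to
\[
2\le bc_0\,(k+k_0)^{1-\omega}.
\]
Because $1-\omega>0$, we have $(k+k_0)^{1-\omega}\ge k_0^{1-\omega}\ge 8/(bc_0)$. Hence the right-hand side is at least $8\ge 2$, which closes the argument.

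There is no genuine obstacle. The only points needing care are the direction of Bernoulli's inequality for exponent $\omega<1$ and checking that $k_0\ge1$, so that the factor-$2$ comparison $k+1+k_0\le 2(k+k_0)$ is legitimate; both follow directly from the stated assumptions. The hypothesis $bc_0\le 1/2$ is used only to guarantee $k_0\ge 1$. The constant $8$ is far more than needed, since the argument uses only $bc_0\,k_0^{1-\omega}\ge 2$.
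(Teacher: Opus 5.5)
Your proof is correct and complete: the concavity (Bernoulli) bound $(1+1/(k+k_0))^{\omega}\le 1+\omega/(k+k_0)$, the check $k_0^{1-\omega}\ge 16$ (so $k_0\ge 1$), and the reduction to $bc_0\,(k+k_0)^{1-\omega}\ge 2$ all hold. The paper gives no proof of its own here and defers to Appendix~I of \cite{samsonov2025statistical}; your direct ratio computation is the standard elementary argument for this fact, so there is nothing to add.
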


\begin{lemma}\label{lem:P_alpha_ineq}
Let $b, c_0 > 0$ and \(\alpha_k = \frac{c_0}{(k + k_0)^{\omega}}\) with \(\omega \in (\frac{1}{2},1)\), $k_0 \geq 0$. Assume that $bc_0 \leq\frac{1}{2}$ and $k_0^{1-\omega}\geq 8/(bc_0)$. Then it holds that
\[
    \alpha_j \prod_{l=j+1}^{k} (1 - \alpha_l b) \le \alpha_k \eqsp.
\]
\end{lemma}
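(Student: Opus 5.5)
The plan is to prove the inequality $\alpha_j \prod_{l=j+1}^{k}(1-\alpha_l b)\le \alpha_k$ by induction on $k\ge j$ with $j$ fixed. The only nontrivial input is the ratio bound of \Cref{lem:alpha_delta}, which holds under exactly the same hypotheses ($bc_0\le 1/2$, $k_0^{1-\omega}\ge 8/(bc_0)$).

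\emph{Base case and degenerate range.} For $k=j$ the product is empty and equals $1$, so the claim reads $\alpha_j\le\alpha_j$. If one also wants $k<j$, the product is again empty by convention, and $\alpha_j\le\alpha_k$ holds because $\alpha_k=c_0(k+k_0)^{-\omega}$ is non-increasing.

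\emph{Inductive step.} Assume $\alpha_j\prod_{l=j+1}^{k}(1-\alpha_l b)\le \alpha_k$. Since $0<b\alpha_{k+1}\le bc_0k_0^{-\omega}\le 1/2$, the factor $1-\alpha_{k+1}b$ is positive. Multiplying the hypothesis by it gives
\[
\alpha_j\prod_{l=j+1}^{k+1}(1-\alpha_l b)\le \alpha_k(1-b\alpha_{k+1}).
\]
It therefore suffices to show $\alpha_k(1-b\alpha_{k+1})\le\alpha_{k+1}$, that is, $\alpha_k/\alpha_{k+1}\le (1-b\alpha_{k+1})^{-1}$. By \Cref{lem:alpha_delta}, $\alpha_k/\alpha_{k+1}\le 1+b\alpha_{k+1}$. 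Setting $x=b\alpha_{k+1}\in(0,1/2]$, we have $(1+x)(1-x)=1-x^2\le 1$, hence $1+x\le (1-x)^{-1}$. This closes the induction.

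No real obstacle is expected. The only point needing care is positivity of $1-b\alpha_{k+1}$, which we use when multiplying the inductive inequality; it follows from $bc_0\le 1/2$. If one wanted to avoid \Cref{lem:alpha_delta}, the ratio bound would have to be derived directly. The mean value theorem gives $\alpha_k/\alpha_{k+1}-1\le \omega/(k+k_0)$, and this is at most $b\alpha_{k+1}$ because $(k+k_0)^{1-\omega}\ge k_0^{1-\omega}\ge 8/(bc_0)$. This calculation is presumably the content of that lemma.
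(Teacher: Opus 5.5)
Your proof is correct and is essentially the standard argument. The inductive step $\alpha_k(1-b\alpha_{k+1})\le\alpha_{k+1}$, which you get from \Cref{lem:alpha_delta} and $(1+x)(1-x)\le 1$, is the unrolled form of $\alpha_j\prod_{l=j+1}^{k}(1-\alpha_l b)=\alpha_k\prod_{l=j+1}^{k}\frac{\alpha_{l-1}}{\alpha_l}(1-\alpha_l b)\le\alpha_k\prod_{l=j+1}^{k}(1+b\alpha_l)(1-b\alpha_l)\le\alpha_k$. This is the route the paper's ordering suggests, with \Cref{lem:alpha_delta} stated just before this lemma, although the paper gives no proof of its own and defers to Appendix~I of \cite{samsonov2025statistical}; the positivity check you flag holds, since $k+1+k_0\ge 1$ gives $b\alpha_{k+1}\le bc_0\le 1/2$.
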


\begin{lemma}
\label{lem:telescope}
Let $(v_j)_{j \geq -1}$ be a sequence of $D$-dimensional vectors  Then it holds that
\begin{align}
\sum_{j=0}^{t} \alpha_j \Gamma_{j+1:t} (v_j -v_{j-1}) = \alpha_tv_t - \alpha_0P_{1:t}v_{-1} + \sum_{j=0}^{t-1}\big((\alpha_j - \alpha_{j+1}) - \alpha_j\alpha_{j+1}D_{\mu}(I - \gamma\MKQ^{\pi^\star})\big)\Gamma_{j+2:t}v_j\eqsp.
\end{align}
\end{lemma}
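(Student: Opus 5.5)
The identity is a discrete summation by parts (Abel transformation), and I would prove it by pure algebra, with no probabilistic input. Throughout, write $G := D_\mu(I-\gamma\MKQ^{\pi^\star})$, so that $\Gamma_{m:t}=\prod_{j=m}^{t}(I-\alpha_j G)$.

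The key structural observation is that every factor of $\Gamma_{m:t}$ is a polynomial in the single matrix $G$. Hence all factors commute and the ordering convention in the product is irrelevant. In particular, for every $j\le t-1$,
\begin{equation}
\Gamma_{j+1:t}=(I-\alpha_{j+1}G)\,\Gamma_{j+2:t},
\end{equation}
with $\Gamma_{t+1:t}=I$ by the empty-product convention.

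Next, split the left-hand side as $\sum_{j=0}^t \alpha_j\Gamma_{j+1:t}v_j-\sum_{j=0}^t\alpha_j\Gamma_{j+1:t}v_{j-1}$. In the second sum, shift the index $j\mapsto j+1$ to obtain $\sum_{j=-1}^{t-1}\alpha_{j+1}\Gamma_{j+2:t}v_j$. Two boundary terms then separate off.
\begin{itemize}
\item The term $j=t$ of the first sum equals $\alpha_t\Gamma_{t+1:t}v_t=\alpha_t v_t$.
\item The term $j=-1$ of the shifted sum equals $\alpha_0\Gamma_{1:t}v_{-1}$, which enters with a minus sign.
\end{itemize}
The remaining indices $0\le j\le t-1$ combine into
\begin{equation}
\sum_{j=0}^{t-1}\bigl(\alpha_j\Gamma_{j+1:t}-\alpha_{j+1}\Gamma_{j+2:t}\bigr)v_j .
\end{equation}

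Substituting the factorization from the first step gives
\begin{equation}
\alpha_j\Gamma_{j+1:t}-\alpha_{j+1}\Gamma_{j+2:t}=\bigl(\alpha_j-\alpha_{j+1}-\alpha_j\alpha_{j+1}G\bigr)\Gamma_{j+2:t},
\end{equation}
which is exactly the summand in the statement.

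The only point needing care is the boundary term. The computation produces $\alpha_0\Gamma_{1:t}v_{-1}$, so the symbol $P_{1:t}$ in the statement has to be read as the matrix product $\Gamma_{1:t}$ rather than the scalar product; presumably this is a typo. For the subsequent use in \Cref{lem:xi_1_moments} this makes no difference: only $\supnorm{\Gamma_{1:t}}\le P_{1:t}$ is used when applying Minkowski's inequality, and that bound follows from the row-sum argument already given. Apart from this, there is no real obstacle; the commutativity of the factors is what makes the regrouping legitimate.
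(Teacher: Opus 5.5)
Your proof is correct and follows the paper's route. You shift the index, split off the two boundary terms, and factor $\alpha_j\Gamma_{j+1:t}-\alpha_{j+1}\Gamma_{j+2:t}=\bigl(\alpha_j-\alpha_{j+1}-\alpha_j\alpha_{j+1}D_\mu(I-\gamma\MKQ^{\pi^\star})\bigr)\Gamma_{j+2:t}$, which uses that all factors commute (the paper relies on this implicitly). You are also right that the boundary term must be $\alpha_0\Gamma_{1:t}v_{-1}$: the scalar $P_{1:t}$ in the statement is a typo, and so is the bare $\alpha_0 v_{-1}$ in the paper's own proof. With the scalar factor the identity already fails for $t=1$ unless $D_\mu(I-\gamma\MKQ^{\pi^\star})v_{-1}=\mumin(1-\gamma)v_{-1}$, and only $\supnorm{\Gamma_{1:t}}\le P_{1:t}$ is needed downstream.
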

\begin{proof}
We commence the proof by reindexing the summation:
    \begin{align}
        \sum_{j=0}^{t} \alpha_j \Gamma_{j+1:t} (v_j -v_{j-1}) &= \alpha_tv_t - \alpha_0v_{-1} + \sum_{j=0}^{t-1}(\alpha_j\Gamma_{j+1:t} - \alpha_{j+1}\Gamma_{j+2:t})v_j\eqsp.
    \end{align}
We now analyze the coefficient term:
    \begin{align}
        \alpha_j\Gamma_{j+1:t} - \alpha_{j+1}\Gamma_{j+2:t}& = \alpha_j(I -\alpha_{j+1}D_{\mu}(I - \gamma\MKQ^{\pi^\star}))\Gamma_{j+2:t} - \alpha_{j+1}\Gamma_{j+2:t}\\
        &= \big((\alpha_j - \alpha_{j+1}) - \alpha_j\alpha_{j+1}D_{\mu}(I - \gamma\MKQ^{\pi^\star})\big)\Gamma_{j+2:t}\eqsp.
    \end{align}
Substituting this expression back into the summation yields the desired result.
\end{proof}

%% file: appendix_concentration.tex
\subsection{Concentration inequality for martingales}
To establish bounds on arbitrary moments of \(\xi_t^{(0)}\), we now derive a modified version of the Burkholder-Davis-Gundy inequality tailored for the supremum norm. In contrast to the classical formulation, the consideration of the supremum norm introduces an additional logarithmic dependence on the dimension of the martingale sequence. The proof of \Cref{lem:bdg_1} closely follows the argument of \cite[Lemma~H.1]{li2023statistical}, where an analogous bound was established for the case $p=1$. 
\begin{lemma}
\label{lem:bdg_1}
Assume $\{X_j\} \subset \mathbb{R}^d$ are martingale differences adapted to the filtration 
$\{\mathcal{F}_j\}_{j \ge 0}$ with zero conditional mean $\mathbb{E}[X_j | \mathcal{F}_{j-1}] = 0$ 
and finite conditional variance $V_j = \mathbb{E}[X_j X_j^{\top} | \mathcal{F}_{j-1}]$. 
Moreover, assume $\{X_j\}_{j \ge 0}$ is uniformly bounded, i.e. there exists $\varkappa > 0$ such that $\sup_j \|X_j\|_\infty \leq \varkappa$. For any sequence of deterministic matrices $\{B_j\}_{j \ge 0} \subset \mathbb{R}^{d \times d}$ 
satisfying $\sup_j \|B_j\|_\infty \le B$, define the weighted sum as
\[
Y_T = \sum_{j=1}^T B_j X_j
\]
and let 
\(
W_T = \operatorname{diag}\,(\sum_{j=1}^T B_j V_j (B_j)^{\top})
\)
be a diagonal matrix that collects conditional quadratic variations. Then, it follows that
\begin{equation}
\mathbb{E}^{1/p}[\|Y_T\|_{\infty}^p] \leq 4{p}\log(2dT^2)\,\big(\mathbb{E}^{1/p}\big[\|\mathbf{W}_T\|_{\infty}^{p/2}\big] + B \varkappa \big) \eqsp.
\end{equation}
\end{lemma}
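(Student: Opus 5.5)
We will follow the exponential-moment route of \cite[Lemma~H.1]{li2023statistical} and turn the resulting tail bound into $L^p$ bounds. Write $Y_T=\sum_{j=1}^T B_jX_j$ and treat each coordinate $Y_{T,i}=\sum_j (B_j X_j)_i$ as a scalar martingale. Its increments $(B_jX_j)_i$ are bounded in absolute value by $B\varkappa$. Its predictable quadratic variation is exactly $(\mathbf{W}_T)_{ii}=\sum_j (B_jV_jB_j^\top)_{ii}$. Applying Freedman's inequality to $\pm Y_{T,i}$ gives, for any $x>0$ and $\sigma^2>0$,
\[
\Pb\big(|Y_{T,i}|\ge x,\ (\mathbf{W}_T)_{ii}\le \sigma^2\big)\le 2\exp\Big(-\frac{x^2}{2(\sigma^2+B\varkappa x/3)}\Big).
\]
A union bound over the $d$ coordinates then yields the same statement for $\|Y_T\|_\infty$, with $\|\mathbf{W}_T\|_\infty\le\sigma^2$ on the event and the prefactor $2d$.

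Since $\|\mathbf{W}_T\|_\infty$ is random, we cannot fix $\sigma^2$ in advance. Instead we peel over a dyadic grid. The quadratic variation is deterministically bounded by $\sum_j (B_jV_jB_j^\top)_{ii}\le T B^2\varkappa^2$. We therefore consider the levels $\sigma_k^2=2^{k}B^2\varkappa^2/T$ for $k=0,1,\dots,K$ with $K\asymp\log_2(T^2)$, together with the lowest bucket $\|\mathbf{W}_T\|_\infty\le B^2\varkappa^2/T$. On each bucket $\{\sigma_{k-1}^2<\|\mathbf{W}_T\|_\infty\le\sigma_k^2\}$ we apply the union bound, and we then take a further union bound over the at most $O(\log T)$ buckets. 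The total multiplicity is at most $2dT^2$, which is where the factor $\log(2dT^2)$ comes from. The conclusion is that, with probability at least $1-\delta$,
\[
\|Y_T\|_\infty\lesssim \sqrt{\big(\|\mathbf{W}_T\|_\infty+B^2\varkappa^2/T\big)\log(2dT^2/\delta)}+B\varkappa\log(2dT^2/\delta).
\]
Here we use that $\sigma_k^2\le 2\|\mathbf{W}_T\|_\infty+B^2\varkappa^2/T$ on each bucket. The additive $B^2\varkappa^2/T$ is then absorbed into the $B\varkappa$ term, since $\sqrt{B^2\varkappa^2/T}\le B\varkappa$.

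To pass to moments, write $L=\log(2dT^2)$ and $\delta=e^{-u}$, so that $\log(2dT^2/\delta)\le L+u\le L(1+u)$. The high-probability bound then says that the random variable
\[
R:=\Big(\|Y_T\|_\infty-C\sqrt{L\|\mathbf{W}_T\|_\infty(1+u)}\Big)_+
\]
exceeds $C B\varkappa L(1+u)$ with probability at most $e^{-u}$. Note that the random factor $\|\mathbf{W}_T\|_\infty$ sits inside the event, so the bound holds jointly and integration over $u$ is legitimate. Integrating the tail gives $\E^{1/p}[R^p]\lesssim pLB\varkappa$. The remaining piece contributes $\E^{1/p}\big[(L(1+u))^{p/2}\|\mathbf{W}_T\|_\infty^{p/2}\big]$.

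To control this remaining piece cleanly, we will instead define the peeling event with the threshold $x=\sqrt{\sigma_k^2 L(1+u)}+B\varkappa L(1+u)$ and bound
\[
\E\|Y_T\|_\infty^p=\int p x^{p-1}\Pb(\|Y_T\|_\infty>x)\,dx,
\]
with the substitution $x=s\big(\|\mathbf{W}_T\|_\infty^{1/2}+B\varkappa\big)\cdot 4pL$ made inside each bucket. The tail decays like $e^{-s}$ beyond $s\gtrsim1$, and this produces the constant $4pL$. The main obstacle is exactly this decoupling of the random normaliser $\|\mathbf{W}_T\|_\infty^{1/2}$ from the tail integral. The plan is to handle it bucketwise, using H\"older's inequality on the indicator of each bucket: $\E[\mathbf{1}_{\text{bucket }k}\,\sigma_k^p]\le 2^{p/2}\E[\|\mathbf{W}_T\|_\infty^{p/2}\mathbf{1}_{\text{bucket }k}]$. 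Summing over the buckets and using the factor $p$ instead of $\sqrt p$ (we can afford the crude $p$) then gives the constant $4$. The final bound is
\[
\E^{1/p}[\|Y_T\|_\infty^p]\le 4pL\big(\E^{1/p}[\|\mathbf{W}_T\|_\infty^{p/2}]+B\varkappa\big).
\]
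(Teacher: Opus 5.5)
Your first step is fine and matches where the paper starts: coordinatewise Freedman, a union bound over the $d$ coordinates, and dyadic peeling over the value of $\|\mathbf{W}_T\|_\infty$. This is the high-probability bound of \cite[Lemma~H.1]{li2023statistical}, which the paper's proof also invokes.

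The gap is in converting that bound into an $L^p$ bound with $\mathbb{E}^{1/p}[\|\mathbf{W}_T\|_\infty^{p/2}]$ on the right. You correctly flag this decoupling as the crux, but none of your three attempts resolves it.
\begin{itemize}
\item \emph{The variable $R$.} $R$ depends on $u$, so ``$\mathbb{P}(R>CB\varkappa L(1+u))\le e^{-u}$'' is not a tail bound for a single random variable. It cannot be integrated in $u$, and your leftover term still contains $u$.
\item \emph{Normalizing.} If you set $M=\|Y_T\|_\infty/(\sqrt{L\|\mathbf{W}_T\|_\infty}+B\varkappa L)$, then $M$ does have exponential tails. But splitting $\mathbb{E}[M^p(\sqrt{L\|\mathbf{W}_T\|_\infty}+B\varkappa L)^p]$ by H\"older leaves you with $\mathbb{E}^{1/(2p)}[\|\mathbf{W}_T\|_\infty^{p}]$, not the stated moment.
\item \emph{The bucketwise plan.} The substitution $x=s(\|\mathbf{W}_T\|_\infty^{1/2}+B\varkappa)\,4pL$ is not allowed, because $x$ is a deterministic integration variable. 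Moreover, Freedman controls only the joint probability $\mathbb{P}(\|Y_T\|_\infty>x,\ \|\mathbf{W}_T\|_\infty\le\sigma_k^2)$, not a probability conditional on bucket $k$. Integrating it over $x$ gives a contribution of order $(\sigma_k\sqrt{pL}+pLB\varkappa)^p$ with no factor $\mathbb{P}(\text{bucket }k)$. The sum over buckets is then dominated by the top level $\sigma_K^2\approx TB^2\varkappa^2$, so you only recover the worst-case bound in which $\|\mathbf{W}_T\|_\infty$ is replaced by its almost-sure upper bound. The pointwise comparison $\sigma_k^p\le 2^{p/2}\|\mathbf{W}_T\|_\infty^{p/2}$ on bucket $k$ is harmless but does not fix this.
\end{itemize}

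The missing ingredient is a truncation using the almost-sure bound $\|Y_T\|_\infty\le TB\varkappa$, which your proposal never uses. The paper proceeds as follows.
\begin{itemize}
\item It fixes a single failure probability $\delta=T^{-p}$ and sets $K=\lceil\log_2T\rceil$.
\item It defines the threshold $\theta:=\frac{2B\varkappa}{3}\log\frac{2dK}{\delta}+\sqrt{4\max\{\|\mathbf{W}_T\|_\infty,\,TB^2\varkappa^2/2^K\}\log\frac{2dK}{\delta}}$ and lets $\mathcal{H}$ be the event $\{\|Y_T\|_\infty\ge\theta\}$, so $\mathbb{P}(\mathcal{H})\le\delta$ by the peeling bound.
\item By Minkowski, $\mathbb{E}^{1/p}[\|Y_T\|_\infty^p]\le TB\varkappa\,\mathbb{P}(\mathcal{H})^{1/p}+\mathbb{E}^{1/p}[\theta^p]$.
\item The first term is at most $B\varkappa$. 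The second involves only an explicit increasing function of $\|\mathbf{W}_T\|_\infty$, so it is bounded directly by $\mathbb{E}^{1/p}[\|\mathbf{W}_T\|_\infty^{p/2}]+B\varkappa$ times logarithmic factors. There is no tail integration and no decoupling issue.
\end{itemize}
The linear factor $p$ comes from $\log(2dKT^p)\le p\log(2dT^2)$, that is, from the polynomially small $\delta$ needed to neutralize the crude bound $TB\varkappa$. It does not come from tail integration.

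Your bucketwise scheme can be repaired the same way. In bucket $k$, cut at the level $x_k$ where the Freedman probability falls below $T^{-p}/(K+1)$. Then $\mathbb{E}[\|Y_T\|_\infty^p\mathbf{1}_{\text{bucket }k}]\le x_k^p\,\mathbb{P}(\text{bucket }k)+(TB\varkappa)^pT^{-p}/(K+1)$. Summing over $k$, and using $\sigma_k^2\le 2\|\mathbf{W}_T\|_\infty+B^2\varkappa^2/T$ on bucket $k$, gives the claimed bound.
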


\begin{proof}
Define event 
\[
\mathcal{H}_K = 
\left\{
\|\mathbf{Y}_T\|_\infty 
\ge 
\frac{2B \varkappa}{3}\log\frac{2dK}{\delta}
+ 
\sqrt{
4\max\!\left\{
\|\mathbf{W}_T\|_\infty,\,
\frac{T B^2 \varkappa^2}{2^K}
\right\}
\log\frac{2dK}{\delta}
}
\right\}.
\]
As established in \cite{li2023statistical}, the probability of this event satisfies \(\mathbb{P}(\mathcal{H}_K) \leq \delta\). Now, set $\delta=\tfrac{1}{T^p}$ and $K=\lceil\log_2 T\rceil $. Then, the following holds:
\begin{align}
&\mathbb{E}^{1/p}[\|\mathbf{Y}_T\|_{\infty}^p] 
\overset{(a)}{\leq} \mathbb{E}^{1/p}[\|\mathbf{Y}_T\|_{\infty}^p \mathbf{1}\{{\mathcal{H}_K}\} 
  + \mathbb{E}^{1/p}[\|\mathbf{Y}_T\|_{\infty}^p \mathbf{1}\{\mathcal{H}_K^c\}]  \\
&\quad\overset{(b)}{\leq}
T B \varkappa \,\mathbb{P}(\mathcal{H}_K)^{1/p}
+ \mathbb{E}^{1/p}\!\bigg[\bigg(
\frac{2 B \varkappa}{3}\log\frac{2dK}{\delta}
+ \sqrt{ 4\max\!\left\{
\|\mathbf{W}_T\|_{\infty},\,
\frac{TB^2\varkappa^2}{2^K}
\right\}
\log\frac{2dK}{\delta}
}
\bigg)^p\bigg] \\
&\quad \overset{(c)}{\leq}
B \varkappa + \frac{2pB\varkappa}{3}\log(2dT^2)
+ 2\sqrt{p}\,\mathbb{E}^{1/p}\left[\left(
\max\!\left\{
\|\mathbf{W}_T\|_{\infty},\,
B^2 \varkappa^2
\right\}
\log(2dT^2)
\right)^{p/2}\right]  \\
&\quad\overset{(d)}{\leq}
4 p B \varkappa \log(2dT^2)
+ 2\sqrt{p}\log(2dT^2)\,\mathbb{E}^{1/p}\big[\|\mathbf{W}_T\|_{\infty}^{p/2}\big]\eqsp.
\end{align} 
Inequality \((a)\) follows from the Minkowski inequality; \((b)\) utilizes the bound \(\|\mathbf{Y}_T\|_{\infty} \leq T B \varkappa\) and the definition of the event \(\mathcal{H}_K\); \((c)\) is justified by the bound \(\mathbb{P}(\mathcal{H}_K) \leq \frac{1}{T^p}\), the specific choices \(\delta = \frac{1}{T^p}\) and \(K = \lceil\log_2 T\rceil\) (which implies \(\frac{T}{2^K} \leq 1\) and \(K \leq \log_2 T + 1 \leq T^2\) for \(T \geq 2\)), and the application of the elementary inequality \((a+b)^2 \leq 2(a^2 + b^2)\) combined with Minkowski inequality for \(p/2\); finally, \((d)\) follows from further simplification and the absorption of lower-order terms.

\end{proof}
However, in our analysis, it is more consistent to express bounds in terms of the moments  \(\E^{1/p}[\supnorm{X_j}^p]\) rather than in terms of the conditional covariance $\E^{1/p}[\supnorm{W_T}^{p/2}]$. \Cref{lem:bdg_2} provides a relation between these quantities.

\begin{lemma}
\label[lemma]{lem:bdg_2}
Assume $\{X_j\} \subset \mathbb{R}^d$ are martingale differences adapted to the filtration 
$\{\mathcal{F}_j\}_{j \ge 0}$ with zero conditional mean $\mathbb{E}[X_j | \mathcal{F}_{j-1}] = 0$ 
and finite conditional variance $V_j = \mathbb{E}[X_j X_j^{\top} | \mathcal{F}_{j-1}]$. 
Moreover, assume that $\{X_j\}_{j \ge 0}$ are uniformly bounded, i.e. there exists $\varkappa > 0$ such that $\sup_j \|X_j\|_\infty \leq \varkappa$. Then the following holds:
\begin{equation}
\mathbb{E}^{1/p}[\|\sum_{j=1}^{T}X_j\|_{\infty}^p] \leq 4p\log(2dT^2)\,\Big(\varkappa + \Big(\sum_{j=1}^t \E^{2/p}[\|X_j\|_{\infty}^p]\Big)^{1/2}\Big) \eqsp.
\end{equation}
\end{lemma}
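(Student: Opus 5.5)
The plan is to obtain \Cref{lem:bdg_2} from \Cref{lem:bdg_1} with $B_j = I$. Then $\sup_j\|B_j\|_\infty \le B = 1$, $Y_T = \sum_{j=1}^T X_j$, and $\mathbf{W}_T = \operatorname{diag}(\sum_{j=1}^T V_j)$. \Cref{lem:bdg_1} then gives
\[
\E^{1/p}\Big[\Big\|\sum_{j=1}^T X_j\Big\|_\infty^p\Big] \le 4p\log(2dT^2)\Big(\E^{1/p}\big[\|\mathbf{W}_T\|_\infty^{p/2}\big] + \varkappa\Big).
\]
So it only remains to bound $\E^{1/p}[\|\mathbf{W}_T\|_\infty^{p/2}]$ by $(\sum_{j} \E^{2/p}[\|X_j\|_\infty^p])^{1/2}$, with the sum read as running over $j=1,\dots,T$.

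First, $\mathbf{W}_T$ is diagonal with nonnegative entries $\sum_j (V_j)_{ii} = \sum_j \E[(X_j)_i^2\mid\mathcal F_{j-1}]$. Each entry is bounded by $\sum_j \E[\|X_j\|_\infty^2\mid\mathcal F_{j-1}]$, so
\[
\|\mathbf{W}_T\|_\infty \le \sum_{j=1}^T \E[\|X_j\|_\infty^2 \mid \mathcal F_{j-1}].
\]
Therefore $\E^{1/p}[\|\mathbf{W}_T\|_\infty^{p/2}] = \E^{1/p}[\|\mathbf{W}_T\|_\infty^{p/2}]^{(2/p)\cdot(p/2)}$ is bounded by $\big(\E^{2/p}[(\sum_j U_j)^{p/2}]\big)^{1/2}$, where $U_j := \E[\|X_j\|_\infty^2\mid\mathcal F_{j-1}]$.

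For $p\ge 2$, the map $Y\mapsto \E^{2/p}[|Y|^{p/2}]$ is the $L^{p/2}$ norm, so Minkowski's inequality gives $\E^{2/p}[(\sum_j U_j)^{p/2}] \le \sum_j \E^{2/p}[U_j^{p/2}]$. By conditional Jensen with the convex map $x\mapsto x^{p/2}$, we have $U_j^{p/2} \le \E[\|X_j\|_\infty^p\mid\mathcal F_{j-1}]$, hence $\E^{2/p}[U_j^{p/2}] \le \E^{2/p}[\|X_j\|_\infty^p]$. Taking square roots gives the claimed bound.

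The only real obstacle is the range $1\le p<2$, where $L^{p/2}$ is not normed and conditional Jensen goes the wrong way. In that range I would use monotonicity of $L^q$ norms: the left-hand side is at most $\E^{1/2}[\|\sum_j X_j\|_\infty^2]$. Applying the $p=2$ case to this gives the bound $8\log(2dT^2)(\varkappa + (\sum_j\E[\|X_j\|_\infty^2])^{1/2})$. Then $\E[\|X_j\|_\infty^2]\le \varkappa^{2-p}\E[\|X_j\|_\infty^p]$, so this can be absorbed into $\varkappa$ or into the stated form up to an absolute constant, adjusting the constant $4$ if necessary. For the applications in the paper, $p\ge 2$ is what is used, so I would state that case cleanly and treat $p<2$ by this remark.
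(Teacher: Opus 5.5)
For $p\ge 2$ your argument is the paper's proof, step for step: \Cref{lem:bdg_1} with $B_j=I$, the bound $\|\mathbf W_T\|_\infty\le\sum_j\E[\|X_j\|_\infty^2\mid\mathcal F_{j-1}]$, Minkowski in $L^{p/2}$, and conditional Jensen. The paper also restricts to $p\ge 2$ at exactly this point, and reads the upper limit $t$ as $T$, as you do.

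Your remark on $1\le p<2$, however, is wrong, and no change of the constant $4$ fixes it. The estimate $\E[\|X_j\|_\infty^2]\le\varkappa^{2-p}\E[\|X_j\|_\infty^p]$ cannot be turned into the stated form. By Lyapunov, $\E^{2/p}[\|X_j\|_\infty^p]\le\E[\|X_j\|_\infty^2]\le\varkappa^{2-p}\E[\|X_j\|_\infty^p]$, so the quantity you need on the right is the smaller one. In fact the inequality itself fails at $p=1$. Take $d=1$ and i.i.d.\ $X_j=\varkappa\epsilon_j b_j$, with $\epsilon_j$ Rademacher and $b_j\sim\mathrm{Bernoulli}(T^{-1/2})$ independent. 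Then $\sum_{j=1}^T\E^2|X_j|=\varkappa^2$, so the right-hand side equals $8\varkappa\log(2T^2)$. On the other hand, $\E[S_T^2]=\varkappa^2T^{1/2}$ and $\E[S_T^4]\le 4\varkappa^4T$. Hölder then gives $\E|S_T|\ge\E[S_T^2]^{3/2}/\E[S_T^4]^{1/2}\ge\varkappa T^{1/4}/2$. This exceeds $C\varkappa\log(2T^2)$ for any fixed $C$ once $T$ is large. Your own bound $\varkappa^{1-p/2}(\sum_j\E[\|X_j\|_\infty^p])^{1/2}$ equals $\varkappa T^{1/4}$ here, so it cannot be absorbed into $\varkappa$ either.

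The lemma should therefore be stated only for $p\ge 2$. For $p<2$, the only valid version is the one with $(\sum_j\E[\|X_j\|_\infty^2])^{1/2}$, which your $L^2$-monotonicity step does give. This does not affect the paper's uses of the lemma. There $\|X_j\|_\infty\le c_j$ almost surely with deterministic $c_j$, and both versions reduce to $(\sum_j c_j^2)^{1/2}$.
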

\begin{proof}
We first apply \Cref{lem:bdg_1} with \(B_i := I\)
\begin{equation}\label{eq:bdg_step}
    \mathbb{E}^{1/p}[\supnorm{Y_T}^p] \leq 4p\log(2dT^2)\,\big(\mathbb{E}^{1/p}\big[\supnorm{\mathbf{W}_T}^{p/2}\big] + \varkappa\big) \eqsp,
\end{equation}
where \( \mathbf{W}_T \) defined in \Cref{lem:bdg_1}.
We proceed to simplify the right-hand side through the following chain of inequalities
\begin{align}\label{eq:w_t_bound}
    \supnorm{W_t} &= \supnorm{\operatorname{diag}(\sum_{j=1}^t \E [X_j X_j^\top|\mathcal{F}_{j-1}])} 
    = \supnorm{\sum_{j=1}^t \operatorname{diag}(\E[X_j X_j^\top |\mathcal{F}_{j-1}])} \\
    &\leq \sum_{j=1}^t \supnorm{\operatorname{diag}(\E[X_j X_j^\top |\mathcal{F}_{j-1}])} = \sum_{j=1}^{t}\E[\supnorm{X_j}^2 \mid\mathcal{F}_{j-1}]\eqsp.
\end{align}
Next establish bounds for the moments with \(p\geq 2\) 
\begin{align}\label{eq:w_t_2}
    \E^{2/p}[\supnorm{W_t}^{p/2}] &\overset{(a)}{\leq} \E^{2/p} \bigg[\big(\sum_{j=1}^{t}\E\big[\supnorm{X_j}^2 \mid\mathcal{F}_{j-1}]\big]\big)^{p/2}\bigg] \overset{(b)}{\leq} \sum_{j=1}^t \E^{2/p}\bigg[\E^{p/2}\big[\supnorm{X_j}^2 \mid\mathcal{F}_{j-1}]\big]\bigg] \\
    &\overset{(c)}{\leq} \sum_{j=1}^{t}\E^{2/p} [\E\big[\supnorm{X_j}^p \mid\mathcal{F}_{j-1}]] \overset{(d)}{=} \sum_{j=1}^t \E^{2/p}[\supnorm{X_j}^p] \eqsp,
\end{align}
where in \((a)\) we use \eqref{eq:w_t_bound}, \((b)\) follows from the Minkowski inequality, \((c)\) from Jensen's inequality for conditional expectations, and \((d)\) utilizes the law of total expectation. The resulting bound is 
\begin{equation}\label{eq:w_t_final}
    \E^{1/p}[\|W_t\|_{\infty}^{p/2}] \leq \Big(\sum_{j=1}^t \E^{2/p}[\|X_j\|_{\infty}^p]\Big)^{1/2}\eqsp.
\end{equation}
Combining \eqref{eq:w_t_final} with \eqref{eq:bdg_step} yields the statement of the lemma.
\end{proof}